\documentclass[12pt]{article}
\usepackage{amsmath}
\usepackage{graphicx}
\usepackage[hidelinks]{hyperref}
\usepackage{natbib}
\usepackage{amsthm}
\usepackage{amssymb}
\usepackage{algorithm}
\usepackage{algpseudocode}
\usepackage{booktabs}
\usepackage{enumitem}
\usepackage{bm}
\usepackage{multirow}
\usepackage[margin=1in]{geometry}
\usepackage{tikz}
\usepackage{pgfplots}
\pgfplotsset{compat=1.17}
\usetikzlibrary{arrows.meta, positioning, calc, patterns, decorations.pathreplacing, backgrounds, fit, shapes.geometric, matrix}

\newcommand{\R}{\mathbb{R}}
\newcommand{\Z}{\mathbb{Z}}
\newcommand{\E}{\mathbb{E}}
\newcommand{\norm}[1]{\left\lVert #1 \right\rVert}
\newcommand{\argsort}{\mathrm{argsort}}
\newcommand{\orderdesc}{\mathrm{argsort}_{\downarrow}}
\newcommand{\rankop}{\mathrm{rank}}

\newtheorem{theorem}{Theorem}
\newtheorem{lemma}[theorem]{Lemma}
\newtheorem{corollary}[theorem]{Corollary}
\newtheorem{proposition}[theorem]{Proposition}
\newtheorem{definition}[theorem]{Definition}
\newtheorem{remark}{Remark}

\begin{document}

\title{ArrowFlow: Hierarchical Machine Learning\\in the Space of Permutations}
\author{Ozgur Yilmaz\footnote{ozguryilmaz@atu.edu.tr; \url{https://scholar.google.com/citations?user=AIJWYCAAAAAJ}} \\
\small Department of Artificial Intelligence\\
\small Adana Science and Technology University, Adana, Turkey}
\date{}
\maketitle

\begin{abstract}
We introduce \textbf{ArrowFlow}, a machine learning architecture whose core layers operate entirely in the space of permutations (real-valued inputs are first encoded into permutations by a preprocessing pipeline). Its computational units are \emph{ranking filters}---learned orderings that compare inputs via Spearman's footrule distance and update through permutation-matrix accumulation, a non-gradient rule rooted in displacement evidence. Layers compose hierarchically: each layer's output ranking becomes the next layer's input, enabling deep ordinal representation learning without any floating-point parameters in the core computation.

An encoding pipeline (polynomial expansion, random projection, argsort) bridges real-valued data and the permutation world, while a multi-view ensemble---independent networks trained on diverse projections, combined by majority vote---compensates for information lost in the ordinal encoding. We connect the architecture to Arrow's impossibility theorem: the rank aggregation performed by its learning rule provably violates the independence-of-irrelevant-alternatives axiom, and we argue that such social-choice non-neutrality (context dependence, specialization, symmetry breaking) acts as an inductive bias for nonlinearity, sparsity, and stability.

Experiments span UCI tabular benchmarks, MNIST, gene expression cancer classification (TCGA), and preference data, all against GridSearchCV-tuned baselines. ArrowFlow matches the best tuned baselines on Iris (2.7\% vs.\ 3.3\%; a sub-sample margin, and its configuration was selected on the test split---a proof-of-concept rather than a fair head-to-head, see protocol) and is competitive on most UCI datasets. Against a nearest-neighbor classifier on the \emph{same} encoded permutations, the learned filters cut error by 2.5--12.5$\times$, evidence that the learning rule contributes beyond the encoding. On gene expression data, rank-based methods are \emph{exactly invariant} to strictly monotone within-sample transformations (stable at 2.5\% where SVM on raw values collapses to 82.6\%)---a property of the ordinal representation that ArrowFlow embodies architecturally rather than a unique advantage of the method. On MNIST via PCA, ArrowFlow reaches 9.1\% error using only ordinal comparisons, with strong scaling as network width and depth increase. A single parameter---polynomial degree---acts as a master switch: degree 1 yields noise robustness (8--28\% less degradation), rank-only (magnitude-hiding) robustness (+0.5pp cost), and missing-feature resilience; higher degrees trade these for improved clean accuracy.

ArrowFlow is not designed to surpass gradient-based methods. It is an existence proof that competitive classification is possible in a fundamentally different computational paradigm---one that elevates ordinal structure to a first-class citizen, with natural alignment to integer-only and neuromorphic hardware.

\medskip
\noindent\textbf{Code:} \url{https://github.com/yilmazozgur/arrowflow}
\end{abstract}

\section{Introduction}
\label{sec:intro}

The dominant paradigm in modern machine learning represents data as real-valued tensors and learns transformations through gradient descent on differentiable, continuous parameter spaces. While extraordinarily successful, this paradigm has fundamental limitations when the underlying structure of data is \emph{ordinal} or \emph{relational} rather than metric. Consider classifying sequences by their ordering pattern: the sequence $[1, 2, 3, 4, 5]$ is ``ascending'' and $[5, 4, 3, 2, 1]$ is ``descending.'' The essential information is not the magnitude of elements but their \emph{relative ordering}---a combinatorial, not metric, property.

ArrowFlow is built on the premise that \textbf{the fundamental data structure in a neural network can be a sorted list, and the fundamental operation can be a rank distance between two such lists}. In this framework:

\begin{itemize}[leftmargin=2em]
  \item \textbf{Data points} are ordered sequences of tokens (permutations of a vocabulary).
  \item \textbf{Learned filters} are also permutations of the same vocabulary.
  \item \textbf{Similarity} is measured by Spearman's footrule distance---the sum of absolute positional displacements between two orderings \citep{diaconis1977footrule}.
  \item \textbf{Learning} proceeds by re-ordering filter elements based on accumulated displacement evidence, rather than by adjusting floating-point weights.
\end{itemize}

This architecture draws from a productive tension in social choice theory. Arrow's impossibility theorem \citep{arrow1951social} demonstrates that any aggregator of individual rankings must violate at least one fairness axiom (Pareto efficiency, independence of irrelevant alternatives, non-dictatorship) on some preference profiles. In learning, we \emph{exploit} these violations as inductive biases: context dependence (IIA violations) yields nonlinearity; specialization (ND violations) yields sparsity and winner-take-all dynamics; partial monotonicity (Pareto-like behavior) stabilizes training. The name ``ArrowFlow'' reflects both this connection to Arrow's theorem and the flow of ordering information through the network layers.

\paragraph{What ArrowFlow is---and is not.}
ArrowFlow is not designed to be the most accurate or fastest classifier. It is a \emph{fundamentally different computational paradigm}: a hierarchical, filter-based machine learning system whose core layers operate entirely in the space of permutations (with a real-valued encoder only at the input). The contribution is the paradigm itself---showing that competitive classification is achievable without any floating-point parameters in the core computation, and that ordinal architectures possess structural advantages (noise robustness, rank-only/magnitude-hiding operation, graceful handling of missing data) that are not automatic in conventional methods unless rank transformations or robust preprocessing are added.

\paragraph{Contributions.}
\begin{enumerate}[leftmargin=2em]
  \item A formal definition of the \emph{ranking layer} with displacement motions, permutation-matrix accumulation, and hierarchical rank aggregation (Section~\ref{sec:ranking-layer}).
  \item The multi-view ensemble architecture with diverse projection strategies that addresses the argsort encoding bottleneck and achieves competitive accuracy (Sections~\ref{sec:encoding}--\ref{sec:ensemble}).
  \item An Arrow-theoretic interpretation of layer expressivity, connecting fairness-axiom violations to inductive biases for nonlinearity, sparsity, and stability (Section~\ref{sec:arrow}).
  \item Formal theoretical analysis (Section~\ref{sec:theory}):
  \begin{itemize}[leftmargin=1.4em,topsep=2pt]
    \item an argsort stability theorem with Gaussian tail bound, and a projected-noise corollary, explaining the noise robustness;
    \item an information-capacity theorem ($\log_2(e!)$ bits for an $e$-dimensional score vector, via permutation cones);
    \item a polynomial noise-amplification (local sensitivity) result for the accuracy--robustness trade-off;
    \item a consistency proof for the accumulation rule---a Borda/mean-position estimator---and an ensemble error bound; and
    \item a social-choice subsection characterizing filter learning as consistent rank aggregation (with exponential consistency), proving two-sided adversarial-perturbation bounds from the footrule metric, and showing that the learning rule's positional aggregation provably violates Arrow's independence axiom.
  \end{itemize}
  \item Comprehensive experiments across UCI tabular benchmarks, MNIST (via PCA, isolating the sort-layer classifier), gene expression cancer classification (TCGA), and the Sushi preference dataset. These reveal a unique robustness profile governed by the polynomial degree parameter, exact invariance to within-sample strictly monotone distribution shifts, and architecture scaling that has not plateaued at 1024 filters (Sections~\ref{sec:experiments}--\ref{sec:utility}).
  \item A quantitative energy analysis showing that ArrowFlow's integer-only arithmetic is 15$\times$ cheaper \emph{per layer in arithmetic energy} than equivalent FP32 MLP layers (narrowing to $\approx$1.05$\times$ at full seven-view inference, with larger gains expected from reduced memory traffic and multiplier-free hardware), with natural alignment to neuromorphic hardware and sorting-network implementations (Section~\ref{sec:energy}).
\end{enumerate}

\paragraph{Note.} This is an extended technical report presenting the full theory, experiments, and discussion for ArrowFlow.

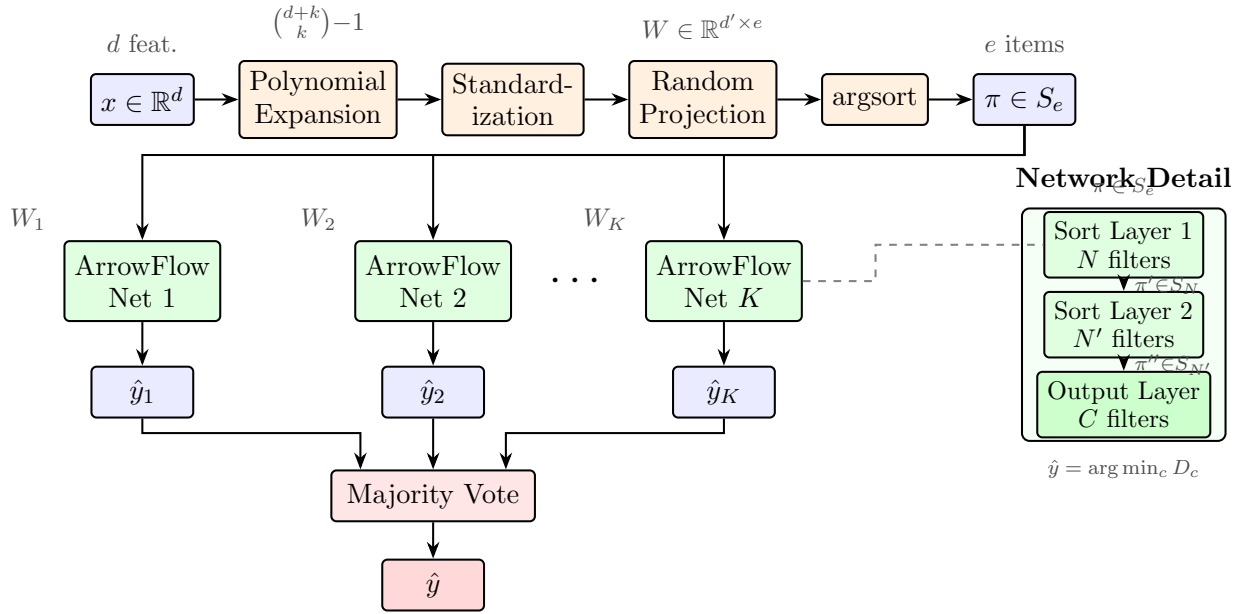
\begin{figure}[t]
\centering
\resizebox{\textwidth}{!}{%
\begin{tikzpicture}[
    >=Stealth,
    box/.style={draw, rounded corners=2pt, minimum height=0.7cm, minimum width=1.4cm,
                font=\small, align=center, thick},
    databox/.style={box, fill=blue!8},
    procbox/.style={box, fill=orange!12},
    sortbox/.style={box, fill=green!12},
    votebox/.style={box, fill=red!10},
    arr/.style={->, thick},
    lbl/.style={font=\footnotesize, text=black!70},
]
\node[databox] (input) at (0,0) {$x \in \R^d$};
\node[procbox, right=0.6 of input] (poly) {Polynomial\\[-1pt]Expansion};
\node[procbox, right=0.6 of poly] (std) {Standard-\\[-1pt]ization};
\node[procbox, right=0.6 of std] (proj) {Random\\[-1pt]Projection};
\node[procbox, right=0.6 of proj] (args) {$\argsort$};
\node[databox, right=0.6 of args] (perm) {$\pi \in S_e$};
\draw[arr] (input) -- (poly);
\draw[arr] (poly) -- (std);
\draw[arr] (std) -- (proj);
\draw[arr] (proj) -- (args);
\draw[arr] (args) -- (perm);
\node[lbl, above=0.15 of input] {$d$ feat.};
\node[lbl, above=0.15 of poly] {$\binom{d+k}{k}{-}1$};
\node[lbl, above=0.15 of proj] {$W \in \R^{d'\times e}$};
\node[lbl, above=0.15 of perm] {$e$ items};
\node[sortbox, minimum height=1.1cm] (net1) at (0, -2.5) {ArrowFlow\\[-1pt]Net 1};
\node[sortbox, minimum height=1.1cm] (net2) at (4, -2.5) {ArrowFlow\\[-1pt]Net 2};
\node[sortbox, minimum height=1.1cm] (netK) at (8, -2.5) {ArrowFlow\\[-1pt]Net $K$};
\node at ($(net2)!0.5!(netK)$) {\Large$\cdots$};
\node[lbl, above left=0.0 and 0.1 of net1] {$W_1$};
\node[lbl, above left=0.0 and 0.1 of net2] {$W_2$};
\node[lbl, above left=0.0 and 0.1 of netK] {$W_K$};
\draw[arr] (perm.south) -- ++(0,-0.4) -| (net1.north);
\draw[arr] (perm.south) -- ++(0,-0.4) -| (net2.north);
\draw[arr] (perm.south) -- ++(0,-0.4) -| (netK.north);
\node[databox, below=0.6 of net1] (pred1) {$\hat{y}_1$};
\node[databox, below=0.6 of net2] (pred2) {$\hat{y}_2$};
\node[databox, below=0.6 of netK] (predK) {$\hat{y}_K$};
\draw[arr] (net1) -- (pred1);
\draw[arr] (net2) -- (pred2);
\draw[arr] (netK) -- (predK);
\node[votebox, below=0.7 of pred2, minimum width=2.8cm] (vote) {Majority Vote};
\draw[arr] (pred1.south) -- ++(0,-0.2) -| (vote.160);
\draw[arr] (pred2) -- (vote);
\draw[arr] (predK.south) -- ++(0,-0.2) -| (vote.20);
\node[databox, below=0.5 of vote, fill=red!15] (final) {$\hat{y}$};
\draw[arr] (vote) -- (final);
\begin{scope}[shift={(13.5,-1.5)}]
\node[font=\small\bfseries, anchor=south] at (0,0.5) {Network Detail};
\draw[thick, rounded corners=3pt, fill=green!4] (-1.4,-3.2) rectangle (1.4,0);
\node[sortbox, minimum width=2.0cm, font=\footnotesize] (L1) at (0,-0.5) {Sort Layer 1\\[-1pt]$N$ filters};
\node[sortbox, minimum width=2.0cm, font=\footnotesize] (L2) at (0,-1.6) {Sort Layer 2\\[-1pt]$N'$ filters};
\node[sortbox, minimum width=2.0cm, fill=green!20, font=\footnotesize] (Lout) at (0,-2.7) {Output Layer\\[-1pt]$C$ filters};
\draw[arr] (L1) -- node[lbl, right, font=\scriptsize] {$\pi'{\in}S_N$} (L2);
\draw[arr] (L2) -- node[lbl, right, font=\scriptsize] {$\pi''{\in}S_{N'}$} (Lout);
\node[lbl, above=0.08 of L1, font=\scriptsize] {$\pi \in S_e$};
\node[lbl, below=0.15 of Lout, font=\scriptsize] {$\hat{y} = \arg\min_c D_c$};
\end{scope}
\draw[dashed, gray, thick] (netK.east) -- ++(1.0,0) |- (L1.west);
\end{tikzpicture}%
}
\caption{\textbf{ArrowFlow pipeline.} Real-valued input $x$ is encoded via polynomial expansion, standardization, random projection, and argsort. $K$ independent networks receive different projections $W_k$. Each network (detail, right) is a stack of sort layers computing Spearman's footrule distance to learned filters. Predictions are combined by majority vote.}
\label{fig:pipeline}
\end{figure}

\section{Related Work}
\label{sec:related}

ArrowFlow sits at the intersection of several research traditions: neuroscience-inspired rank-order coding, differentiable sorting, permutation distance metrics, non-gradient learning, social choice theory, random projections, and ensemble methods. We survey each in turn, positioning ArrowFlow's contributions relative to prior work.

\subsection{Rank Order Coding}
Rank Order Coding (ROC) encodes stimulus strength via the \emph{order} of neural spikes rather than absolute firing rates, highlighting order as an information carrier \citep{thorpe1998rapid, gautrais1998rate}. This biological principle---that relative ordering is more robust and efficient than precise magnitude encoding---motivates ArrowFlow's core design. The approach has shown promise in image recognition, where relative pixel intensities can be more informative than precise values under varying conditions \citep{jost2002rank, bonilla2022analyzing}. ArrowFlow generalizes ROC from a fixed encoding scheme to a \emph{learnable hierarchical transformation} of orderings across layers.

\subsection{Differentiable Sorting and Sinkhorn Networks}
Recent work has made sorting operations differentiable for end-to-end learning. NeuralSort \citep{grover2019neuralsort} and SoftSort \citep{prillo2020softsort} provide continuous relaxations of argsort. Blondel et al.\ \citep{blondel2020fast} introduced fast differentiable sorting via optimal transport. Gumbel-Sinkhorn networks \citep{mena2018gumbelsinkhorn} learn latent permutations through doubly stochastic matrices, leveraging the classical Sinkhorn normalization \citep{sinkhorn1967diagonal}. These approaches treat order as a soft surrogate for continuous scores---they relax permutations everywhere to enable gradient flow. ArrowFlow takes the opposite approach: it \textbf{operates directly on discrete permutations} and replaces gradient descent with permutation-matrix-based updates. Differentiability is needed only at the boundaries between rank and tensor layers, not within the ranking computation itself.

\subsection{Permutation Distance Metrics}
Kendall \citep{kendall1938} introduced the tau rank correlation coefficient, counting pairwise disagreements. Diaconis and Graham \citep{diaconis1977footrule} established Spearman's footrule as a closely related metric, proving that footrule and Kendall tau distance are within a constant factor. ArrowFlow uses Spearman's footrule ($\ell_1$ norm of positional displacements) as its core distance. Fagin et al.\ \citep{fagin2003topk} extended these metrics to partial rankings, relevant to ArrowFlow's handling of receptive fields and variable-length inputs.

\subsection{Non-Gradient Learning}
ArrowFlow's sort layers do not use gradient descent. Its closest relative is \emph{Learning Vector Quantization} (LVQ) and Kohonen's Self-Organizing Maps \citep{kohonen1990self}, which learn by competitive prototype updates---the nearest prototype is nudged toward (LVQ also: away from) each labeled input. ArrowFlow's output layer is precisely this attractive, error-driven prototype rule, and its hidden layers add the LVQ-style repulsive direction (Section~\ref{sec:supervised-training}), with two differences from classical LVQ: the prototypes are \emph{permutations} updated by re-ordering rather than real-valued vectors updated by subtraction, and learning composes hierarchically across stacked layers. This competitive dynamic also relates to Grossberg's Adaptive Resonance Theory \citep{grossberg1987competitive}, where winner-take-all prototypes are updated by matching and accumulation. Pointer Networks \citep{vinyals2015pointernetworks} and the ``Order Matters'' framework \citep{vinyals2015order} demonstrated that input and output ordering significantly impacts sequence-to-sequence performance, motivating architectures that treat ordering as a learnable structure. More recently, Hinton's Forward-Forward algorithm \citep{hinton2022forward} and Target Propagation \citep{lee2015difference} have revived interest in alternatives to backpropagation. ArrowFlow contributes to this line by demonstrating that competitive, displacement-based learning can work in discrete permutation spaces.

\subsection{Social Choice Theory and Rank Aggregation}
Arrow's impossibility theorem \citep{arrow1951social} demonstrates unavoidable trade-offs in rank aggregation---no mechanism aggregating three or more alternatives can simultaneously satisfy Pareto efficiency, independence of irrelevant alternatives, and non-dictatorship. Rank aggregation has been extensively studied in information retrieval, where Dwork et al.\ \citep{dwork2001rank} proposed combining ranked results from multiple search engines using median-based methods on Kendall tau distance. The Mallows model \citep{mallows1957nonnull} provides a probabilistic framework for permutations centered on a modal ranking, parameterized by a dispersion parameter. Conitzer and Sandholm \citep{conitzer2005voting} showed that common voting rules (Borda, Kemeny) correspond to maximum likelihood estimation under different noise models over permutations, providing a statistical-learning interpretation of the social-choice concepts that ArrowFlow operationalizes. Rank-Ordered Autoencoders \citep{bertens2016rank} use rank-preserving embeddings and sparsity constraints. ArrowFlow generalizes from \emph{preserving} rank to \emph{transforming} rank across layers, and uniquely reinterprets Arrow's fairness-axiom violations as inductive biases for nonlinearity, sparsity, and stability.

\subsection{Random Projections and Dimensionality Reduction}
ArrowFlow's encoding pipeline relies on random projection to map data into a space where argsort produces informative permutations. The Johnson-Lindenstrauss lemma \citep{johnson1984extensions} guarantees that random projections approximately preserve pairwise distances. Achlioptas \citep{achlioptas2003database} showed that even very sparse random matrices suffice for this purpose. Arriaga and Vempala \citep{arriaga2006algorithmic} proved that random projections preserve margin-based learnability, directly justifying ArrowFlow's pipeline of random projection followed by classification. Cannings and Samworth \citep{cannings2017random} proposed random-projection ensemble classification, combining classifiers trained on different random projections via data-driven voting---ArrowFlow's multi-view ensemble is a permutation-space instance of exactly this framework.

\subsection{Ensemble Methods and Multi-View Learning}
ArrowFlow's multi-view ensemble draws on classical ensemble theory. Breiman's bagging \citep{breiman1996bagging} established variance reduction through bootstrap aggregation. Schapire \citep{schapire1990strength} proved that any weak learner can be boosted to arbitrary accuracy, providing theoretical grounding for why ArrowFlow's individual views, even if weak, combine into a strong ensemble. Dietterich \citep{dietterich2000ensemble} showed that ensemble diversity is as important as individual accuracy. Condorcet's jury theorem provides the theoretical foundation: if each voter has error rate $p < 0.5$ and errors are independent, the majority-vote error decreases exponentially with the number of voters. ArrowFlow maximizes the independence condition by using diverse projection strategies---each view sees a different ordinal representation of the same data. This connects to the multi-view learning framework of Xu et al.\ \citep{xu2013survey}, where complementary ``views'' are fused for improved prediction.

\subsection{Robustness Through Discretization}
A distinctive finding of ArrowFlow is that the argsort discretization provides measurable noise robustness when polynomial expansion is absent. This connects to a broader literature on discretization for robustness. Quantization-aware training \citep{jacob2018quantization} shows that reduced-precision representations can improve robustness. The VQ-VAE \citep{oord2017neural} demonstrated that discretizing latent representations via vector quantization improves generalization and avoids posterior collapse---ArrowFlow's argsort is an extreme form of discretization that converts continuous features to combinatorial objects. More directly, the rank transform \citep{conover1981rank} has long been used in nonparametric statistics precisely because it is immune to monotone transformations and outliers. ArrowFlow's encoding pipeline makes this classical statistical insight into an architectural principle.

\subsection{Learning to Rank}
ArrowFlow also connects to the learning-to-rank literature in information retrieval, where the goal is to learn a function that orders items by relevance. RankNet \citep{burges2005learning} uses gradient descent on pairwise ranking losses; LambdaMART \citep{burges2010ranknet} combines boosted trees with ranking-specific objectives. These methods learn continuous scoring functions whose \emph{output} is a ranking. ArrowFlow inverts this relationship: both the \emph{input} and the \emph{learned parameters} are rankings, and learning operates by re-ordering filters rather than adjusting scores. This makes ArrowFlow more akin to a nearest-neighbor method in permutation metric space, enhanced with hierarchical learned embeddings.

\section{The Ranking Layer}
\label{sec:ranking-layer}

The ranking layer is the fundamental computational unit of ArrowFlow. It operates in the space of orderings rather than numeric activations. Each layer transforms an input ranking into a new ranking representation by comparing it to a bank of learned \emph{ranking filters}. We use the terms \emph{ranking layer} and \emph{sort layer} interchangeably throughout.

\subsection{Notation}
Let $\mathcal{V} = \{1, \dots, V\}$ denote a vocabulary of tokens. A ranking is a permutation $\pi \in S_V$. We use two complementary maps and keep them distinct throughout:
\[
\operatorname{item}_\pi(p) \;=\; \text{the token at position } p, \qquad
\operatorname{pos}_\pi(v) \;=\; \text{the position of token } v.
\]
Worked examples list tokens by position---e.g.\ $\pi_x = [C,A,E,B,D]$ means $\operatorname{item}_{\pi_x}(1)=C$---while the displacement formulas below use $\operatorname{pos}$. We write $\argsort(y)$ for the permutation that lists the entries of $y \in \R^V$ in \emph{increasing} order (matching NumPy's convention and the implementation); thus when the layer ranks filters by distance, the smallest distance---the closest filter---appears first. Where a decreasing order is more natural (the permutation cones of Theorem~\ref{thm:capacity}) we write $\orderdesc$; the two differ only by reversal. We abbreviate $\rankop(r, v) = \operatorname{pos}_r(v)$ for the position of token $v$ in a filter $r$'s ordering.

\subsection{Ranking Filters and Displacement Motions}
A ranking layer $F^{\mathrm{rank}}$ contains $N$ ranking filters $\{r_j\}_{j=1}^N$, each storing a learned local ordering over the vocabulary. Given an input ranking $\pi_x$, the layer computes a \emph{motion} vector (a signed positional displacement, not a string-edit operation) $m_j \in \Z^\ell$ for each filter:
\begin{equation}
m_j[p] = \rankop(r_j, \pi_x[p]) - p, \quad p = 1, \dots, \ell.
\label{eq:motion}
\end{equation}
The motion $m_j[p]$ measures how many positions element $\pi_x[p]$ must move to reach its position in filter $r_j$. This is a signed displacement---positive means the element should move toward the end, negative toward the front.

A distance is computed via the $\ell_q$ norm:
\begin{equation}
D_j = \norm{m_j}_q, \quad q \in \{1, 2\} \ \text{(with an optional $\ell_0$-style count, $q=0$, which is a pseudonorm)}.
\label{eq:distance}
\end{equation}
With $q = 1$, this is Spearman's footrule distance \citep{diaconis1977footrule}: the total positional displacement between the input and the filter. All experiments in this paper use $q=1$.

\paragraph{Missing data and receptive fields.}
If $\pi_x$ omits items present in $r_j$, the experiments in this paper assign each missing item the central position $mV$ (with $m=0.5$) and down-weight its update contribution by $w=0.5$ (Section~\ref{sec:supervised-training}); an alternative variant instead displaces missing items to the end of the list and charges a deletion penalty. We use the central-imputation rule throughout unless stated otherwise. If $r_j$ has a restricted receptive field (subset of items), the average motion is subtracted to promote translation invariance---analogous to zero-centering in convolutional layers.

\subsection{Forward Pass: From Distances to Output Ranking}
The layer aggregates $N$ filter responses into a new ranking. The simplest and most effective aggregation is:
\begin{equation}
\pi' = \argsort(D_1, D_2, \dots, D_N),
\end{equation}
which ranks filters from closest (smallest distance, most similar) to farthest. The output $\pi'$ is a permutation of filter indices---a new sorted list in a vocabulary of size $N$---and becomes the input to the next layer.

This is closer to a prototype (radial-basis) layer than to a convolution: each filter is a learned permutation prototype, the layer scores the input by rank distance to every prototype, and the resulting ranking of prototypes defines the next representation. (We reserve the convolution analogy for the receptive-field variant, where a filter sees only a subset of items.)

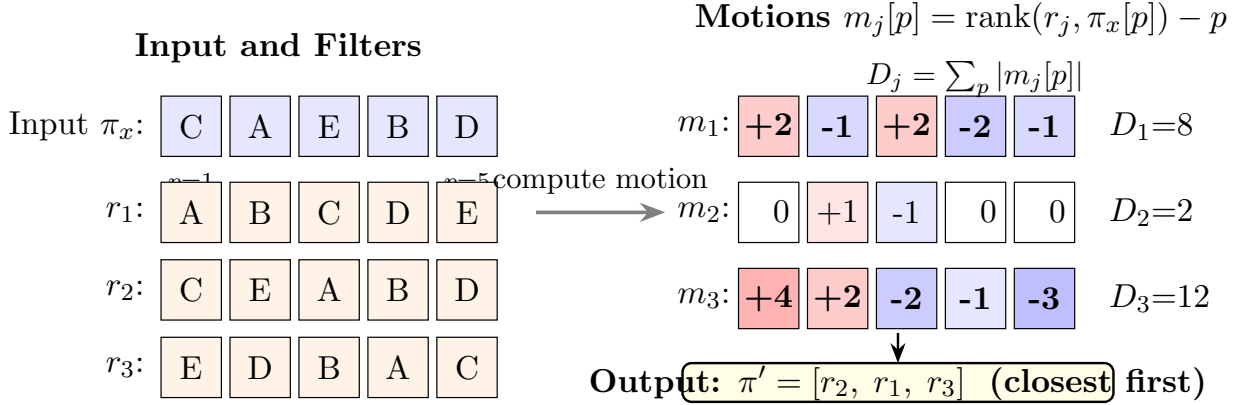
\begin{figure}[t]
\centering
\resizebox{\textwidth}{!}{%
\begin{tikzpicture}[
    >=Stealth,
    arr/.style={->, thick},
    lbl/.style={font=\small},
    cell/.style={draw, minimum width=0.7cm, minimum height=0.7cm, font=\small, inner sep=0pt},
    hcell/.style={cell, fill=blue!10},
    fcell/.style={cell, fill=orange!10},
    mcell/.style={cell, fill=green!10},
]
\node[font=\small\bfseries, anchor=south west] at (0.5, 1.8) {Input and Filters};
\node[lbl, anchor=east] at (0.9, 1.2) {Input $\pi_x$:};
\foreach \i/\v in {1/C, 2/A, 3/E, 4/B, 5/D} {
    \node[hcell] (in\i) at (0.5+\i*0.8, 1.2) {\v};
}
\node[lbl, below=0.04 of in1, font=\scriptsize] {$p{=}1$};
\node[lbl, below=0.04 of in5, font=\scriptsize] {$p{=}5$};
\node[lbl, anchor=east] at (0.9, 0.2) {$r_1$:};
\foreach \i/\v in {1/A, 2/B, 3/C, 4/D, 5/E} {
    \node[fcell] (f1\i) at (0.5+\i*0.8, 0.2) {\v};
}
\node[lbl, anchor=east] at (0.9, -0.7) {$r_2$:};
\foreach \i/\v in {1/C, 2/E, 3/A, 4/B, 5/D} {
    \node[fcell] (f2\i) at (0.5+\i*0.8, -0.7) {\v};
}
\node[lbl, anchor=east] at (0.9, -1.6) {$r_3$:};
\foreach \i/\v in {1/E, 2/D, 3/B, 4/A, 5/C} {
    \node[fcell] (f3\i) at (0.5+\i*0.8, -1.6) {\v};
}
\draw[arr, gray, very thick] (5.3, 0.2) -- (6.8, 0.2);
\node[font=\footnotesize, black, above=2pt] at (6.05, 0.2) {compute motion};
\node[font=\small\bfseries, anchor=south west] at (7.0, 2.1) {Motions $m_j[p] = \mathrm{rank}(r_j, \pi_x[p]) - p$};
\node[lbl, anchor=east] at (7.7, 1.2) {$m_1$:};
\foreach \i/\v/\c in {1/{+2}/red!20, 2/{-1}/blue!15, 3/{+2}/red!20, 4/{-2}/blue!20, 5/{-1}/blue!15} {
    \node[mcell, fill=\c, font=\small\bfseries] (m1\i) at (7.2+\i*0.8, 1.2) {\v};
}
\node[lbl, right=0.25 of m15] {$D_1{=}8$};
\node[lbl, anchor=east] at (7.7, 0.2) {$m_2$:};
\foreach \i/\v/\c in {1/{\phantom{+}0}/white, 2/{+1}/red!10, 3/{-1}/blue!10, 4/{\phantom{+}0}/white, 5/{\phantom{+}0}/white} {
    \node[mcell, fill=\c, font=\small] (m2\i) at (7.2+\i*0.8, 0.2) {\v};
}
\node[lbl, right=0.25 of m25] {$D_2{=}2$};
\node[lbl, anchor=east] at (7.7, -0.8) {$m_3$:};
\foreach \i/\v/\c in {1/{+4}/red!30, 2/{+2}/red!20, 3/{-2}/blue!20, 4/{-1}/blue!10, 5/{-3}/blue!25} {
    \node[mcell, fill=\c, font=\small\bfseries] (m3\i) at (7.2+\i*0.8, -0.8) {\v};
}
\node[lbl, right=0.25 of m35] {$D_3{=}12$};
\node[font=\footnotesize, anchor=east] at (11.8, 1.75) {$D_j = \sum_p |m_j[p]|$};
\draw[thick, rounded corners=3pt, fill=yellow!12] (7.0,-2.0) rectangle (12.0,-1.55);
\node[font=\small\bfseries] at (9.5,-1.78) {Output: $\pi' = [r_2,\; r_1,\; r_3]$ \;\small(closest first)};
\draw[arr, thick] (9.5, -1.2) -- (9.5, -1.55);
\end{tikzpicture}%
}
\caption{\textbf{Ranking layer forward pass.} Input $\pi_x = [C, A, E, B, D]$ is compared to three filters. The motion $m_j[p] = \mathrm{rank}(r_j, \pi_x[p]) - p$ measures signed displacement (red = forward, blue = backward). The footrule distance $D_j = \sum |m_j[p]|$ totals the displacements. Filters are ranked by proximity: $r_2$ (distance 2) is closest. The output $\pi'$ becomes the input to the next layer.}
\label{fig:forward-pass}
\end{figure}

\subsection{Backward Pass: Motion as Discrete Gradient}

The key insight is that the motion vector $m_j$ from Eq.~\eqref{eq:motion} plays the role of a gradient in continuous optimization---it tells us \emph{how to move each element} to reduce distance.

\begin{proposition}[Motion as Discrete Gradient]
\label{prop:motion-gradient}
Because the footrule distance $d_F(\pi_x, r_j) = \sum_{p=1}^\ell |m_j[p]|$ decomposes into independent per-position terms, its minimization over per-item integer shifts $\{\delta_p\}$ separates across positions, and the unique minimizer of each term is the motion of Eq.~\eqref{eq:motion}:
\[
m_j[p] = \arg\min_{\delta \in \Z}\; \big|\rankop(r_j,\, \pi_x[p]) - (p + \delta)\big|, \qquad p = 1, \dots, \ell.
\]
Hence the full motion vector $m_j$ is the exact per-item descent direction for the footrule objective under an independent-shift relaxation (the simultaneous shifts need not compose into a valid permutation).
\end{proposition}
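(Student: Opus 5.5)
The plan is to make the independent-shift relaxation explicit and then reduce to a one-dimensional problem over $\Z$. Fix the input $\pi_x$ and filter $r_j$, and write $t_p = \rankop(r_j,\pi_x[p])$ for the target position of the item currently at position $p$; by Eq.~\eqref{eq:motion}, $m_j[p] = t_p - p$. Under the relaxation, item $\pi_x[p]$ is moved from position $p$ to $p+\delta_p$ with each $\delta_p \in \Z$ chosen freely, and we drop the requirement that $\{p+\delta_p\}$ be a permutation. The residual objective is then
\[
\Phi(\delta_1,\dots,\delta_\ell) \;=\; \sum_{p=1}^{\ell} \big|\, t_p - (p+\delta_p) \,\big| .
\]
At $\delta\equiv 0$ this equals $\sum_p |m_j[p]| = d_F(\pi_x,r_j)$, so $\Phi$ is exactly the footrule distance after the shifts are applied.

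The first step is separability. Each summand of $\Phi$ depends on a single coordinate $\delta_p$, and the feasible set $\Z^\ell$ is a product set with no coupling constraints. Hence
\[
\min_{\delta\in\Z^\ell}\Phi(\delta) \;=\; \sum_p \min_{\delta_p\in\Z} |t_p - p - \delta_p| ,
\]
and $\delta$ minimizes $\Phi$ if and only if each $\delta_p$ minimizes its own term.

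The second step solves the scalar problem. The function $\delta\mapsto|c-\delta|$ with $c = t_p - p = m_j[p]\in\Z$ is nonnegative. It equals $0$ exactly at $\delta = c$, and $c$ is an admissible integer shift. So the unique minimizer is $\delta_p = m_j[p]$, with minimum value $0$. Assembling the coordinates, $\delta = m_j$ is the unique global minimizer of $\Phi$, and it drives the relaxed footrule objective to zero. This is the sense in which $m_j$ is the exact per-item descent direction.

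The only real subtlety is stating the relaxation precisely, so the claim does not seem to assert something false about genuine permutation updates. I would close with a short example showing that the simultaneous shifts $p\mapsto p+m_j[p]$ do compose to the filter's ordering when all of them are applied: since $t_p$ ranges over a permutation, applying every shift exactly reproduces $r_j$. However, partial or scaled application, which is what accumulation effectively does, can produce collisions. This is why the statement limits its claim to the independent-shift relaxation rather than to moves within $S_\ell$. I would also note that if missing items are imputed to a non-integer central position, uniqueness becomes uniqueness of the nearest integer up to ties. For the statement as given, with full permutations, every $t_p$ is an integer and no ties arise.
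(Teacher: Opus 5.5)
Your proof is correct and takes the same route as the paper. The paper gives no separate proof beyond the statement itself: separability of $\sum_p |\rankop(r_j,\pi_x[p]) - (p+\delta_p)|$ over unconstrained $\delta_p\in\Z$, with each term vanishing exactly at $\delta_p = m_j[p]$. Your closing observation is also correct: for full permutations, applying all shifts at once reproduces $r_j$, which is slightly sharper than the paper's caveat that the shifts ``need not compose into a valid permutation''. That caveat, as you note, only has force for partial or accumulated (scaled) application of the motions.
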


The motion vector thus points each item directly toward its target position in the filter, the discrete analogue of a negative loss gradient---no calculus is involved, and the update signal is simply the motion between the input ordering and the correct-class filter. The separability exploited here is precisely what distinguishes footrule from non-decomposable metrics such as Kendall tau: it is what lets the accumulation rule of Section~\ref{sec:perm-accum} treat each item's displacement independently. (We caution that summing the per-item minimizers does not itself yield a permutation; the motion is a descent \emph{direction}, realized as a valid ordering only after the accumulation-and-reorder projection of Eqs.~\ref{eq:accum-update}--\ref{eq:accum-reorder}.)

\subsection{Permutation-Matrix Accumulation}
\label{sec:perm-accum}

Each filter $r_j$ maintains an accumulator $A_j \in \R^{V \times V}$ (initialized to identity) that integrates displacement votes across training examples. Given input $\pi_x$, the \emph{vote matrix} $\Phi(\pi_x, r_j) \in \{0,1\}^{V \times V}$ records where each item appears in the input:
\begin{definition}[Vote Matrix]
\label{def:vote-matrix}
Each item $v$ in the vocabulary votes for its position in the input $\pi_x$:
\[
\Phi(\pi_x, r_j)[v,\, k] = \begin{cases} 1 & \text{if } k = \rankop(\pi_x, v), \\ 0 & \text{otherwise,} \end{cases}
\]
where $\rankop(\pi_x, v)$ denotes the position of item $v$ in the input ordering $\pi_x$.
Each row of $\Phi$ has exactly one nonzero entry: item $v$ votes for the position that the training data places it at.
\end{definition}

Equivalently, for each filter position $p$, the backward motion $m_j^{\mathrm{bwd}}[p] = \rankop(\pi_x, r_j[p]) - p$ measures where filter item $r_j[p]$ appears in the input, and the vote places $r_j[p]$ at position $p + m_j^{\mathrm{bwd}}[p]$. Note that this backward motion is the negative of the forward motion from Eq.~\eqref{eq:motion}: the forward motion measures displacement \emph{from input to filter}, while the backward motion measures displacement \emph{from filter to input}, providing the update direction.

The accumulator is updated additively:
\begin{equation}
\label{eq:accum-update}
A_j \leftarrow A_j + \Phi(\pi_x, r_j).
\end{equation}
After $T$ updates, $A_j[v, k]$ counts the total evidence that item $v$ belongs at position $k$. The filter's ordering is recomputed via weighted index averaging:
\begin{equation}
\label{eq:accum-reorder}
\hat{\imath}(v) = \frac{\sum_k k \cdot A_j[v, k]}{\sum_k A_j[v, k]}, \qquad
r_j \leftarrow \argsort\big(\{\hat{\imath}(v)\}_{v \in \mathcal{V}}\big).
\end{equation}
This resembles a projection onto the set of permutations: accumulate displacement evidence, then map to the ordering consistent with the aggregate evidence (we do not claim a formal proximal operator). The accumulator smooths individual motions and provides a form of momentum.\footnote{Eq.~\eqref{eq:accum-reorder} normalizes each item's score by its row sum $\sum_k A_j[v,k]$; the implementation normalizes by the corresponding column sum. The two coincide (and the analysis of Proposition~\ref{prop:convergence} applies exactly) whenever the accumulated votes form a permutation matrix, i.e.\ absent the position clipping and missing-item handling used for partial inputs.}

\begin{remark}
The learning rule has no learning rate in the continuous sense. Instead, the accumulator's history provides natural damping: recent motions contribute proportionally to the total accumulated evidence. An explicit learning rate $\eta$ can optionally scale the vote matrix $\Phi$ to control update speed.
\end{remark}

\begin{figure}[t]
\centering
\resizebox{0.95\textwidth}{!}{%
\begin{tikzpicture}[
    >=Stealth,
    arr/.style={->, thick},
    lbl/.style={font=\small},
    cell/.style={draw, minimum width=0.7cm, minimum height=0.7cm, font=\small, inner sep=0pt},
]
\node[font=\small\bfseries] at (1.4, 3.7) {(a) Accumulator $A_j$};
\node[font=\scriptsize] at (1.4, 3.3) {(identity init)};
\node[lbl, font=\scriptsize] at (0.35, 2.9) {1};
\node[lbl, font=\scriptsize] at (1.05, 2.9) {2};
\node[lbl, font=\scriptsize] at (1.75, 2.9) {3};
\node[lbl, font=\scriptsize] at (2.45, 2.9) {4};
\foreach \r/\item in {1/A, 2/B, 3/C, 4/D} {
    \node[lbl, anchor=east] at (-0.1, 2.5-\r*0.7) {\item};
    \foreach \c in {1,2,3,4} {
        \pgfmathtruncatemacro{\val}{\r==\c ? 1 : 0}
        \pgfmathtruncatemacro{\shade}{\r==\c ? 25 : 0}
        \node[cell, fill=blue!\shade] at (-0.35+\c*0.7, 2.5-\r*0.7) {\val};
    }
}
\node[font=\LARGE\bfseries] at (3.8, 1.4) {$+$};
\node[font=\small\bfseries] at (6.4, 3.7) {(b) Vote $\Phi$};
\node[font=\scriptsize] at (6.4, 3.3) {Input $[C,A,D,B]$, Filter $[A,B,C,D]$};
\node[lbl, font=\scriptsize] at (5.35, 2.9) {1};
\node[lbl, font=\scriptsize] at (6.05, 2.9) {2};
\node[lbl, font=\scriptsize] at (6.75, 2.9) {3};
\node[lbl, font=\scriptsize] at (7.45, 2.9) {4};
\foreach \r/\item/\votecol in {1/A/2, 2/B/4, 3/C/1, 4/D/3} {
    \node[lbl, anchor=east] at (4.9, 2.5-\r*0.7) {\item};
    \foreach \c in {1,2,3,4} {
        \pgfmathtruncatemacro{\val}{\c==\votecol ? 1 : 0}
        \pgfmathtruncatemacro{\shade}{\c==\votecol ? 25 : 0}
        \node[cell, fill=orange!\shade] at (4.65+\c*0.7, 2.5-\r*0.7) {\val};
    }
}
\node[lbl, text width=2.8cm, align=left, anchor=north west, font=\footnotesize] at (8.5, 2.5) {Each item votes\\for its position\\in the input};
\node[font=\LARGE\bfseries] at (3.8, -0.1) {$=$};
\draw[arr, very thick] (3.8, -0.5) -- (3.8, -1.3);
\node[font=\small\bfseries] at (1.4, -1.5) {(c) $A_j + \Phi$};
\node[lbl, font=\scriptsize] at (0.35, -1.9) {1};
\node[lbl, font=\scriptsize] at (1.05, -1.9) {2};
\node[lbl, font=\scriptsize] at (1.75, -1.9) {3};
\node[lbl, font=\scriptsize] at (2.45, -1.9) {4};
\node[lbl, anchor=east] at (-0.1, -2.3) {A};
\node[cell, fill=blue!15] at (0.35, -2.3) {1};
\node[cell, fill=orange!20] at (1.05, -2.3) {1};
\node[cell] at (1.75, -2.3) {0};
\node[cell] at (2.45, -2.3) {0};
\node[lbl, anchor=east] at (-0.1, -3.0) {B};
\node[cell] at (0.35, -3.0) {0};
\node[cell, fill=blue!15] at (1.05, -3.0) {1};
\node[cell] at (1.75, -3.0) {0};
\node[cell, fill=orange!20] at (2.45, -3.0) {1};
\node[lbl, anchor=east] at (-0.1, -3.7) {C};
\node[cell, fill=orange!20] at (0.35, -3.7) {1};
\node[cell] at (1.05, -3.7) {0};
\node[cell, fill=blue!15] at (1.75, -3.7) {1};
\node[cell] at (2.45, -3.7) {0};
\node[lbl, anchor=east] at (-0.1, -4.4) {D};
\node[cell] at (0.35, -4.4) {0};
\node[cell] at (1.05, -4.4) {0};
\node[cell, fill=orange!20] at (1.75, -4.4) {1};
\node[cell, fill=blue!15] at (2.45, -4.4) {1};
\draw[arr, very thick] (3.4, -3.3) -- node[above, font=\footnotesize] {$\frac{\sum_k k \cdot A[v,k]}{\sum_k A[v,k]}$} (5.2, -3.3);
\node[font=\small\bfseries] at (6.5, -1.5) {(d) Reorder};
\node[lbl, anchor=west] at (5.5, -2.3) {$\hat{\imath}(A) = 1.5$};
\node[lbl, anchor=west] at (5.5, -2.9) {$\hat{\imath}(B) = 3.0$};
\node[lbl, anchor=west] at (5.5, -3.5) {$\hat{\imath}(C) = 2.0$};
\node[lbl, anchor=west] at (5.5, -4.1) {$\hat{\imath}(D) = 3.5$};
\draw[arr, thick] (6.3, -4.4) -- (6.3, -4.9);
\node[draw, rounded corners=3pt, fill=green!12, font=\small, inner sep=5pt] at (6.3, -5.3) {$r_j \leftarrow [A, C, B, D]$};
\end{tikzpicture}%
}
\caption{\textbf{Permutation-matrix accumulation.} (a)~Accumulator starts as identity (encoding current filter $[A,B,C,D]$). (b)~Training input $[C,A,D,B]$: each item votes for its position in the input (Def.~\ref{def:vote-matrix}). (c)~Votes sum additively (Eq.~\ref{eq:accum-update}). (d)~Weighted average and argsort yield the updated filter $[A,C,B,D]$ (Eq.~\ref{eq:accum-reorder}), which has moved toward the training data. Convergence to the mean-position ranking is guaranteed (Proposition~\ref{prop:convergence}).}
\label{fig:accumulation}
\end{figure}
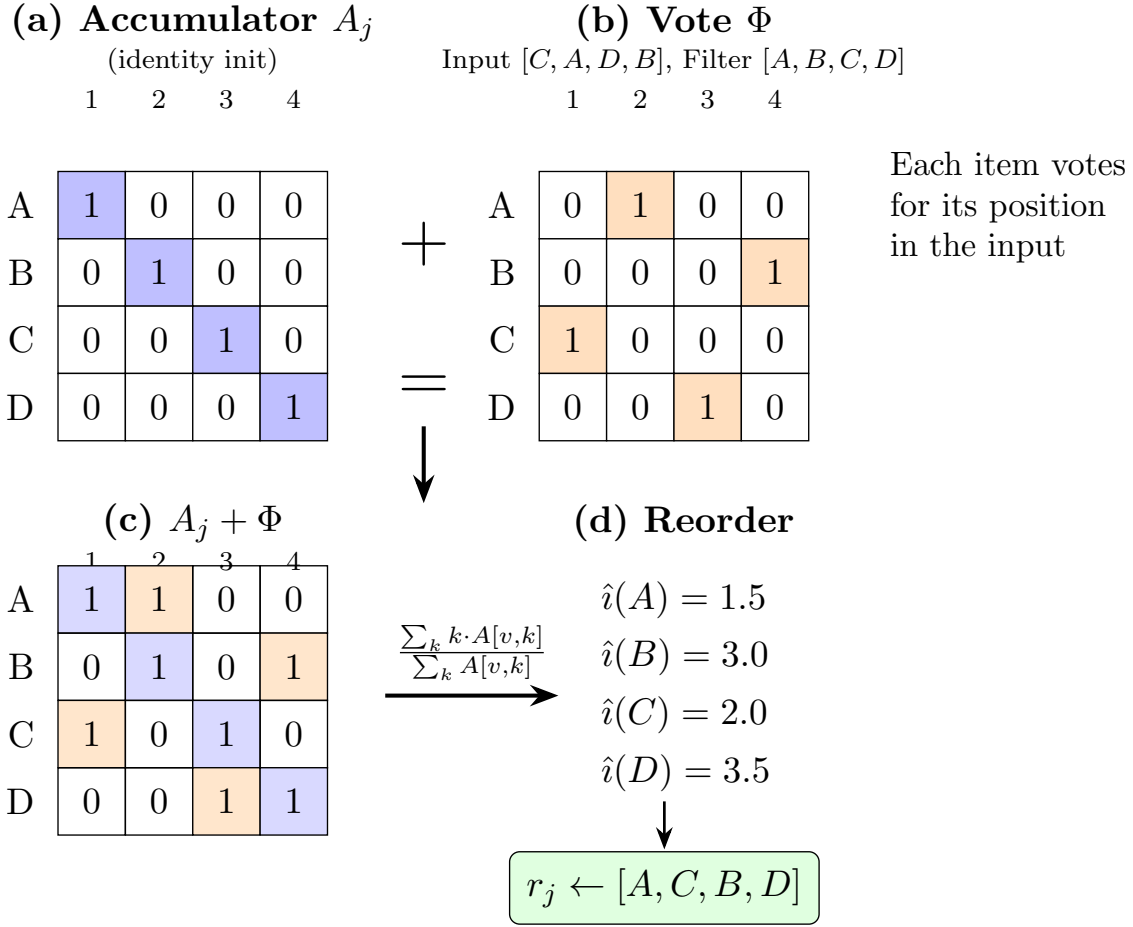

\subsection{Training Algorithm}

\begin{algorithm}[t]
\caption{Ranking Layer Forward--Backward}
\label{alg:rank-layer}
\begin{algorithmic}[1]
\Require Input ranking $\pi_x$, filters $\{r_j\}_{j=1}^N$, accumulators $\{A_j\}$
\Statex
\State \textbf{Forward:}
\For{$j = 1$ to $N$}
  \State $m_j \gets \mathrm{Motion}(\pi_x, r_j)$ \Comment{signed positional displacements, Eq.~\eqref{eq:motion}}
  \State $D_j \gets \|m_j\|_1$ \Comment{Spearman's footrule}
\EndFor
\State $\pi' \gets \argsort(D_1, \dots, D_N)$ \Comment{rank filters by proximity (closest first)}
\State \Return $\pi'$
\Statex
\State \textbf{Backward (supervised update):} see Section~\ref{sec:supervised-training} and Algorithm~\ref{alg:supervised}: the target ranking selected by the label (output layer) or propagated from above (hidden layers) is turned into a vote matrix $\Phi$ (Def.~\ref{def:vote-matrix}), accumulated into $A_j$ (Eqs.~\ref{eq:accum-update}--\ref{eq:accum-reorder}), and the filter reordered.
\end{algorithmic}
\end{algorithm}

\section{Arrow's Impossibility Theorem as Design Principle}
\label{sec:arrow}

Each ranking layer functions as a \emph{micro-society} of filters aggregating preferences. Arrow's impossibility theorem states that no social choice function over three or more alternatives can simultaneously satisfy:
\begin{itemize}[leftmargin=2em]
  \item \textbf{Pareto Efficiency (PE):} if all filters prefer $a \succ b$, the output preserves $a \succ b$.
  \item \textbf{Independence of Irrelevant Alternatives (IIA):} the relative output ranking of $a$ vs.\ $b$ depends only on filters' rankings of $a$ vs.\ $b$, not on $c$.
  \item \textbf{Non-Dictatorship (ND):} no single filter determines the output for all inputs.
\end{itemize}

In ArrowFlow, these constraints become \emph{capacity knobs}:

\paragraph{PE $\to$ Stability and gradient propagation.}
When filters unanimously prefer $a \succ b$, motions preserve this ordering, yielding a stabilizing monotonicity. This is analogous to residual connections---unanimous evidence propagates without attenuation.

\paragraph{IIA and context dependence.}
A single layer's \emph{forward} pass orders filters by their independent distances, so it respects pairwise IIA; its nonlinearity comes from the piecewise-constant cone geometry of $\argsort$ (Theorem~\ref{thm:capacity}), the permutation-space counterpart of an activation function. The IIA violation that Arrow's theorem guarantees appears instead in the \emph{learning rule}, whose positional aggregation of training rankings is context-dependent (Proposition~\ref{prop:iia-index}): an item's consensus position shifts with the placement of others. This is what shapes filters into context-sensitive detectors rather than pairwise-separable templates; the formal statement is deferred to Section~\ref{sec:social-choice-theory}.

\paragraph{ND violation $\to$ Sparse specialization.}
When a few filters consistently have the smallest distances for certain input types, they dominate the output ranking---a winner-take-all dynamic analogous to attention heads or sparse mixture-of-experts. This creates feature dominance, aiding generalization and efficiency.

\paragraph{Depth as compositional rank refinement.}
Stacking ranking layers enables compositional processing:
\begin{itemize}[leftmargin=2em]
  \item Early layers learn local tournaments or pairwise motifs (e.g., Condorcet-like cycles, single-peaked segments).
  \item Middle layers learn contextual re-weighting (via the context-dependent aggregation of Proposition~\ref{prop:iia-index}) to resolve conflicts.
  \item Late layers enforce global consistency and task-specific signals.
\end{itemize}
This is analogous to ConvNets (local filters $\to$ mid-level patterns $\to$ global structure), but in the permutation domain.

\section{Encoding: From Real-Valued Data to Permutations}
\label{sec:encoding}

The central challenge in applying ArrowFlow to real-valued data is the \emph{encoding problem}: converting continuous feature vectors into permutations without catastrophic information loss. The argsort operation that converts a real vector to a ranking discards all magnitude information, preserving only relative order. Two vectors $[1.0, 2.0, 3.0]$ and $[0.01, 100, 100.01]$ produce the same ranking $[1, 2, 3]$ despite being vastly different.

\subsection{Polynomial Feature Expansion}
For low-dimensional data, we first expand the feature space via polynomial features:
\begin{equation}
x \in \R^d \;\mapsto\; \phi(x) \in \R^{\binom{d+k}{k}-1},
\end{equation}
where $k$ is the polynomial degree. For $d = 4$ (Iris) at degree 3, this expands from 4 to 34 features, creating interaction terms $x_i x_j$, $x_i x_j x_k$ that dramatically increase the diversity of achievable rankings. This expansion is critical: on Iris, polynomial expansion alone reduces error by about 1.5$\times$ (21.3\%$\to$14.7\%; Table~\ref{tab:progression}), and contributes to a larger reduction in combination with the multi-view ensemble.

\subsection{Random Projection and Argsort}
After optional polynomial expansion and standardization, features are projected to a target embedding dimension $e$ via a random projection matrix $W \in \R^{d' \times e}$:
\begin{equation}
z = x \cdot W, \qquad \pi = \argsort(z).
\end{equation}
The resulting permutation $\pi$ is the input to the first sort layer. Different random matrices $W$ produce different permutations from the same input---the key source of ensemble diversity.

\subsection{Target-Aware Projection}
To inject supervised signal, we construct \emph{target-aware} projections that mix Linear Discriminant Analysis (LDA) components with random components:
\begin{equation}
W_{\mathrm{aware}} = \begin{bmatrix} W_{\mathrm{LDA}} & W_{\mathrm{random}} \end{bmatrix},
\end{equation}
where $W_{\mathrm{LDA}}$ captures the most class-discriminative directions and $W_{\mathrm{random}}$ provides diversity. The LDA ratio controls the balance.

\section{Multi-View Ensemble Architecture}
\label{sec:ensemble}

The most impactful architectural innovation in ArrowFlow is the multi-view ensemble. A single projection $W$ produces a single ``view'' of the data in permutation space. Since different projections capture different ordinal relationships, training multiple independent networks on different projections and combining via majority vote dramatically reduces error.

\subsection{Architecture}
Given $K$ views:
\begin{enumerate}
  \item Generate $K$ projection matrices $\{W_k\}_{k=1}^K$ using diverse strategies: \emph{random} (Section~\ref{sec:encoding}), \emph{target-aware} (LDA-augmented, Section~\ref{sec:encoding}), and \emph{calibrated}---a random projection whose projected scores are standardized per dimension before argsort, so that every projected dimension contributes equally to the ranking rather than being dominated by high-variance directions.
  \item For each view $k$: encode all data via $W_k$, train an independent ArrowFlow network.
  \item Combine predictions via majority vote: $\hat{y} = \mathrm{mode}(\hat{y}_1, \dots, \hat{y}_K)$.
\end{enumerate}

\subsection{Theoretical Foundation}
By Condorcet's jury theorem, if each view has error rate $p < 0.5$ and errors are independent, the ensemble error decreases exponentially with $K$ (formalized in Theorem~\ref{thm:ensemble}). Independence is maximized by diverse projection strategies: views cycle through target-aware, random, and calibrated projections. Empirically, 7 views provide the best accuracy--cost trade-off, reducing error by up to $\sim$3$\times$ compared to a single view (dataset-dependent; Section~\ref{sec:experiments}).

\subsection{Permutation Data Augmentation}
To improve sort-layer generalization, each training permutation can be augmented by applying random adjacent transpositions---the minimal perturbation in Spearman footrule distance (exactly 2 units). This is the permutation analogue of Gaussian noise in Euclidean space, adapted to the discrete metric topology.

\begin{figure}[t]
\centering
\resizebox{0.95\textwidth}{!}{%
\begin{tikzpicture}[
    >=Stealth,
    arr/.style={->, thick},
    box/.style={draw, rounded corners=2pt, minimum height=0.6cm, font=\small, align=center, thick},
    databox/.style={box, fill=blue!8, minimum width=1.1cm},
    projbox/.style={box, fill=purple!10, minimum width=2cm},
    netbox/.style={box, fill=green!10, minimum width=2cm},
    predbox/.style={box, fill=yellow!10, minimum width=0.8cm},
    lbl/.style={font=\footnotesize},
]
\node[databox, minimum width=2.2cm] (input) at (0, 0) {Input $x \in \R^d$};
\node[box, fill=orange!10, minimum width=2.2cm, below=0.4 of input] (preproc) {PolyExpand + Scale};
\draw[arr] (input) -- (preproc);
\node[projbox] (p1) at (-3.5, -2.2) {$W_1$ (target-aware)};
\node[projbox] (p2) at (0, -2.2) {$W_2$ (random)};
\node[projbox] (p3) at (3.5, -2.2) {$W_3$ (calibrated)};
\draw[arr] (preproc.south) -- ++(0,-0.25) -| (p1.north);
\draw[arr] (preproc.south) -- (p2.north);
\draw[arr] (preproc.south) -- ++(0,-0.25) -| (p3.north);
\node[box, fill=purple!18, minimum width=0.9cm, font=\footnotesize] (a1) at (-3.5, -3.2) {argsort};
\node[box, fill=purple!18, minimum width=0.9cm, font=\footnotesize] (a2) at (0, -3.2) {argsort};
\node[box, fill=purple!18, minimum width=0.9cm, font=\footnotesize] (a3) at (3.5, -3.2) {argsort};
\draw[arr] (p1) -- (a1);
\draw[arr] (p2) -- (a2);
\draw[arr] (p3) -- (a3);
\node[lbl, below=0.01 of a1, font=\scriptsize] {$\pi^{(1)}$};
\node[lbl, below=0.01 of a2, font=\scriptsize] {$\pi^{(2)}$};
\node[lbl, below=0.01 of a3, font=\scriptsize] {$\pi^{(3)}$};
\node[netbox, minimum height=1cm, below=0.5 of a1] (n1) {ArrowFlow\\Net 1};
\node[netbox, minimum height=1cm, below=0.5 of a2] (n2) {ArrowFlow\\Net 2};
\node[netbox, minimum height=1cm, below=0.5 of a3] (n3) {ArrowFlow\\Net 3};
\draw[arr] (a1) -- ++(0,-0.2) -- (n1);
\draw[arr] (a2) -- ++(0,-0.2) -- (n2);
\draw[arr] (a3) -- ++(0,-0.2) -- (n3);
\node[predbox] (pred1) at (-3.5, -6.0) {$\hat{y}_1{=}$A};
\node[predbox] (pred2) at (0, -6.0) {$\hat{y}_2{=}$B};
\node[predbox] (pred3) at (3.5, -6.0) {$\hat{y}_3{=}$A};
\draw[arr] (n1) -- (pred1);
\draw[arr] (n2) -- (pred2);
\draw[arr] (n3) -- (pred3);
\node[box, fill=red!12, minimum width=4.5cm, minimum height=0.65cm, font=\small\bfseries] (vote) at (0, -7.0) {Majority Vote};
\draw[arr] (pred1.south) -- ++(0,-0.2) -| ([xshift=-1.2cm]vote.north);
\draw[arr] (pred2) -- (vote);
\draw[arr] (pred3.south) -- ++(0,-0.2) -| ([xshift=1.2cm]vote.north);
\node[databox, fill=red!18, below=0.35 of vote, font=\small\bfseries] (final) {$\hat{y} = \text{A}$ \; (2 vs.\ 1)};
\draw[arr] (vote) -- (final);
\draw[decorate, decoration={brace, amplitude=4pt, raise=2pt}, thick] (-4.7, -4.0) -- (-4.7, -3.0);
\node[lbl, rotate=90, anchor=south, font=\scriptsize] at (-5.1, -3.5) {diverse views};
\end{tikzpicture}%
}
\caption{\textbf{Multi-view ensemble.} A single preprocessed input is projected through $K$ different matrices using diverse strategies (target-aware, random, calibrated). Each produces a different permutation---a different ordinal ``view.'' Independent ArrowFlow networks are trained per view; predictions are combined by majority vote. Projection diversity approximates the independence condition of Theorem~\ref{thm:ensemble}.}
\label{fig:ensemble}
\end{figure}
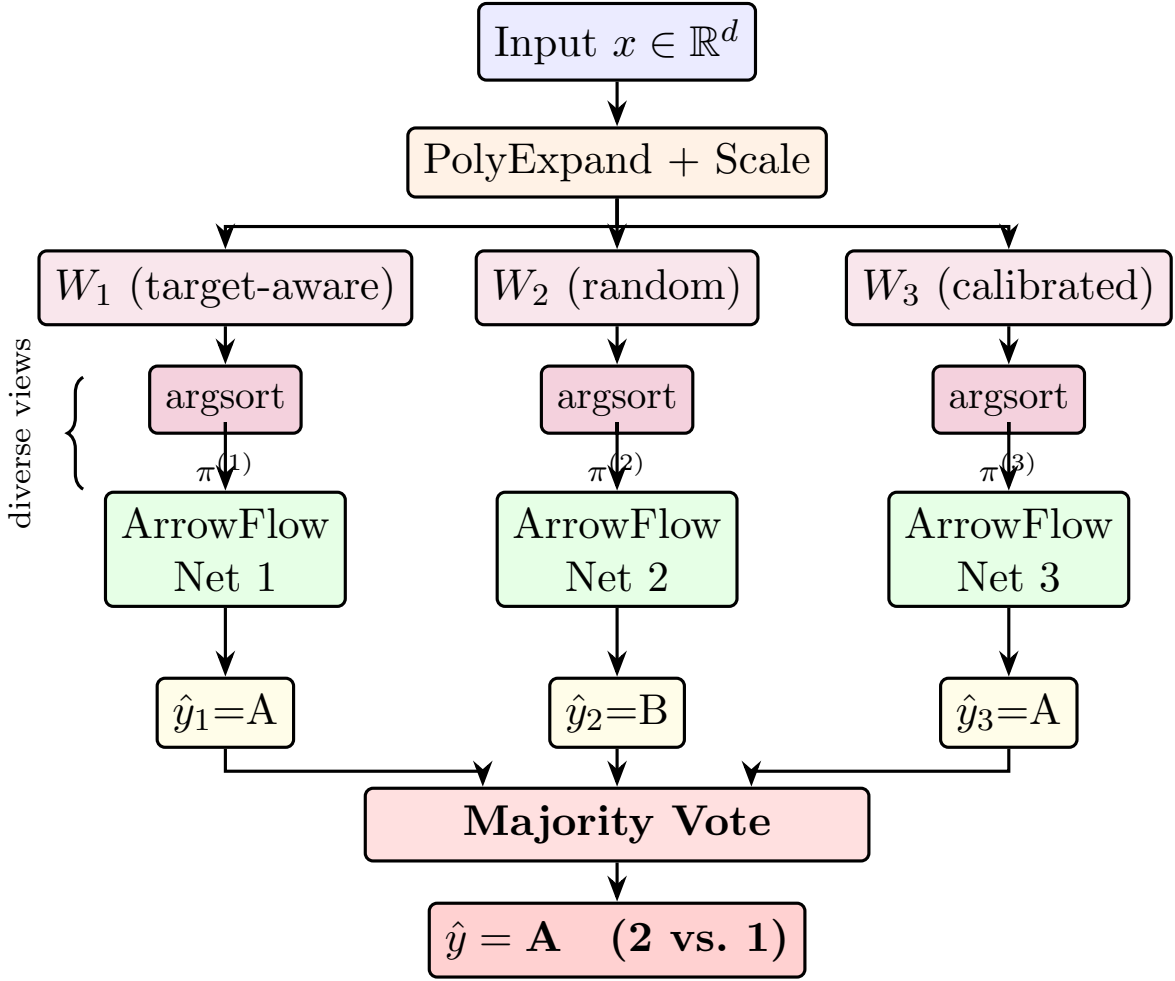

\section{Network Architecture}
\label{sec:architecture}

\subsection{Layer Structure}
Each ArrowFlow network is a layered architecture where each layer is a \textbf{Sort Layer}: a bank of $N$ permutation filters. The layer computes Spearman's footrule distance between the input permutation and each filter, and outputs a ranking of filters by distance. Layers are stacked: the output vocabulary of layer $\ell$ (filter IDs) becomes the input vocabulary of layer $\ell+1$.

The default architecture uses two sort layers: a hidden layer with $N$ filters and an output layer with $C$ filters (one per class). Classification is by nearest-filter: the predicted class is the index of the output-layer filter closest to the final hidden representation.

\paragraph{Layer dimensions and vocabularies.}
The vocabulary changes at every layer. The input is a permutation of the $e$ encoded tokens; layer~1 holds $N_1$ filters, each a permutation of those $e$ tokens, and outputs a ranking of its filters---a permutation in $S_{N_1}$. Layer~2's filters are therefore permutations of the \emph{filter indices} $\{1,\dots,N_1\}$, and so on. The output layer holds $C$ filters (one per class), permutations of $\{1,\dots,N_L\}$, and classification reduces to selecting the nearest class filter.
\begin{center}
\begin{tabular}{lccc}
\toprule
Layer & input vocabulary & \# filters & output \\
\midrule
$1$ & $e$ & $N_1$ & $\pi^{(1)}\in S_{N_1}$ \\
$2$ & $N_1$ & $N_2$ & $\pi^{(2)}\in S_{N_2}$ \\
$\vdots$ & $\vdots$ & $\vdots$ & $\vdots$ \\
output & $N_L$ & $C$ & $\hat{y}=\arg\min_c D^{(L)}_c$ \\
\bottomrule
\end{tabular}
\end{center}
An item absent from an input (e.g.\ under feature masking) is assigned the central position $mV$ with $m=0.5$, so it contributes a moderate, bounded displacement rather than an extreme one; in filter updates its displacement is additionally down-weighted by the deletion-cost weight $w=0.5$.

\subsection{Supervised Training of Sort Layers}
\label{sec:supervised-training}

The forward pass (Section~\ref{sec:ranking-layer}) maps an input permutation $\pi^{(0)} \in S_e$ through layers $\ell = 1, \dots, L$: layer $\ell$ computes $D^{(\ell)}_j = d_F(\pi^{(\ell-1)}, r^{(\ell)}_j)$ for each filter and emits $\pi^{(\ell)} = \argsort(D^{(\ell)})$ (closest filter first). The prediction is the nearest output filter, $\hat{y} = \arg\min_c D^{(L)}_c$. We now specify the complete supervised update, which is non-gradient and built entirely from the accumulation rule of Section~\ref{sec:perm-accum}.

\paragraph{Error-driven gating.}
On example $(x, y)$ with prediction $\hat{y}$, define an update magnitude $g$: if $\hat{y} \neq y$ (a mistake) then $g = 1$; if $\hat{y} = y$ then $g = 1$ with probability $p_c$ and $g = 0$ otherwise, with $p_c = 0.01$. Updates therefore concentrate on misclassified examples---in the spirit of perceptron and LVQ learning---while a small $p_c$ keeps a trickle of reinforcement on already-correct ones.

\paragraph{Output layer: pull the correct-class filter.}
Only the class-$y$ output filter is updated, and only toward the current hidden representation $\pi^{(L-1)}$:
\begin{equation}
\label{eq:output-update}
A^{(L)}_y \leftarrow A^{(L)}_y + \frac{\eta\, g}{N_L}\,\Phi\!\left(\pi^{(L-1)}, r^{(L)}_y\right), \qquad r^{(L)}_y \leftarrow \mathrm{reorder}\!\left(A^{(L)}_y\right),
\end{equation}
where $\Phi$ is the vote matrix (Definition~\ref{def:vote-matrix}) and $\mathrm{reorder}$ is the weighted-average argsort of Eq.~\eqref{eq:accum-reorder}. Every other output filter is left unchanged. At the output layer the update is purely \emph{attractive}: the correct class's prototype moves toward the input, and incorrect-class filters are never pushed away (the repulsion branch described below is used only in hidden layers, never at the output). The learning rate $\eta$ (default $0.1$) sets each example's vote weight $\eta/N_L$ relative to the unit-weight identity prior in the accumulator; it scales votes, not a gradient.

\paragraph{Hidden layers: motion propagation.}
The signed displacement field $\mu^{(L)}$ produced by the output update---how far each element of $\pi^{(L-1)}$ must move to reach $r^{(L)}_y$---is the supervised signal handed to the layer below. For $\ell = L-1$ down to $1$, given the incoming target motion $\mu^{(\ell+1)}$:
\begin{enumerate}[leftmargin=2em,topsep=2pt]
\item \textbf{Top-$\rho$ selection.} Order the layer's filters by $|\mu^{(\ell+1)}|$ and update only the top $\rho N_\ell$ of them ($\rho = 0.5$); the small-motion remainder is left untouched---a sparsity that both regularizes and halves the update cost.
\item \textbf{Signed accumulation.} Each selected filter $r^{(\ell)}_j$ accumulates votes with \emph{nonnegative} weight $|\mu^{(\ell+1)}_j|$ toward a target $\tilde{\pi}$ whose choice is set by the \emph{sign} of the propagated motion---the layer input $\pi^{(\ell-1)}$ when $\mu^{(\ell+1)}_j \ge 0$ (attractive), or its reversal $\mathrm{rev}(\pi^{(\ell-1)})$ when $\mu^{(\ell+1)}_j < 0$ (repulsive):
\begin{equation}
\label{eq:hidden-update}
A^{(\ell)}_j \leftarrow A^{(\ell)}_j + \frac{2\eta}{N_\ell}\,\bigl|\mu^{(\ell+1)}_j\bigr|\,\Phi\!\left(\tilde{\pi}, r^{(\ell)}_j\right), \qquad
r^{(\ell)}_j \leftarrow \mathrm{reorder}(A^{(\ell)}_j),
\end{equation}
with $\tilde{\pi} = \pi^{(\ell-1)}$ for $\mu^{(\ell+1)}_j \ge 0$ and $\tilde{\pi} = \mathrm{rev}(\pi^{(\ell-1)})$ otherwise, as above.
Because the weights $|\mu^{(\ell+1)}_j|$ are nonnegative, each $A^{(\ell)}_j$ remains a vote-count matrix, so the Borda/mean-position reading of Proposition~\ref{prop:convergence} still applies (that proposition analyzes the accumulator mechanics; it does not model the sign-dependent choice of $\tilde{\pi}$, which is what makes the hidden update a signed, attract-or-repel rule).
\item \textbf{Cross-filter aggregation.} The motion passed further down is the \emph{mean} over the layer's filters of their per-item displacements, renormalized to peak magnitude $c\,N_\ell$ ($c = 0.125$) and sorted by amplitude:
\[
\bar{\mu} = \frac{1}{N_\ell}\sum_j \mu^{(\ell)}_{j}, \qquad
\mu^{(\ell)} = \mathrm{sort}_{|\cdot|}\!\left(c\,N_\ell\,\frac{\bar{\mu}}{\max_i |\bar{\mu}_i|}\right).
\]
\end{enumerate}
This mean aggregation is ArrowFlow's substitute for backpropagating through the non-differentiable argsort: in place of a chain rule, each layer summarizes the displacement demanded of all its filters into a single ordinal target for the layer below.

\paragraph{Frozen output layer.}
The strongest empirical setting (Table~\ref{tab:llu}) freezes the output layer. Mechanistically, the class-$y$ filter's motion $\mu^{(L)}$ is still computed and propagated to drive the hidden layers, but the output accumulator is \emph{discarded} rather than applied, so the $C$ output filters remain their random initial permutations---fixed, one-per-class prototypes. The hidden layers then learn to map each class's inputs close to its own fixed prototype. Freezing supplies a stable supervised target and prevents the prototypes from drifting toward---and collapsing onto---the moving hidden representations; it is the ordinal analogue of fixing a randomly-initialized classifier head. The benefit is consistent across datasets and the five random-seed simulations of Table~\ref{tab:llu}.

\begin{algorithm}[t]
\caption{Supervised training step (one example, sort-only network)}
\label{alg:supervised}
\begin{algorithmic}[1]
\Require example $(x,y)$; filters $\{r^{(\ell)}_j\}$, accumulators $\{A^{(\ell)}_j\}$; rate $\eta$, fraction $\rho$, scale $c$, correct-update prob.\ $p_c$
\State $\pi^{(0)} \gets \mathrm{encode}(x)$
\For{$\ell = 1$ to $L$} \Comment{forward}
  \State $D^{(\ell)}_j \gets d_F(\pi^{(\ell-1)}, r^{(\ell)}_j)\ \forall j$; \quad $\pi^{(\ell)} \gets \argsort(D^{(\ell)})$
\EndFor
\State $\hat{y} \gets \arg\min_c D^{(L)}_c$
\State $g \gets 1$ if $\hat{y} \neq y$; \ else $g \gets 1$ w.p.\ $p_c$, else $0$ \Comment{error-driven gating}
\State $\mu^{(L)} \gets$ displacement of $\pi^{(L-1)}$ toward $r^{(L)}_y$ \Comment{computed even if output frozen}
\If{output layer not frozen}
  \State $A^{(L)}_y \mathrel{+}= (\eta g / N_L)\,\Phi(\pi^{(L-1)}, r^{(L)}_y)$; \quad $r^{(L)}_y \gets \mathrm{reorder}(A^{(L)}_y)$
\EndIf
\For{$\ell = L-1$ down to $1$} \Comment{motion propagation}
  \State $S \gets$ top $\rho N_\ell$ filters by $|\mu^{(\ell+1)}|$
  \ForAll{$j \in S$}
     \State $\tilde{\pi} \gets \pi^{(\ell-1)}$ if $\mu^{(\ell+1)}_j \ge 0$ else $\mathrm{rev}(\pi^{(\ell-1)})$ \Comment{sign $\Rightarrow$ attract/repel}
     \State $A^{(\ell)}_j \mathrel{+}= (2\eta/N_\ell)\,|\mu^{(\ell+1)}_j|\,\Phi(\tilde{\pi}, r^{(\ell)}_j)$; \quad $r^{(\ell)}_j \gets \mathrm{reorder}(A^{(\ell)}_j)$
  \EndFor
  \State $\bar{\mu} \gets \tfrac{1}{N_\ell}\sum_j \mu^{(\ell)}_j$; \quad $\mu^{(\ell)} \gets \mathrm{sort}_{|\cdot|}\!\big(c N_\ell\,\bar{\mu}/\max_i|\bar{\mu}_i|\big)$
\EndFor
\end{algorithmic}
\end{algorithm}

\paragraph{Remaining details.}
Filters are initialized as uniform random permutations of their vocabulary; training presents data over $200$--$500$ iterations (the forward pass is batched and updates accumulate per example). Ties in $\argsort$---rare for integer footrule distances---are broken by index, making the forward pass deterministic given the filters. All experiments use the footrule norm $q = 1$; the $q \in \{0, 2\}$ variants of Eq.~\eqref{eq:distance} are available but unused. The default constants are collected for reference: $\eta = 0.1$ (rate), $\rho = 0.5$ (fraction of filters updated per layer), $c = 0.125$ (motion renormalization), $p_c = 0.01$ (correct-example update probability), $m = 0.5$ (missing-item position), $w = 0.5$ (deletion-cost weight).

\section{Theoretical Analysis}
\label{sec:theory}

This section establishes formal properties of ArrowFlow's components: the stability of ordinal encoding under perturbation, the information-theoretic capacity of argsort, the amplification of noise by polynomial expansion, the convergence of the learning rule, and the ensemble error rate. Several results are validated empirically in the experiments that follow (Sections~\ref{sec:experiments}--\ref{sec:utility}); we give forward pointers where a theorem predicts a specific experimental outcome.

\subsection{Metric Properties of the Permutation Space}

We first state known results that ground ArrowFlow's choice of distance metric.

\begin{lemma}[Footrule--Kendall Equivalence {\citep{diaconis1977footrule}}]
\label{lem:fk-equiv}
For any $\pi, \sigma \in S_V$,
\[
d_K(\pi, \sigma) \;\leq\; d_F(\pi, \sigma) \;\leq\; 2\, d_K(\pi, \sigma),
\]
where $d_K$ counts pairwise disagreements (Kendall tau) and $d_F = \sum_{i=1}^V |\pi(i) - \sigma(i)|$ is Spearman's footrule. The diameter of $(S_V, d_F)$ is $\lfloor V^2/2 \rfloor$, attained by the identity and reverse permutations.
\end{lemma}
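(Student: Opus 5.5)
Since the statement is the classical Diaconis--Graham inequality, the plan is to prove it directly rather than lean on the citation. The structure is two inequalities plus a diameter computation. We write $\pi,\sigma$ as position maps, and by right-invariance of both metrics (relabeling the items by $\sigma^{-1}$ changes neither $d_F$ nor $d_K$) we reduce to $\sigma=\mathrm{id}$. Then
\[
d_F(\pi)=\sum_i|\pi(i)-i| \quad\text{and}\quad d_K(\pi)=\#\{(i,j): i<j,\ \pi(i)>\pi(j)\}.
\]

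\textbf{Upper bound $d_F\le 2d_K$.} For each $i$, let $a_i$ count the inversions $(i,j)$ with $j>i$ and $\pi(j)<\pi(i)$, and let $b_i$ count those with $j<i$ and $\pi(j)>\pi(i)$. Counting the elements below $\pi(i)$ gives $\pi(i)-1=(i-1)-b_i+a_i$, hence $\pi(i)-i=a_i-b_i$. Therefore $|\pi(i)-i|\le a_i+b_i$. Each inversion is counted exactly twice in $\sum_i(a_i+b_i)$, so summing over $i$ yields $d_F\le 2d_K$.

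\textbf{Lower bound $d_K\le d_F$.} Here I would induct on $d_F$ via a transposition step. If $\pi\neq\mathrm{id}$, choose $i$ with $\pi(i)>i$ maximal in value among the exceedances, and locate a $j$ with $\pi(j)<\pi(i)$, $j>i$, and $\pi(j)\le j$ (an excedance/deficiency pairing). Swapping the values at $i$ and $j$ reduces $d_F$ by exactly $2\min(\pi(i)-i,\,j-\pi(j))$ when the values and positions interleave suitably. We must show that $d_K$ drops by at most that amount. A cleaner alternative I would try first is the known direct argument: choose the swap of two items $i<j$ with $\pi(i)>\pi(j)$ such that $\pi(j)\le i<j\le\pi(i)$. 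Such a pair exists by a crossing argument. Swapping it reduces $d_F$ by $2(j-i)$, and one checks that it reduces $d_K$ by at most $2(j-i)-1\le 2(j-i)$, because only the items with positions or values strictly between can change their inversion status with $i$ or $j$. Induction on $d_F$ then gives $d_K\le d_F$.

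\textbf{Diameter.} Using $|\pi(i)-i|\le \max(\pi(i),i)-\min(\pi(i),i)$, the sum $\sum_i|\pi(i)-i|$ equals $\sum(\text{larger})-\sum(\text{smaller})$ over a multiset of $2V$ numbers in which each $k\in\{1,\dots,V\}$ appears twice. This quantity is maximized when the larger values are the top $V$ entries of the multiset, which gives $\lfloor V^2/2\rfloor$. The reversal $\pi(i)=V+1-i$ attains this bound, since $\sum_i|V+1-2i|=\lfloor V^2/2\rfloor$.

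\textbf{Main obstacle.} The hardest step is the lower bound: guaranteeing the existence of the interleaved pair and verifying the inversion accounting precisely. If that accounting becomes messy, I would fall back on citing Diaconis--Graham for this half, as the statement itself does.
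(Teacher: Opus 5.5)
The paper gives no proof of this lemma; it cites Diaconis and Graham. Your self-contained argument is therefore a different route, and it is correct. Your upper bound is complete: you write $\pi(i)-i=a_i-b_i$ and count each inversion once in some $a_i$ and once in some $b_j$. Your diameter bound is also complete: you split the doubled multiset $\{1,1,\dots,V,V\}$ into a larger half and a smaller half. Note that $|\pi(i)-i|=\max(\pi(i),i)-\min(\pi(i),i)$ is an equality, not only an inequality.

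The citation buys brevity. It also gives the sharper inequality $d_K+d_{\mathrm{Cay}}\le d_F$, where $d_{\mathrm{Cay}}$ is the minimum number of transpositions. Your route buys verification of exactly what the paper later relies on: the constant-factor equivalence, and the value $\lfloor V^2/2\rfloor$ used as the Hoeffding range in the exponential-consistency bound. It also recovers that sharper inequality for free. Each swap loses one unit of slack ($2(j-i)-1$ versus $2(j-i)$), and a single transposition changes the Cayley distance by at most one.

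For the lower bound, commit to your ``cleaner alternative'' and discard the first sketch. Its formula $2\min(\pi(i)-i,\,j-\pi(j))$ is right only when $[i,\pi(i)]$ and $[\pi(j),j]$ are nested; in general the drop in $d_F$ is twice their overlap.

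The step you attribute to ``a crossing argument'' is a short pigeonhole, using the choice from your first sketch. Let $i$ satisfy $\pi(i)>i$ with $\pi(i)$ maximal among such $i$. Then every $k\in(i,\pi(i)]$ has $\pi(k)<\pi(i)$: either $k$ is an exceedance and maximality applies, or $\pi(k)\le k\le\pi(i)$, and injectivity rules out equality. If all such $k$ had $\pi(k)>i$, the $\pi(i)-i$ positions $i+1,\dots,\pi(i)$ would map injectively into the $\pi(i)-i-1$ values $i+1,\dots,\pi(i)-1$. That is impossible, so some $j$ satisfies $\pi(j)\le i<j\le\pi(i)$.

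Also state the inversion accounting precisely. The swap lowers $d_K$ by exactly $2m+1$, where $m$ counts the $k$ with $i<k<j$ \emph{and} $\pi(j)<\pi(k)<\pi(i)$. Items lying between in position only, or in value only, leave the count unchanged. This gives your bound $2(j-i)-1$.
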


This equivalence means that footrule and Kendall tau induce the same topology on $S_V$. ArrowFlow uses footrule because it decomposes as a sum of independent per-item displacements, enabling the accumulation-based learning rule of Section~\ref{sec:perm-accum}.

\subsection{Argsort Stability Under Perturbation}

The ordinal encoding $\pi = \argsort(x)$ is piecewise constant: invariant within each \emph{permutation cone} but discontinuous at boundaries where two coordinates are equal. The following theorem quantifies the stability region.

\begin{theorem}[Argsort Stability]
\label{thm:stability}
Let $x \in \R^d$ with all components distinct. Define the \emph{minimum gap}
\[
\delta_{\min}(x) = \min_{i \neq j} |x_i - x_j|.
\]
If $\varepsilon \in \R^d$ satisfies $\|\varepsilon\|_\infty < \delta_{\min}(x)/2$, then $\argsort(x + \varepsilon) = \argsort(x)$.
\end{theorem}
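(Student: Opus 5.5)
The plan is to reduce the claim to pairwise order preservation. Since $\argsort$ of a vector with distinct entries is determined entirely by the set of strict pairwise comparisons $\{x_i < x_j\}$, it suffices to show two things: every pairwise comparison of $x$ is preserved in $x+\varepsilon$, and $x+\varepsilon$ again has distinct entries, so no ties arise that would require index-based tie-breaking. Equivalently, in the language of the permutation cones mentioned before the theorem, the plan is to show that the $\ell_\infty$ ball of radius $\delta_{\min}(x)/2$ around $x$ lies inside the open cone containing $x$.

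Concretely, I would fix an arbitrary pair $i \neq j$ and assume without loss of generality that $x_i < x_j$. Then $x_j - x_i \ge \delta_{\min}(x)$. Writing
\[
(x_j+\varepsilon_j) - (x_i+\varepsilon_i) = (x_j - x_i) + (\varepsilon_j - \varepsilon_i),
\]
the triangle inequality gives $|\varepsilon_j - \varepsilon_i| \le 2\|\varepsilon\|_\infty < \delta_{\min}(x)$. Hence the difference is strictly positive, so $x_i+\varepsilon_i < x_j+\varepsilon_j$. Because this strict inequality holds for every pair, all entries of $x+\varepsilon$ are distinct, and their relative order agrees with that of $x$.

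Finally I would conclude that the sorting permutation is a function only of the strict total order on the coordinates, so $\argsort(x+\varepsilon)=\argsort(x)$. The same argument works for $\orderdesc$, since it is just the reversal.

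There is no real obstacle here. The only point needing care is the factor of $2$: each coordinate can move by up to $\|\varepsilon\|_\infty$, and two coordinates can move toward each other, which consumes up to twice that amount of the gap. The strict inequality in the hypothesis is exactly what rules out ties. I might also remark that the bound is tight: perturbing the closest pair toward each other by $\delta_{\min}/2$ each produces a tie.
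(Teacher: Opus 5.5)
Your proof is correct and follows the paper's argument: fix a pair, bound $|\varepsilon_j-\varepsilon_i|\le 2\|\varepsilon\|_\infty<\delta_{\min}(x)\le x_j-x_i$, and conclude that every strict pairwise comparison, and hence the full ordering, is preserved. You also note explicitly that $x+\varepsilon$ again has distinct entries, so index tie-breaking never enters; the paper leaves this implicit, and it is a useful addition.
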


\begin{proof}
For any pair $i, j$ with $x_i > x_j$, we require $(x_i + \varepsilon_i) > (x_j + \varepsilon_j)$, equivalently $\varepsilon_j - \varepsilon_i < x_i - x_j$. Since $|\varepsilon_j - \varepsilon_i| \leq 2\|\varepsilon\|_\infty < \delta_{\min}(x) \leq x_i - x_j$, the strict inequality holds. As this applies to all ordered pairs, the complete ordering is preserved.
\end{proof}

\begin{corollary}[Gaussian Stability Bound]
\label{cor:gaussian}
If $\varepsilon \sim \mathcal{N}(0, \sigma^2 I_d)$, then
\[
\Pr\big[\argsort(x + \varepsilon) \neq \argsort(x)\big] \;\leq\; \binom{d}{2} \exp\!\left(-\frac{\delta_{\min}(x)^2}{4\sigma^2}\right).
\]
\end{corollary}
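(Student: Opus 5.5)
The plan is to avoid going through Theorem~\ref{thm:stability} directly, since its $\ell_\infty$ condition costs an extra factor of $2$ in the exponent. Instead I will work pairwise. The ordering $\argsort(x+\varepsilon)$ differs from $\argsort(x)$ only if some pair $i\neq j$ flips. Since all components of $x$ are distinct, two strict orderings agree exactly when every pairwise comparison agrees. So the event can be written as
\[
\bigl\{\argsort(x+\varepsilon)\neq\argsort(x)\bigr\}\subseteq\bigcup_{i<j} E_{ij},
\qquad E_{ij}=\bigl\{\operatorname{sign}\bigl((x_i+\varepsilon_i)-(x_j+\varepsilon_j)\bigr)\neq\operatorname{sign}(x_i-x_j)\bigr\}.
\]
A union bound over the $\binom{d}{2}$ pairs then reduces the claim to the per-pair estimate $\Pr[E_{ij}]\le \exp\!\bigl(-\delta_{\min}(x)^2/(4\sigma^2)\bigr)$.

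For a fixed pair, assume without loss of generality that $x_i>x_j$ and set $\Delta=x_i-x_j\ge\delta_{\min}(x)$. The event $E_{ij}$ requires $\varepsilon_j-\varepsilon_i\ge\Delta$; ties occur with probability zero and are absorbed. Because the coordinates of $\varepsilon$ are independent $\mathcal{N}(0,\sigma^2)$, the difference $\varepsilon_j-\varepsilon_i$ is distributed as $\mathcal{N}(0,2\sigma^2)$. This gives
\[
\Pr[E_{ij}]=\Pr\bigl[Z\ge \Delta/(\sqrt2\,\sigma)\bigr],\qquad Z\sim\mathcal N(0,1).
\]

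Next I apply the Chernoff bound $\Pr[Z\ge t]\le e^{-t^2/2}$, or the sharper $\tfrac12 e^{-t^2/2}$ if one wants it. This yields $\Pr[E_{ij}]\le \exp(-\Delta^2/(4\sigma^2))\le\exp(-\delta_{\min}(x)^2/(4\sigma^2))$, using the monotonicity $\Delta\ge\delta_{\min}(x)$. Summing over pairs gives the stated bound.

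The only step needing care is the constant in the exponent. Routing through Theorem~\ref{thm:stability} would bound each $|\varepsilon_k|$ against $\delta_{\min}/2$ and give the weaker rate $\exp(-\delta_{\min}^2/(8\sigma^2))$, together with a factor $2d$ instead of $\binom d2$. That is why I work with the difference $\varepsilon_j-\varepsilon_i$ directly and use its variance of $2\sigma^2$. No other obstacle is expected; it is worth remarking that the bound is vacuous when $\delta_{\min}$ is small relative to $\sigma$.
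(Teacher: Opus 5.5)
Your proposal is correct and is essentially the paper's own proof. Both reduce the event to the pairwise reversals $\{\varepsilon_j-\varepsilon_i\ge x_i-x_j\}$, use $\varepsilon_j-\varepsilon_i\sim\mathcal{N}(0,2\sigma^2)$ with the Gaussian Chernoff bound to get $\exp\bigl(-\delta_{\min}(x)^2/(4\sigma^2)\bigr)$ per pair, and take a union bound over the $\binom{d}{2}$ pairs; the paper also notes the same optional extra factor of $\tfrac12$.
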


\begin{proof}
For each ordered pair $(i,j)$ with $x_i > x_j$, the reversal event is $\{\varepsilon_j - \varepsilon_i \geq x_i - x_j\}$. Since $\varepsilon_j - \varepsilon_i \sim \mathcal{N}(0, 2\sigma^2)$, the Chernoff bound gives $\Pr[\varepsilon_j - \varepsilon_i \geq t] \leq \exp(-t^2/(4\sigma^2))$ for $t > 0$. Setting $t = x_i - x_j \geq \delta_{\min}$ and applying the union bound over all $\binom{d}{2}$ pairs yields the result. (A tighter bound with an additional factor of $1/2$ follows from the complementary error function.)
\end{proof}

Theorem~\ref{thm:stability} and Corollary~\ref{cor:gaussian} are stated for the vector that is \emph{actually sorted}. In the ArrowFlow pipeline that vector is the projected score $z = \phi(x)W \in \R^e$, not the raw input $x$, so the bounds should be read at the score level. The following corollary translates raw-input noise through the (degree-1) projection.

\begin{corollary}[Projected-Noise Stability]
\label{cor:proj-noise}
Let $z = xW \in \R^e$ (degree-1 features) and let $\varepsilon_x \in \R^d$ be raw-input noise, so the score perturbation is $\varepsilon_z = \varepsilon_x W$. Then $\argsort(z + \varepsilon_z) = \argsort(z)$ whenever $\|\varepsilon_x W\|_\infty < \delta_{\min}(z)/2$. If $\varepsilon_x \sim \mathcal{N}(0, \sigma^2 I_d)$, then for each coordinate pair $(i,j)$ the difference $\varepsilon_{z,j} - \varepsilon_{z,i}$ is Gaussian with variance $\sigma^2\|W_{\cdot i} - W_{\cdot j}\|^2$---\emph{correlated} across pairs through shared columns of $W$---and
\[
\Pr\big[\argsort(z + \varepsilon_z) \neq \argsort(z)\big] \;\leq\; \sum_{i<j} \exp\!\left(-\frac{\delta_{\min}(z)^2}{2\,\sigma^2\,\|W_{\cdot i} - W_{\cdot j}\|^2}\right).
\]
\end{corollary}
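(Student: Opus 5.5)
The deterministic claim follows from Theorem~\ref{thm:stability} applied to the vector that is actually sorted. I would set the sorted vector to be $z = xW \in \R^e$, which I assume has distinct components so that $\delta_{\min}(z) > 0$. I would take the perturbation to be $\varepsilon_z = \varepsilon_x W$. Linearity of the degree-1 projection gives $(x+\varepsilon_x)W = z + \varepsilon_z$, so the perturbed input is encoded as $\argsort(z+\varepsilon_z)$. The hypothesis $\|\varepsilon_x W\|_\infty < \delta_{\min}(z)/2$ is then exactly the hypothesis of Theorem~\ref{thm:stability} with $d$ replaced by $e$, so no new argument is needed for the first sentence of the corollary.

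For the Gaussian bound I would follow the proof of Corollary~\ref{cor:gaussian}, but work at the score level. First, I would note that $\argsort(z+\varepsilon_z) \neq \argsort(z)$ forces at least one pair $i,j$ with $z_i > z_j$ to satisfy $z_i + \varepsilon_{z,i} \le z_j + \varepsilon_{z,j}$. This holds because a total order is determined by its pairwise comparisons; I would include the non-strict (tie) case in the event, so that the bound also covers index-based tie breaking. This gives the event inclusion
\[
\{\argsort(z+\varepsilon_z)\neq\argsort(z)\} \subseteq \bigcup_{z_i>z_j} \{\varepsilon_{z,j}-\varepsilon_{z,i} \ge z_i - z_j\}.
\]
Second, I would compute the law of each difference. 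Since $\varepsilon_{z,k} = \sum_m \varepsilon_{x,m} W_{mk}$, we get $\varepsilon_{z,j}-\varepsilon_{z,i} = \varepsilon_x\cdot(W_{\cdot j}-W_{\cdot i})$. With $\varepsilon_x \sim \mathcal{N}(0,\sigma^2 I_d)$, this is a centered Gaussian with variance $s_{ij}^2 = \sigma^2\|W_{\cdot i}-W_{\cdot j}\|^2$. Third, the standard Chernoff bound $\Pr[G \ge t] \le \exp(-t^2/(2s^2))$ for $G\sim\mathcal{N}(0,s^2)$ and $t>0$, applied with $t = z_i - z_j \ge \delta_{\min}(z)$ and using monotonicity of the exponential in $t$, bounds each pair's term by $\exp\bigl(-\delta_{\min}(z)^2/(2\sigma^2\|W_{\cdot i}-W_{\cdot j}\|^2)\bigr)$. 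Summing over the unordered pairs $i<j$ by the union bound gives the stated inequality.

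I do not expect a real obstacle here; two points need explicit comment. The first is the correlation across pairs that the statement flags. I would remark that the union bound requires no independence, so the shared columns of $W$ affect only tightness, not validity. This is also why the result is a sum of per-pair terms rather than a product or a single variance factor. The second is the degenerate case $W_{\cdot i} = W_{\cdot j}$. There the difference is identically zero and $t>0$, so the reversal event is empty; I would adopt the convention $\exp(-\infty)=0$ for that term. I would also note that the variance $2\sigma^2$ in Corollary~\ref{cor:gaussian} is recovered when $\|W_{\cdot i}-W_{\cdot j}\|^2 = 2$, which serves as a sanity check on the constants.
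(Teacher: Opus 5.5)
Your proposal is correct and follows the paper's proof: Theorem~\ref{thm:stability} applied to $z$, the Gaussian law of $\varepsilon_x\cdot(W_{\cdot j}-W_{\cdot i})$ with variance $\sigma^2\|W_{\cdot i}-W_{\cdot j}\|^2$, the Chernoff tail at $t=z_i-z_j\ge\delta_{\min}(z)$, and a union bound over the $\binom{e}{2}$ pairs. One small point: the degenerate case $W_{\cdot i}=W_{\cdot j}$ cannot occur under your distinctness assumption, because equal columns force $z_i=z_j$ and hence $\delta_{\min}(z)=0$.
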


\begin{proof}
The deterministic claim is Theorem~\ref{thm:stability} applied to $z$. For the Gaussian claim, $\varepsilon_{z,k} = \sum_m \varepsilon_{x,m} W_{mk}$, so $\varepsilon_{z,j} - \varepsilon_{z,i} = \sum_m \varepsilon_{x,m}(W_{mj} - W_{mi}) \sim \mathcal{N}(0, \sigma^2\|W_{\cdot j} - W_{\cdot i}\|^2)$. The reversal of pair $(i,j)$ requires this difference to exceed $z_i - z_j \geq \delta_{\min}(z)$; the Chernoff bound and a union bound over the $\binom{e}{2}$ pairs give the result.
\end{proof}

This makes the dependence on the projection explicit: isotropic raw noise becomes anisotropic, correlated score noise whose effective scale per pair is set by the column-difference norms $\|W_{\cdot i} - W_{\cdot j}\|$.

\begin{remark}
Read at the level of the sorted score vector, this analysis directly explains the empirical findings of Section~\ref{sec:noise}: ArrowFlow's noise robustness is governed by $\delta_{\min}/\sigma$, the ratio of the minimum feature gap to the noise scale. On high-dimensional data like Digits ($d = 64$), the pixel intensities create many distinct values with reasonably large gaps, making the encoding robust. The bound also clarifies why the advantage vanishes with polynomial expansion: expanded features have many more near-equal values, shrinking $\delta_{\min}$.
\end{remark}

\subsection{Information Capacity of Ordinal Encoding}

\begin{theorem}[Ordinal Information Capacity]
\label{thm:capacity}
The argsort encoding $\argsort\colon \R^d \to S_d$ partitions the generic points of $\R^d$ (those with distinct coordinates) into exactly $d!$ equivalence classes (the orientation---ascending or descending---only relabels the classes and does not change their number). Writing $\pi=\orderdesc(x)$ for the decreasing order, each class is an open convex cone
\[
C_\pi = \big\{x \in \R^d : x_{\pi(1)} > x_{\pi(2)} > \cdots > x_{\pi(d)}\big\},
\]
called a \emph{permutation cone} (Weyl chamber of the symmetric group). The information content of the encoding is exactly $\log_2(d!)$ bits, satisfying
\begin{equation}
\label{eq:capacity}
\log_2(d!) = d\log_2 d - d\log_2 e + \tfrac{1}{2}\log_2(2\pi d) + O(1/d).
\end{equation}
\end{theorem}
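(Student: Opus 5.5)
I will prove the theorem in three parts: the partition and cone structure, the bit count, and the asymptotic expansion.

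\textbf{Partition into $d!$ cones.} Let $G=\{x\in\R^d: x_i\neq x_j \text{ for all } i\neq j\}$ be the generic points. On $G$ the decreasing order $\orderdesc(x)$ is a well-defined permutation $\pi$, namely the unique one with $x_{\pi(1)}>\cdots>x_{\pi(d)}$. So the fibre of $\orderdesc$ over $\pi$ is exactly $C_\pi$, and the sets $C_\pi$ partition $G$.

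Each $C_\pi$ is an intersection of the $d-1$ open half-spaces $\{x_{\pi(k)}-x_{\pi(k+1)}>0\}$. Hence it is open and convex, and it is closed under multiplication by any $t>0$, so it is a cone.

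Each $C_\pi$ is nonempty: the point $x$ with $x_{\pi(k)}=d-k$ lies in it. Since the $C_\pi$ are disjoint and indexed by $S_d$, there are exactly $d!$ classes.

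For the ascending convention, $\argsort(x)$ is the reversal of $\orderdesc(x)$. Composing with the reversal bijection of $S_d$ only relabels the classes, which settles the parenthetical remark.

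\textbf{Information content.} Here I must fix what ``information content'' means. I will read it as the maximum entropy of the encoding's output, equivalently $\log_2$ of the number of distinguishable outputs. Since the image of $G$ is all of $S_d$, the output takes $d!$ values, so it carries at most $\log_2(d!)$ bits. The bound is attained: under any exchangeable continuous distribution, for example i.i.d.\ $\mathcal{N}(0,1)$ coordinates, each cone has probability $1/d!$ by symmetry. The non-generic set $\R^d\setminus G$ is a finite union of hyperplanes, so it has measure zero and does not affect this. Hence the capacity is exactly $\log_2(d!)$ bits.

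\textbf{Asymptotic expansion.} I will apply Stirling's formula with error term, $\ln d! = d\ln d - d + \tfrac12\ln(2\pi d) + \tfrac{1}{12d} + O(d^{-3})$. Dividing by $\ln 2$ gives
\[
\log_2(d!) = d\log_2 d - d\log_2 e + \tfrac12\log_2(2\pi d) + O(1/d),
\]
using $d/\ln 2 = d\log_2 e$.

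\textbf{Main obstacle.} The only real subtlety is the informal notion of ``information content.'' I will state explicitly that it means the maximal output entropy, attained under exchangeable input distributions. The rest is routine.
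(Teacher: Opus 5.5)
Your proof is correct and follows the paper's route: identify the fibres of the decreasing-order map on the generic set with the $d!$ nonempty open convex cones cut out by strict linear inequalities, discard the measure-zero tie set, take $\log_2$ of the number of classes, and apply Stirling. You are more careful than the paper in two places, which is an improvement. First, you fix ``information content'' as maximal output entropy, attained e.g.\ by i.i.d.\ Gaussian inputs; the precise hypothesis there is ``exchangeable with a density,'' since an exchangeable law with continuous marginals can still be concentrated on the diagonal. Second, you use the remainder $\tfrac{1}{12d}+O(d^{-3})$ to justify the $O(1/d)$ term, which the paper's bare $d!\approx\sqrt{2\pi d}\,(d/e)^d$ leaves implicit.
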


\begin{proof}
Two points $x, y \in \R^d$ satisfy $\argsort(x) = \argsort(y)$ if and only if for all $i \neq j$: $x_i > x_j \Leftrightarrow y_i > y_j$. This defines $d!$ open cones, one per permutation $\pi \in S_d$, each nonempty and convex (defined by strict linear inequalities). The boundary $\{x : x_i = x_j \text{ for some } i \neq j\}$ has Lebesgue measure zero. Since argsort maps each cone to a distinct permutation, the encoding has exactly $\log_2(d!)$ bits of information. The asymptotic expansion follows from Stirling's formula $d! \approx \sqrt{2\pi d}\,(d/e)^d$.
\end{proof}

For $d = 64$ (Digits), the capacity is $\log_2(64!) \approx 296$ bits---substantial, but finite and far below the infinite capacity of real-valued representations. A subtlety matters for the full pipeline: ArrowFlow does not argsort the expanded features directly but a \emph{projected} score vector $z = \phi(x)W \in \R^e$ (Section~\ref{sec:encoding}). The relevant ordinal alphabet is therefore $S_e$, with maximum capacity $\log_2(e!)$ controlled by the embedding dimension $e$---\emph{not} $\log_2(d'!)$. Polynomial expansion from $d$ to $d' = \binom{d+k}{k}-1$ features does \emph{not} enlarge this alphabet when $e$ is fixed; instead it changes the geometry of the map $x \mapsto z$, and hence which permutation cones the data reach. For Digits with degree~1 and $e = d = 64$ the two coincide, which is why $\log_2(64!)$ applies there.

\begin{figure}[t]
\centering
\begin{tikzpicture}[scale=1.0]
\draw[->, thick, gray] (0,0) -- (3.5,0) node[right, font=\scriptsize] {$x_1$};
\draw[->, thick, gray] (0,0) -- (-2.2,-1.3) node[left, font=\scriptsize] {$x_2$};
\draw[->, thick, gray] (0,0) -- (0,3.8) node[above, font=\scriptsize] {$x_3$};
\draw[dashed, thick, blue!50] (-2, -1.2) -- (3.2, 1.9) node[right, font=\tiny, blue!70] {$x_1{=}x_2$};
\draw[dashed, thick, red!50] (-1.5, -2) -- (1, 3.5) node[above, font=\tiny, red!70] {$x_1{=}x_3$};
\draw[dashed, thick, green!60!black] (-2.5, 0.5) -- (3, 0.5) node[right, font=\tiny, green!60!black] {$x_2{=}x_3$};
\node[font=\scriptsize, fill=white, inner sep=2pt, rounded corners=2pt] at (2.3, 2.5) {$x_1{>}x_3{>}x_2$};
\node[font=\scriptsize, fill=white, inner sep=2pt, rounded corners=2pt] at (2.5, -0.3) {$x_1{>}x_2{>}x_3$};
\node[font=\scriptsize, fill=white, inner sep=2pt, rounded corners=2pt] at (-0.7, 3.0) {$x_3{>}x_1{>}x_2$};
\node[font=\scriptsize, fill=white, inner sep=2pt, rounded corners=2pt] at (0.6, 0.7) {$x_2{>}x_1{>}x_3$};
\node[font=\scriptsize, fill=white, inner sep=2pt, rounded corners=2pt] at (-1.5, 1.5) {$x_3{>}x_2{>}x_1$};
\node[font=\scriptsize, fill=white, inner sep=2pt, rounded corners=2pt] at (-1.5, -0.5) {$x_2{>}x_3{>}x_1$};
\fill[black] (1.8, 1.1) circle (2pt);
\node[font=\tiny, right] at (1.9, 1.05) {$(3, 1, 2)$};
\draw[->, thick, black!60] (1.8, 1.1) -- (2.1, 2.1);
\node[font=\tiny, right, black!60] at (2.15, 1.6) {$\argsort = [1,3,2]$};
\draw[thick, dotted, black!40] (1.8, 1.1) circle (0.35);
\node[font=\tiny, black!50, anchor=west] at (2.2, 0.75) {$\delta_{\min}/2$};
\end{tikzpicture}
\caption{\textbf{Permutation cones.} The argsort encoding partitions $\R^d$ into $d!$ convex cones (Weyl chambers), separated by hyperplanes $\{x_i = x_j\}$. Shown for $d=3$: six orderings of three coordinates. All points within a cone map to the same permutation. The dotted circle illustrates the stability radius from Theorem~\ref{thm:stability}: perturbations smaller than $\delta_{\min}(x)/2$ cannot cross a boundary.}
\label{fig:cones}
\end{figure}
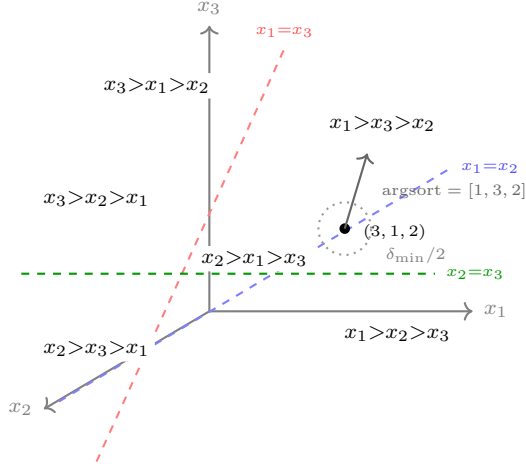

\subsection{Polynomial Noise Amplification}

The following result formalizes the fundamental tension between the information gain of polynomial expansion and its noise cost, providing a theoretical foundation for the empirical trade-off of Section~\ref{sec:tradeoff}.

\begin{proposition}[Polynomial Noise Amplification]
\label{prop:poly-noise}
Let $x \in \R^d$ with $\|x\|_\infty \leq B$, and let $\varepsilon \sim \mathcal{N}(0, \sigma^2 I_d)$. For a degree-$k$ monomial feature $f(x) = x_{j_1} x_{j_2} \cdots x_{j_k}$ with \emph{distinct} indices $j_1, \ldots, j_k$, the perturbation satisfies
\[
\mathrm{Var}\big[f(x+\varepsilon) - f(x)\big] = \sigma^2 \|\nabla f(x)\|^2 + O(\sigma^4) \;=\; \sigma^2 \sum_{s=1}^k \prod_{t \neq s} x_{j_t}^2 \;+\; O(\sigma^4),
\]
which is at most $O(k\, \sigma^2 B^{2(k-1)})$.
Consequently, the effective noise standard deviation per feature in the expanded space grows as $O(\sqrt{k}\, \sigma B^{k-1})$, weakening the stability bound of Theorem~\ref{thm:stability} by a factor of $B^{k-1}$ relative to degree-1 features. This is a \emph{local}, first-order (delta-method) sensitivity statement for a single distinct-index monomial; bridging it to a guarantee on argsort stability over all $\binom{d+k}{k}-1$ expanded features requires an additional union bound (Remark below), and the constant changes for repeated-index monomials such as $x_i^k$. We therefore use it to \emph{help explain}, not to fully prove, the empirical degradation of Section~\ref{sec:tradeoff}.
\end{proposition}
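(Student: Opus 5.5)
The plan is to compute the variance of the perturbation exactly rather than through a Taylor remainder argument. For a product of distinct coordinates this is possible because the noise components are independent. Write $f(x+\varepsilon) = \prod_{t=1}^k (x_{j_t} + \varepsilon_{j_t})$ and expand the product over subsets $S \subseteq \{1,\dots,k\}$:
\[
f(x+\varepsilon) - f(x) = \sum_{\emptyset \neq S} \Big(\prod_{t \in S} \varepsilon_{j_t}\Big)\Big(\prod_{t \notin S} x_{j_t}\Big).
\]
The indices are distinct, so the $\varepsilon_{j_t}$ are independent, centered, with variance $\sigma^2$. Two distinct monomials $\prod_{t\in S}\varepsilon_{j_t}$ and $\prod_{t\in S'}\varepsilon_{j_t}$ with $S \neq S'$ are therefore uncorrelated: some index appears exactly once in their product, and it contributes a zero first moment. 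Each term has mean zero, and the variance of the term indexed by $S$ is $\sigma^{2|S|}\prod_{t\notin S}x_{j_t}^2$.

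Summing over $S$ gives the exact identity
\[
\mathrm{Var}\big[f(x+\varepsilon)-f(x)\big] = \sum_{\emptyset\neq S}\sigma^{2|S|}\prod_{t\notin S}x_{j_t}^2.
\]
The $|S|=1$ terms are exactly $\sigma^2\sum_s\prod_{t\neq s}x_{j_t}^2$. Since $\partial f/\partial x_{j_s} = \prod_{t\neq s}x_{j_t}$, this equals $\sigma^2\|\nabla f(x)\|^2$, which is the leading term in the statement. The remaining terms with $|S|\ge 2$ are bounded by $\sum_{m\ge2}\binom{k}{m}\sigma^{2m}B^{2(k-m)}$. For fixed $k$, $B$ and $\sigma \le 1$ (or $\sigma$ bounded), this is $O(\sigma^4)$, which gives the claimed expansion. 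Alternatively one can cite the delta method, but the exact expansion avoids needing any smoothness remainder estimate.

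For the upper bound, the hypothesis $\|x\|_\infty\le B$ bounds each $\prod_{t\neq s}x_{j_t}^2$ by $B^{2(k-1)}$. The leading term is therefore at most $k\sigma^2B^{2(k-1)}$, which is $O(k\sigma^2B^{2(k-1)})$. Taking square roots gives the effective per-feature standard deviation $O(\sqrt{k}\,\sigma B^{k-1})$.

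For the consequence, compare with the degree-1 case, where the per-feature noise scale is $\sigma$. If the bound of Theorem~\ref{thm:stability} (or Corollary~\ref{cor:gaussian}) is read with the noise scale replaced by this effective scale, the exponent $\delta_{\min}^2/(4\sigma^2)$ becomes $\delta_{\min}^2/(4k\sigma^2B^{2(k-1)})$ up to constants. Equivalently, the gap needed for the same guarantee grows by a factor of about $B^{k-1}$ (times $\sqrt{k}$). This step should be presented only as a heuristic. The expanded-feature noise is neither Gaussian nor independent across features, so the Chernoff and union-bound argument of Corollary~\ref{cor:gaussian} does not transfer rigorously; the statement itself hedges this point, and I will flag it the same way.

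The main obstacle is the uniformity hidden in ``$O(\sigma^4)$''. The higher-order terms are controlled uniformly only when $\sigma$ is bounded, for example $\sigma \le B$. In that regime every term with $|S|=m$ is at most $\sigma^4B^{2(k-2)}(\sigma/B)^{2(m-2)}$, so the remainder is $\le 2^k\sigma^4B^{2(k-2)}$. I will state this explicit constant so that the big-$O$ is meaningful.
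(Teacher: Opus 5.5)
Your proposal is correct and follows the same route as the paper's proof. Both expand $\prod_{t}(x_{j_t}+\varepsilon_{j_t})$, isolate the linear part $\nabla f(x)\cdot\varepsilon$, use independence of the distinct-index noise to get $\sigma^2\sum_s\prod_{t\neq s}x_{j_t}^2$, and bound each product by $B^{2(k-1)}$. Where the paper only asserts that the higher-order terms contribute $O(\sigma^4)$, your subset-orthogonality argument actually proves it. It shows the linear and higher-order parts are uncorrelated, gives the exact variance $\prod_t(x_{j_t}^2+\sigma^2)-\prod_t x_{j_t}^2$, and supplies an explicit remainder constant. You treat the stability consequence as a heuristic, exactly as the paper does.
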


\begin{proof}
Expanding $f(x+\varepsilon) = \prod_{s=1}^k (x_{j_s} + \varepsilon_{j_s})$ and subtracting $f(x)$, the first-order term is $\nabla f(x) \cdot \varepsilon = \sum_{s=1}^k \varepsilon_{j_s} \prod_{t \neq s} x_{j_t}$. Since the indices are distinct, the $\varepsilon_{j_s}$ are independent, so the variance is $\sigma^2 \sum_{s=1}^k \prod_{t \neq s} x_{j_t}^2$. Higher-order cross terms contribute $O(\sigma^4)$. Each product satisfies $\prod_{t \neq s} x_{j_t}^2 \leq B^{2(k-1)}$, giving the stated bound.
\end{proof}

\begin{remark}
For monomials with repeated indices (e.g., $x_i^k$), the variance is $\sigma^2 (\partial f/\partial x_i)^2 = k^2 \sigma^2 x_i^{2(k-1)}$, which is larger by a factor of $k$ than the distinct-index case due to the multiplicity in the partial derivative.
\end{remark}

\begin{remark}
This proposition, combined with Theorem~\ref{thm:stability}, yields a qualitative explanation of the noise--capacity trade-off. At degree $k=1$, argsort is stable when $\sigma < \delta_{\min}/2$; at degree $k=2$, the per-feature noise standard deviation grows by a factor of $B$, so $\delta_{\min}$ in the expanded space must be correspondingly larger to maintain stability. For standardized data ($B \approx 2\text{--}3$), this explains the 2--3$\times$ faster degradation observed empirically with polynomial expansion. A fully rigorous bridge from per-feature variance to the $\ell_\infty$ noise bound required by Theorem~\ref{thm:stability} would additionally require a union bound over all $\binom{d+k}{k}-1$ expanded features, introducing a factor of $\sqrt{\log M}$ where $M$ is the number of features.
\end{remark}

\subsection{Convergence of Permutation-Matrix Accumulation}

The accumulation rule (Eqs.~\ref{eq:accum-update}--\ref{eq:accum-reorder}) can be analyzed as a statistical estimator of the mean-position (Borda) ranking.

\begin{proposition}[Consistency of the Mean-Position Accumulator]
\label{prop:convergence}
Let $\pi_1, \ldots, \pi_T$ be i.i.d.\ samples from a distribution $P$ over $S_V$ with distinct expected positions: $\E[\pi(u)] \neq \E[\pi(v)]$ for all $u \neq v$. Define the \emph{mean-position ranking} $\pi_{\mathrm{mean}} = \argsort\big(\E[\pi(1)], \ldots, \E[\pi(V)]\big)$. Then the filter ordering $r_T$ produced by the accumulation rule satisfies $r_T \to \pi_{\mathrm{mean}}$ almost surely as $T \to \infty$. This is a Borda/mean-rank aggregate; it is \emph{not} in general the footrule median (Theorem~\ref{thm:mle-filter}).
\end{proposition}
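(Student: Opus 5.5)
The plan is to write the accumulator after $T$ updates in closed form, identify the per-item score $\hat{\imath}_T(v)$ of Eq.~\eqref{eq:accum-reorder} as a perturbed empirical mean, and then apply the strong law of large numbers.

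\emph{Closed form.} By Definition~\ref{def:vote-matrix}, row $v$ of $\Phi(\pi_t, r)$ is the indicator of the column $\pi_t(v)$, where $\pi_t(v) = \rankop(\pi_t, v)$. This vote does not depend on the current filter $r$, which removes any feedback between the filter and the evidence it collects. Starting from the identity, after $T$ updates we therefore have
\[
A_T[v,k] = \mathbf{1}[k=v] + \sum_{t=1}^T \mathbf{1}[\pi_t(v)=k].
\]
Each row sum is $T+1$, so
\[
\hat{\imath}_T(v) = \frac{v + \sum_{t=1}^T \pi_t(v)}{T+1}.
\]
Under the permutation-matrix condition of the footnote, column sums also equal $T+1$, so the implementation's normalization gives the same scores.

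\emph{Limit of the scores.} The variables $\pi_t(v)$ are i.i.d. and bounded in $\{1,\dots,V\}$, so the strong law applies. Almost surely, $\frac{1}{T}\sum_t \pi_t(v) \to \E[\pi(v)]$ for each of the finitely many $v$. The prior term satisfies $v/(T+1) \to 0$, and $T/(T+1) \to 1$. Hence, on a single probability-one event, $\hat{\imath}_T(v) \to \mu_v := \E[\pi(v)]$ for every $v$ simultaneously.

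\emph{Stabilization of the order.} This is the main point needing care, because $\argsort$ is discontinuous. The distinctness hypothesis handles it via Theorem~\ref{thm:stability}. Set $\delta = \min_{u\neq v}|\mu_u-\mu_v| > 0$. On the good event there is a finite, sample-dependent $T_0$ such that $\max_v|\hat{\imath}_T(v)-\mu_v| < \delta/2$ for all $T \ge T_0$. Theorem~\ref{thm:stability}, applied to the vector $\mu$ with perturbation $\hat{\imath}_T-\mu$, then gives $r_T = \argsort(\hat{\imath}_T) = \argsort(\mu) = \pi_{\mathrm{mean}}$ for all $T\ge T_0$. So $r_T$ is eventually equal to $\pi_{\mathrm{mean}}$, which is stronger than convergence. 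Ties among the $\hat{\imath}_T$ before $T_0$ do not matter.

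\emph{Remaining remarks.} A learning rate $\eta$ that scales $\Phi$ only rescales the vote counts relative to the unit prior. The score then has the form $(v+\eta\sum_t\pi_t(v))/(1+\eta T)$, and the same argument goes through.

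The statement that $\pi_{\mathrm{mean}}$ is a Borda aggregate and not the footrule median is a remark rather than part of the convergence claim. Its justification is deferred to Theorem~\ref{thm:mle-filter}; one could check it with a small three-item distribution where the mean-position order differs from the order of position medians.
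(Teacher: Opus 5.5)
Your proof is correct and follows the paper's argument. Both use the same closed form $\hat{\imath}_T(v) = \bigl(v + \sum_{t} \pi_t(v)\bigr)/(T+1)$ from the identity-initialized accumulator, apply the strong law item by item, and then use distinctness of the $\E[\pi(v)]$ to make $\argsort$ eventually constant. Your appeal to Theorem~\ref{thm:stability} at radius $\delta/2$ is a quantified version of the paper's ``argsort is locally constant on the set of distinct coordinates'' step, and your remark that $\Phi$ does not depend on the current filter is what the paper's closed form tacitly uses.

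One caution, on your closing aside only. The footrule median solves the assignment problem with costs $c(v,k)=\sum_t |\pi_t(v)-k|$, and this is not in general the order of per-item position medians. A check should therefore compute it directly, as in the paper's profile $ABC, ABC, BCA, CBA$: there the mean positions give $BAC$, but the footrule median is $ABC$.
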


\begin{proof}
The implementation initializes $A$ to the identity (a prior encoding the current filter ordering). After $T$ updates, $A[v, k] = \mathbf{1}[v{=}k] + \sum_{t=1}^T \mathbf{1}[\pi_t \text{ places } v \text{ at position } k]$. The weighted-average estimate is
\[
\hat{\imath}_T(v) = \frac{\sum_k k \cdot A[v,k]}{\sum_k A[v,k]} = \frac{v + \sum_{t=1}^T \pi_t(v)}{1 + T},
\]
which is a convex combination of the prior position $v$ and the sample mean $(1/T)\sum_{t=1}^T \pi_t(v)$. As $T \to \infty$, the prior weight $1/(1+T) \to 0$, so $\hat{\imath}_T(v) \to \E[\pi(v)]$ almost surely by the strong law of large numbers. Since argsort is locally constant on the open set $\{y \in \R^V : y_u \neq y_v\ \forall\, u \neq v\}$, and the expected positions are distinct by assumption, eventually $\hat{\imath}_T$ lies in a neighborhood where argsort is constant, giving $r_T = \argsort(\hat{\imath}_T(1), \ldots, \hat{\imath}_T(V)) \to \pi_{\mathrm{mean}}$ almost surely.
\end{proof}

\begin{remark}
For the Mallows model $P(\pi \mid \pi^*, \lambda) \propto \exp(-\lambda \cdot d_K(\pi, \pi^*))$ \citep{mallows1957nonnull} (originally defined with Kendall tau $d_K$; footrule variants also apply), the mean-position ranking coincides with the modal permutation, $\pi_{\mathrm{mean}} = \pi^*$, when $\lambda$ is sufficiently large. By the central limit theorem, $\hat{\imath}_T(v)$ has standard deviation $\sigma_v / \sqrt{T}$ where $\sigma_v^2 = \mathrm{Var}[\pi(v)] \leq V^2/4$. A union bound over all $V$ items gives that $\max_v |\hat{\imath}_T(v) - \E[\pi(v)]| < \gamma/2$ with high probability when $T = \Omega(V^2 \log V / \gamma^2)$, where $\gamma = \min_{u \neq v}|\E[\pi(u)] - \E[\pi(v)]|$ is the minimum gap between expected positions.
\end{remark}

\subsection{Multi-View Ensemble Error Bound}

\begin{theorem}[Ensemble Error Bound]
\label{thm:ensemble}
Let $K$ (odd) views produce independent binary predictions, each with error probability $p < 1/2$. The majority-vote error satisfies
\[
P_{\mathrm{err}}^{(\mathrm{MV})} \;\leq\; \exp\!\Big(-2K\big(\tfrac{1}{2} - p\big)^2\Big),
\]
which decreases exponentially in $K$.
\end{theorem}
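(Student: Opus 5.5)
The plan is to reduce the majority-vote error to a tail event for a sum of independent Bernoulli variables and then apply Hoeffding's inequality. For each view $k=1,\dots,K$, let $X_k=\mathbf{1}[\hat{y}_k \neq y]$ be the indicator that view $k$ errs. By hypothesis the $X_k$ are independent Bernoulli($p$) variables, so $S_K=\sum_{k=1}^K X_k$ has mean $Kp$. In the binary setting with $K$ odd, no ties are possible. The majority vote is therefore wrong exactly when more than half the views are wrong, i.e.\ when $S_K \ge (K+1)/2$, and this event is contained in $\{S_K \ge K/2\}$. So it suffices to bound
\[
\Pr\big[S_K - Kp \ge K(\tfrac12 - p)\big].
\]

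Since $p<1/2$, the deviation $t = K(\tfrac12-p)$ is strictly positive. Each $X_k$ lies in $[0,1]$, so Hoeffding's inequality for sums of independent bounded variables gives
\[
\Pr[S_K - \E S_K \ge t] \le \exp\!\big(-2t^2/K\big).
\]
Substituting $t = K(\tfrac12-p)$ yields $\exp(-2K(\tfrac12-p)^2)$, which is exactly the claimed bound. If we want to be self-contained, we can recall the Chernoff derivation: apply Markov's inequality to $e^{\lambda S_K}$, use independence to factorize the moment generating function, bound each factor by Hoeffding's lemma $\E e^{\lambda(X_k-p)} \le e^{\lambda^2/8}$, and optimize at $\lambda = 4t/K$.

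The argument is routine, and there is no real technical obstacle. The main point needing care is the reduction step. Oddness of $K$ together with binary predictions is what makes the ensemble's error event exactly $\{S_K > K/2\}$, with no tie-breaking ambiguity. For multiclass votes with plurality, a wrong plurality does not require a majority of wrong views, so the inclusion would fail; the hypotheses exclude this case.

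The independence assumption is used only to invoke Hoeffding's inequality. We will note that with correlated views (as is actually the case across projections of the same data) the bound is heuristic, consistent with the paper's framing that diverse projections only approximate independence.
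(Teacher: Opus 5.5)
Your proof is correct and matches the paper's argument up to a mirror image. The paper takes $X_k$ to indicate a \emph{correct} view and applies Hoeffding to the lower tail $\Pr[\bar X \le 1/2]$, whereas you use error indicators and the upper tail $\Pr[S_K \ge K/2]$, with the same deviation $K(\tfrac12 - p)$ and the same exponent.

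One correction to your closing aside: the containment does \emph{not} fail for multiclass plurality voting. If $S_K < K/2$, the true class receives more than $K/2$ votes and wins outright, so the error event still lies in $\{S_K \ge K/2\}$. The bound therefore holds verbatim for any number of classes, any tie-breaking, and even $K$. Binarity and oddness only upgrade the containment to an equality; what fails in the multiclass case is the converse, which merely makes the bound looser.
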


\begin{proof}
Let $X_k \in \{0,1\}$ indicate a correct prediction by view $k$, with $\E[X_k] = 1-p > 1/2$. The majority vote errs when $\sum_k X_k \leq K/2$, i.e., $\bar{X} \leq 1/2$. Since $\E[\bar{X}] = 1-p$, Hoeffding's inequality gives $\Pr[\bar{X} \leq 1/2] \leq \exp(-2K(1/2 - p)^2)$.
\end{proof}

\begin{remark}
The independence assumption is critical and is the primary reason ArrowFlow uses diverse projection strategies. If all views used the same projection, their errors would be perfectly correlated and the ensemble would provide no benefit. Under full independence, $K = 7$ views with per-view error $p = 0.20$ would yield a 6$\times$ error reduction (exact binomial). Empirically, ArrowFlow's 7 views reduce error by $1.3$--$3.3\times$ (Table~\ref{tab:progression}), which is less than the independent-view prediction, indicating that projection diversity achieves only partial decorrelation. The gap between theoretical and observed reduction quantifies the residual correlation among views---an important factor when choosing the number of views. This bound is therefore best read as an \emph{idealized} (independent, binary) benchmark: under pairwise error correlation $\rho > 0$ the effective number of independent voters is roughly $K/(1 + (K-1)\rho)$, so the exponential rate degrades accordingly, and the tasks here are multiclass rather than binary. A direct measurement of inter-view error correlation would turn this from generic theory into evidence about the actual architecture; we leave that to future work.
\end{remark}

\subsection{Social Choice Foundations: From Arrow's Theorem to Learning Guarantees}
\label{sec:social-choice-theory}

Section~\ref{sec:arrow} presented the Arrow connection as a design principle.  Here we make it precise, which requires one clarification at the outset.  The \emph{forward} pass of a ranking layer orders filters by their \emph{independent} distances $\{D_j\}$, so the relative order of filters $a$ and $b$ depends only on $D_a$ and $D_b$; the forward layer therefore \emph{satisfies} pairwise independence of irrelevant alternatives, and its nonlinearity originates in the piecewise-constant cone geometry of $\argsort$ (Theorem~\ref{thm:capacity}), not in any IIA violation.  The genuine social-choice content lives in the \emph{learning rule}: the accumulation update (Eqs.~\ref{eq:accum-update}--\ref{eq:accum-reorder}) aggregates the input rankings a filter accumulates into a consensus ordering, and \emph{this} aggregation is the object to which Arrow's and Gibbard--Satterthwaite's theorems apply.  We state three results that characterize \emph{what} the aggregation estimates (Theorem~\ref{thm:mle-filter} and Proposition~\ref{prop:exp-consistency}), \emph{how far} a layer's output can be moved by an adversarial input perturbation (Theorem~\ref{thm:manipulability}), and \emph{that} the aggregation provably violates IIA in the exact sense of Arrow's theorem (Proposition~\ref{prop:iia-index}).

\subsubsection{Filter Learning as Maximum Likelihood Estimation}

The \emph{footrule-Mallows model} \citep{mallows1957nonnull} places an exponential-family distribution on $S_V$:
\begin{equation}
\label{eq:mallows}
P(\pi \mid \pi^*, \lambda) \;=\; \frac{1}{Z(\lambda)}\,\exp\!\bigl(-\lambda\, d_F(\pi, \pi^*)\bigr),
\end{equation}
where $d_F$ is Spearman's footrule, $\pi^*$ is the modal (central) permutation, $\lambda > 0$ is a concentration parameter, and $Z(\lambda) = \sum_{\sigma \in S_V} \exp(-\lambda\, d_F(\sigma, \pi^*))$ is the normalizing constant.  The model is a natural generalization of the Gaussian: $\lambda$ controls dispersion and the mode $\pi^*$ is the permutation analogue of the mean.

\begin{theorem}[Filter Learning as Consistent Rank Aggregation]
\label{thm:mle-filter}
Let $\pi_1, \dots, \pi_T \in S_V$ be i.i.d.\ samples from the footrule-Mallows model $P(\cdot \mid \pi^*, \lambda)$ with fixed $\lambda > 0$.
\begin{enumerate}[label=(\roman*)]
\item The maximum-likelihood estimator of the central permutation is the \emph{footrule median}
$\hat{\pi}^*_{\mathrm{MLE}} = \arg\min_{\sigma \in S_V} \sum_{t=1}^{T} d_F(\pi_t, \sigma)$.
\item ArrowFlow's accumulation rule (Eqs.~\ref{eq:accum-update}--\ref{eq:accum-reorder}) instead computes the \emph{positional} (Borda) aggregate
$r_T = \argsort\!\bigl(\tfrac1T\sum_t \pi_t(1), \dots, \tfrac1T\sum_t \pi_t(V)\bigr)$.
\item Both estimators are consistent for the same target: $\hat{\pi}^*_{\mathrm{MLE}} \to \pi^*$ and $r_T \to \pi^*$ almost surely as $T \to \infty$.
\end{enumerate}
\end{theorem}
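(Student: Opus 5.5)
The plan is to reduce everything to two ingredients. The first is the right-invariance of Spearman's footrule. The second is a pairwise swap (coupling) argument for the Borda part. Throughout we write $\pi(v)=\operatorname{pos}_\pi(v)$ and use
\[
d_F(\pi,\sigma)=\sum_v|\pi(v)-\sigma(v)|.
\]

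\textbf{Part (i).} Relabelling items by a fixed permutation is a bijection of $S_V$ that preserves $d_F$, i.e.\ $d_F(\pi\circ\tau,\sigma\circ\tau)=d_F(\pi,\sigma)$. Hence $Z(\lambda)=\sum_\sigma e^{-\lambda d_F(\sigma,\pi^*)}$ does not depend on $\pi^*$. The log-likelihood is
\[
-\lambda\sum_t d_F(\pi_t,\pi^*)-T\log Z(\lambda),
\]
so maximizing it over $\pi^*$ is the same as minimizing $\sum_t d_F(\pi_t,\sigma)$, which gives the footrule median.

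\textbf{Part (ii).} Here I would reuse the computation in the proof of Proposition~\ref{prop:convergence}. After $T$ updates from the identity prior,
\[
\hat\imath_T(v)=\frac{v+\sum_t\pi_t(v)}{1+T}.
\]
This differs from the sample mean $\tfrac1T\sum_t\pi_t(v)$ only by an $O(V/T)$ prior term. So $r_T$ is the positional (Borda) aggregate, with the prior acting merely as a tie-breaker. Once the population means are shown to be distinct in part (iii), this term is asymptotically irrelevant. I would state (ii) in this ``up to the vanishing prior'' sense.

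\textbf{Part (iii), MLE.} $S_V$ is finite, so the strong law of large numbers gives, simultaneously for all $\sigma$ and almost surely,
\[
\tfrac1T\sum_t d_F(\pi_t,\sigma)\to \E\, d_F(\pi,\sigma).
\]
It then suffices that $\pi^*$ is the unique minimizer of $\E\, d_F(\pi,\sigma)$. Because $Z$ is constant in the centre, the difference $\lambda\bigl(\E\, d_F(\pi,\sigma)-\E\, d_F(\pi,\pi^*)\bigr)$ equals the Kullback--Leibler divergence $\mathrm{KL}\bigl(P(\cdot\mid\pi^*)\,\|\,P(\cdot\mid\sigma)\bigr)$. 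This divergence is strictly positive for $\sigma\neq\pi^*$, since the two distributions have different unique modes and so are distinct. A strict gap on a finite set implies that the empirical argmin eventually equals $\pi^*$.

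\textbf{Part (iii), Borda.} By Proposition~\ref{prop:convergence} it is enough to show that $\E[\pi(u)]<\E[\pi(v)]$ whenever $\pi^*(u)<\pi^*(v)$. This also supplies the distinctness hypothesis, and it yields $\pi_{\mathrm{mean}}=\pi^*$. This is the main step.

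By invariance, take $\pi^*=\mathrm{id}$ and $u<v$. Pair each $\pi$ with $\pi(u)=b>a=\pi(v)$ with $\pi'$, the same ranking with the positions of $u$ and $v$ swapped. All other terms of $d_F(\cdot,\mathrm{id})$ are unchanged. The Monge (rearrangement) inequality $|b-u|+|a-v|\ge|a-u|+|b-v|$ for $u<v$, $a<b$ then gives $d_F(\pi',\mathrm{id})\le d_F(\pi,\mathrm{id})$, hence $P(\pi')\ge P(\pi)$. Summing over pairs,
\[
\E[\pi(v)-\pi(u)]=\sum_{\text{pairs}}(b-a)\bigl(P(\pi')-P(\pi)\bigr)\ge 0.
\]
Strictness comes from one explicit pair: the transposition of $u$ and $v$ has footrule $2(v-u)>0$ and is paired with $\mathrm{id}$, which has footrule $0$.

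The delicate point is setting up this pairing as a genuine bijection between the two halves $\{\pi(u)>\pi(v)\}$ and $\{\pi(u)<\pi(v)\}$, and checking the Monge inequality in every relative configuration of $a,b,u,v$. I would do the latter by a short case check or via convexity of $|\cdot|$. Note that this argument gives $\pi_{\mathrm{mean}}=\pi^*$ for every $\lambda>0$.
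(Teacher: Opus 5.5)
Your proof is correct. Parts (i)--(ii) follow the paper; the genuine difference is in part (iii), where you prove what the paper only assumes.

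\textbf{Part (ii).} Your claim that the identity prior acts ``merely as a tie-breaker'' is slightly too strong. It shifts each score by $\bigl(v-\tfrac1T\sum_t\pi_t(v)\bigr)/(1+T)$. At finite $T$ this can therefore reorder two items whose empirical mean positions differ by less than $(V-1)/T$, not only exactly tied ones. This is harmless in the limit because the population gaps are strictly positive, and your ``up to the vanishing prior'' reading matches the paper's ``once the identity prior is amortized.''

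\textbf{Part (iii).} The paper does not prove that expected positions are ordered like the mode. It imposes $\argsort(\E[\pi(\cdot)])=\pi^*$ as a regularity assumption and says it holds for sufficiently large $\lambda$, explicitly without proof. It also asserts that ``the same symmetry'' makes $\pi^*$ the unique population footrule median, then appeals to $M$-estimator consistency.

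You prove both facts instead.
\begin{itemize}
\item The identity $\lambda\bigl(\E\,d_F(\pi,\sigma)-\E\,d_F(\pi,\pi^*)\bigr)=\mathrm{KL}\bigl(P(\cdot\mid\pi^*)\,\|\,P(\cdot\mid\sigma)\bigr)>0$ establishes uniqueness of the population median. It also gives exactly the positivity $\Delta_\lambda>0$ that Proposition~\ref{prop:exp-consistency} takes for granted.
\item The transposition coupling with the Monge inequality yields $\E[\pi(u)]<\E[\pi(v)]$ whenever $\pi^*(u)<\pi^*(v)$, for every $\lambda>0$.
\end{itemize}
So the paper's regularity hypothesis becomes a theorem. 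The paper's route is shorter but conditional; yours makes (iii) unconditional at the cost of one elementary lemma.

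The step you flag as delicate is in fact immediate. The map $\pi\mapsto\pi\circ(u\,v)$ is a fixed-point-free involution exchanging $\{\pi(u)>\pi(v)\}$ and $\{\pi(u)<\pi(v)\}$. The Monge inequality follows from convexity of $|\cdot|$: we have $(a-u)+(b-v)=(a-v)+(b-u)$, with $a-v\le\min(a-u,b-v)$ and $b-u\ge\max(a-u,b-v)$, so no case check is needed.
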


\begin{proof}
\emph{(i)} The log-likelihood is $\ell(\sigma) = -\lambda \sum_t d_F(\pi_t, \sigma) - T\log Z(\lambda)$; since $\lambda > 0$ and $Z(\lambda)$ is independent of $\sigma$, maximizing $\ell$ is equivalent to minimizing $\sum_t d_F(\pi_t, \sigma)$, the footrule median (the footrule analogue of Kemeny aggregation \citep{dwork2001rank}).
\emph{(ii)} is exactly Eq.~\eqref{eq:accum-reorder} once the identity prior is amortized (Proposition~\ref{prop:convergence}).
\emph{(iii)} By the strong law, $\tfrac1T\sum_t \pi_t(v) \to \E[\pi(v)]$ a.s.\ for each item $v$. We assume the standard regularity condition that the expected positions are ordered consistently with the mode, $\argsort(\E[\pi(\cdot)]) = \pi^*$; this holds for the footrule-Mallows model at sufficiently large concentration $\lambda$ (intuitively, the symmetric unimodal law concentrates each item's position around its modal rank, though we do not prove the monotonicity for all $\lambda$ here). Under this condition $\pi_{\mathrm{mean}} = \argsort(\E[\pi(\cdot)]) = \pi^*$, and Proposition~\ref{prop:convergence} gives $r_T \to \pi_{\mathrm{mean}} = \pi^*$. The same symmetry makes $\pi^*$ the unique \emph{population} footrule median, so the empirical footrule-risk minimizer satisfies $\hat{\pi}^*_{\mathrm{MLE}} \to \pi^*$ by standard $M$-estimator consistency on the finite set $S_V$. Both estimators thus share the limit $\pi^*$, though they generally differ at finite $T$ (Borda mean-position vs.\ footrule median).
\end{proof}

\begin{remark}
The distinction in parts (i)--(ii) matters: the accumulation rule is \emph{not} the footrule-median MLE at finite $T$---it is the cheaper $O(VT)$ positional statistic---but it is consistent for the same modal permutation. This is the honest version of the folklore ``Borda approximates Kemeny'': they agree in the limit, not in general. A minimal example makes the finite-sample gap concrete. For the profile $ABC,\,ABC,\,BCA,\,CBA$ over three items, the mean positions are $(2.00,\,1.75,\,2.25)$, so the accumulator returns $BAC$; the footrule median, however, is $ABC$ (footrule objective $8$ versus $10$ for $BAC$). The two disagree even though the mean positions are distinct, so this is not a tie-breaking artifact.
\end{remark}

\begin{proposition}[Exponential Consistency]
\label{prop:exp-consistency}
Under the footrule-Mallows model with fixed $\lambda > 0$, the MLE identifies the true mode with error probability decaying exponentially in $T$:
\[
\Pr\bigl[\hat{\pi}^*_{\mathrm{MLE}} \neq \pi^*\bigr] \;\le\; (V!-1)\,\exp\!\Bigl(-\frac{T\,\Delta_\lambda^2}{2\,\mathrm{diam}^2}\Bigr),
\]
where $\mathrm{diam} = \lfloor V^2/2\rfloor$ is the footrule diameter (Lemma~\ref{lem:fk-equiv}) and
$\Delta_\lambda = \min_{\sigma \neq \pi^*}\bigl(\E\,d_F(\pi,\sigma) - \E\,d_F(\pi,\pi^*)\bigr) > 0$ is the population footrule-risk gap.
\end{proposition}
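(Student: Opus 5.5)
The plan is a union bound over the $V!-1$ competitors of $\pi^*$, with a Hoeffding bound for each one. The MLE is the empirical footrule median (Theorem~\ref{thm:mle-filter}(i)). So the event $\hat{\pi}^*_{\mathrm{MLE}} \neq \pi^*$ requires that some $\sigma \neq \pi^*$ has empirical risk no larger than that of $\pi^*$. We treat ties pessimistically, counting a tie as an error.

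Fix $\sigma \neq \pi^*$ and define
\[
Y_t(\sigma) = d_F(\pi_t,\sigma) - d_F(\pi_t,\pi^*).
\]
The $Y_t(\sigma)$ are i.i.d.\ in $t$, and by definition of $\Delta_\lambda$ their mean $\mu_\sigma$ satisfies $\mu_\sigma \ge \Delta_\lambda$. The bad event for $\sigma$ is $\frac1T\sum_t Y_t(\sigma) \le 0$, which implies $\frac1T\sum_t Y_t(\sigma) - \mu_\sigma \le -\Delta_\lambda$.

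For the range of $Y_t$ we use the triangle inequality for the metric $d_F$. It gives $|Y_t(\sigma)| \le d_F(\sigma,\pi^*) \le \mathrm{diam}$ by Lemma~\ref{lem:fk-equiv}. So each $Y_t$ lies in an interval of length $2\,\mathrm{diam}$. Hoeffding's inequality then yields
\[
\Pr\Bigl[\tfrac1T\textstyle\sum_t Y_t(\sigma) \le 0\Bigr] \le \exp\!\Bigl(-\frac{2T\Delta_\lambda^2}{(2\,\mathrm{diam})^2}\Bigr) = \exp\!\Bigl(-\frac{T\Delta_\lambda^2}{2\,\mathrm{diam}^2}\Bigr),
\]
which is exactly the stated exponent. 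A union bound over the $V!-1$ permutations $\sigma\neq\pi^*$ finishes the inequality. One could sharpen this by using the $\sigma$-dependent range $2d_F(\sigma,\pi^*)$ and gap $\mu_\sigma$, but we do not need that.

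The main obstacle is the claim $\Delta_\lambda>0$, meaning that $\pi^*$ is the unique population footrule median under the Mallows law. Theorem~\ref{thm:mle-filter}(iii) only asserts this "by symmetry". To prove it, I would first try a reflection/coupling argument. Take $\sigma\neq\pi^*$ and a transposition $\tau$ that moves $\sigma$ toward $\pi^*$. Pair each $\pi$ with $\tau\pi$ (or $\pi\tau$) and use that the Mallows weight favors the member of each pair that is closer to $\pi^*$. Then show that $\E\,d_F(\pi,\sigma) - \E\,d_F(\pi,\sigma')$ is nonnegative, where $\sigma'$ is $\sigma$ moved one step toward $\pi^*$, and that it is strictly positive for $\lambda>0$. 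Iterating along a path to $\pi^*$ gives strict positivity of every gap. If this monotonicity cannot be established cleanly for all $\lambda$, the fallback is to state $\Delta_\lambda>0$ as the same regularity assumption already invoked in Theorem~\ref{thm:mle-filter}(iii). The concentration bound above holds whenever $\Delta_\lambda>0$, and it is vacuous but still true otherwise.
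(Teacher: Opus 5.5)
Your proposal is correct and is essentially the paper's proof: Hoeffding for the i.i.d.\ differences $d_F(\pi_t,\sigma)-d_F(\pi_t,\pi^*)$ (range $2\,\mathrm{diam}$, mean at least $\Delta_\lambda$, ties counted as errors), followed by a union bound over the $V!-1$ competitors. The paper only asserts $\Delta_\lambda>0$ from the ``symmetric unimodal'' Mallows law, so your coupling sketch goes beyond it, and it does go through: take an inverted pair $a,b$ of $\sigma$ with $\pi^*(a)<\pi^*(b)$, pair each $\pi$ with the permutation swapping the positions of items $a$ and $b$, and apply $|x-u|+|y-w|\le|x-w|+|y-u|$ (for $x<y$, $u<w$) once with $(u,w)=(\sigma(b),\sigma(a))$ and once with $(u,w)=(\pi^*(a),\pi^*(b))$, with strictness coming from any $\pi$ having $\pi(a)=1$, $\pi(b)=V$ (this also makes moot your fallback remark, which holds only for $\Delta_\lambda=0$, since a negative gap would make the bound false rather than vacuous).
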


\begin{proof}
The MLE prefers some $\sigma \neq \pi^*$ only if $\tfrac1T\sum_t\bigl[d_F(\pi_t,\sigma) - d_F(\pi_t,\pi^*)\bigr] \le 0$. Each summand lies in $[-\mathrm{diam}, \mathrm{diam}]$ (range $2\,\mathrm{diam}$) and has mean $\ge \Delta_\lambda > 0$ (positivity holds because $\pi^*$ is the unique population minimizer of the footrule risk under the symmetric unimodal Mallows law). Hoeffding's inequality for a sum of $T$ i.i.d.\ variables of range $2\,\mathrm{diam}$ bounds this event by $\exp\bigl(-2T\Delta_\lambda^2/(2\,\mathrm{diam})^2\bigr) = \exp\bigl(-T\Delta_\lambda^2/(2\,\mathrm{diam}^2)\bigr)$; a union bound over the $V!-1$ competing permutations gives the claim.
\end{proof}

\begin{remark}
A Cram\'er--Rao bound does \emph{not} apply here: $\pi^*$ is a discrete parameter, so there is no Fisher information in the classical sense, and earlier drafts that invoked it were mistaken. Exponential consistency is the correct optimality notion---the misidentification probability vanishes geometrically at a rate governed by the footrule-risk gap $\Delta_\lambda$, which grows with the concentration $\lambda$. This connects ArrowFlow's learning rule to a 60-year tradition in rank aggregation: the Kemeny rule is the MLE for the Kendall-Mallows model \citep{conitzer2005voting}, and Theorem~\ref{thm:mle-filter} is the footrule analogue, with the accumulation rule reaching the same limit through cheaper positional aggregation.
\end{remark}

\subsubsection{Adversarial Perturbation Bounds}

The Gibbard--Satterthwaite theorem \citep{gibbard1973manipulation,satterthwaite1975strategy} states that every non-dictatorial, surjective social choice function over three or more alternatives is \emph{manipulable}.  For ArrowFlow this is a motivating analogy rather than a proof device: it suggests that any non-trivial layer must admit input perturbations that change its output.  We make this quantitative with an elementary two-sided bound that follows directly from the metric structure of the footrule distance, with no appeal to social-choice axioms.

\begin{theorem}[Adversarial Perturbation Bounds]
\label{thm:manipulability}
Fix a layer with filters $\phi_1, \dots, \phi_N$ and an input $\pi \in S_V$ whose filter-distances $D_j = d_F(\pi, \phi_j)$ are distinct.  Let $D_{(1)} < D_{(2)} < \cdots$ be the order statistics, $g_{\min}(\pi) = \min_k\bigl(D_{(k+1)} - D_{(k)}\bigr)$ the smallest consecutive gap, and
\[
\Delta^*(\pi) = \min\{\, d_F(\pi, \pi') : f(\pi') \neq f(\pi)\,\}
\]
the minimum footrule perturbation of the input that changes the output ranking.  Then
\[
\tfrac12\,g_{\min}(\pi) \;\le\; \Delta^*(\pi) \;\le\; D_{(2)}(\pi).
\]
\end{theorem}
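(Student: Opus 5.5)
The plan is to derive both bounds from the fact that $d_F$ is a metric on $S_V$ (Lemma~\ref{lem:fk-equiv}), applied to the forward map $f(\pi) = \argsort(D_1,\dots,D_N)$ with $D_j = d_F(\pi,\phi_j)$. The lower bound is a Lipschitz/stability argument in the spirit of Theorem~\ref{thm:stability}. The upper bound comes from exhibiting one explicit perturbation that changes the output.

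\textbf{Lower bound.} First, for any $\pi'$ and any filter $j$, the triangle inequality gives
\[
\bigl|d_F(\pi',\phi_j) - d_F(\pi,\phi_j)\bigr| \le d_F(\pi,\pi').
\]
So the distance vector $D' = (d_F(\pi',\phi_j))_j$ satisfies $\|D' - D\|_\infty \le d_F(\pi,\pi')$. Now suppose $d_F(\pi,\pi') < \tfrac12 g_{\min}(\pi)$. Every pair of filters with $D_a < D_b$ has $D_b - D_a \ge g_{\min}$, since the gap between any two order statistics is at least the smallest consecutive gap. Then
\[
D'_b - D'_a \ge D_b - D_a - 2\,d_F(\pi,\pi') > g_{\min} - g_{\min} = 0 .
\]
This is exactly the computation in the proof of Theorem~\ref{thm:stability}, with $D$ as the sorted vector and $d_F(\pi,\pi')$ playing the role of $\|\varepsilon\|_\infty$. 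Hence every pairwise order is preserved strictly. In particular no new ties appear, so the index tie-breaking convention is never invoked, and $f(\pi') = f(\pi)$. Taking the contrapositive, any $\pi'$ with $f(\pi') \ne f(\pi)$ has $d_F(\pi,\pi') \ge \tfrac12 g_{\min}(\pi)$, which gives $\Delta^*(\pi) \ge \tfrac12 g_{\min}(\pi)$.

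\textbf{Upper bound.} Take $\pi' = \phi_{(2)}$, the filter attaining the second-smallest distance $D_{(2)}$ (this needs $N \ge 2$, implicit in the definition of $g_{\min}$). Then $d_F(\pi,\pi') = D_{(2)}$, and the new distance to $\phi_{(2)}$ is $0$. Because the original distances $D_{(1)} \ne D_{(2)}$ are distinct, the filters $\phi_{(1)}$ and $\phi_{(2)}$ must be distinct permutations. Since $d_F$ is a metric, $d_F(\phi_{(2)},\phi_{(1)}) > 0$. So under $\pi'$, filter $\phi_{(2)}$ is strictly closer than $\phi_{(1)}$, and the relative order of these two filters is reversed. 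Therefore $f(\pi') \ne f(\pi)$, and $\Delta^*(\pi) \le D_{(2)}(\pi)$.

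\textbf{Expected obstacle.} The only delicate point is bookkeeping around ties. In the lower bound, the strictness of $d_F(\pi,\pi') < \tfrac12 g_{\min}$ is what rules out ties in $D'$, so $f(\pi')$ is determined without tie-breaking. In the upper bound, the distinct-distance hypothesis is what forces $\phi_{(1)} \neq \phi_{(2)}$ as permutations. I would state both observations explicitly. I would also remark that the minimum defining $\Delta^*$ is attained because $S_V$ is finite, and that the set over which it is taken is nonempty by the upper-bound construction.
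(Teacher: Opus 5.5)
Your proof is correct and follows the paper's argument. The triangle inequality $|d_F(\pi',\phi_j)-d_F(\pi,\phi_j)|\le d_F(\pi,\pi')$ rules out any order crossing when $d_F(\pi,\pi')<\tfrac12 g_{\min}(\pi)$, and the witness $\pi'=\phi_{(2)}$ gives the upper bound. You note that distinct distances force $\phi_{(1)}\neq\phi_{(2)}$, so $\phi_{(2)}$ becomes \emph{strictly} nearer than $\phi_{(1)}$; this closes the upper bound slightly more cleanly than the paper's ``weakly nearest'' phrasing and makes tie-breaking irrelevant.
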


\begin{proof}
\textit{Lower bound.}  For any $\pi'$, the triangle inequality for $d_F$ gives, for every filter $j$,
\[
\bigl|D_j(\pi') - D_j(\pi)\bigr| = \bigl|d_F(\pi', \phi_j) - d_F(\pi, \phi_j)\bigr| \le d_F(\pi, \pi').
\]
If $d_F(\pi, \pi') < \tfrac12 g_{\min}$, then every $D_j$ moves by less than $\tfrac12 g_{\min}$, so for any consecutive pair $D_{(k)} < D_{(k+1)}$ (gap $\ge g_{\min}$) we still have $D_{(k)}(\pi') < D_{(k+1)}(\pi')$.  No adjacent pair in the sorted-distance order crosses, hence $f(\pi') = \argsort(D) $ is unchanged and $\Delta^*(\pi) \ge \tfrac12 g_{\min}(\pi)$.

\textit{Upper bound.}  Let $\phi_{(2)}$ be the second-nearest filter.  Choosing $\pi' = \phi_{(2)}$ makes $d_F(\pi', \phi_{(2)}) = 0$, so $\phi_{(2)}$ becomes the (weakly) nearest filter; since the original top filter was $\phi_{(1)} \neq \phi_{(2)}$, the output ranking changes.  Therefore $\Delta^*(\pi) \le d_F(\pi, \phi_{(2)}) = D_{(2)}(\pi)$.
\end{proof}

\begin{remark}
Both bounds were confirmed numerically (they held in all $300$ random instances tested at $V = 6$, $N = 4$).  The lower bound is a certified-robustness guarantee: a margin $g_{\min}$ between consecutive filter-distances guarantees a footrule ball of radius $g_{\min}/2$ in which the output is constant---the permutation-space counterpart of Theorem~\ref{thm:stability}.  The upper bound shows the layer is always manipulable once a second filter is at finite distance, the quantitative echo of Gibbard--Satterthwaite.  Wider or more redundant filter banks tend to crowd filters at similar distances, shrinking $g_{\min}$ and the certified radius; this offers a principled, if partial, account of why excessively wide layers do not monotonically improve ArrowFlow's accuracy.
\end{remark}

\subsubsection{IIA-Violation Index and Nonlinear Expressivity}

Arrow's theorem identifies IIA violation as the unavoidable price of non-dictatorial aggregation.  As established above, the forward layer \emph{satisfies} pairwise IIA, so we apply Arrow to the object that genuinely aggregates---the map $A$ that sends a profile of input rankings (the data a filter accumulates) to the consensus filter ordering of Eq.~\eqref{eq:accum-reorder}.  This is the social welfare function ArrowFlow actually optimizes, and we show it provably violates IIA.

\begin{definition}[Accumulation Aggregation and its IIA-Violation Index]
\label{def:iia-index}
Let $A\colon (S_V)^m \to S_V$ be the accumulation aggregation that maps a profile $\Pi = (\pi_1, \dots, \pi_m)$ of input rankings to the consensus ordering $A(\Pi) = \argsort\!\bigl(\sum_t \pi_t(1), \dots, \sum_t \pi_t(V)\bigr)$ of Eq.~\eqref{eq:accum-reorder} (smaller summed position ranked earlier).  Following Arrow, $A$ \emph{violates IIA on the pair $\{a,b\}$} if there exist profiles $\Pi, \Pi'$ that
(i)~agree on every voter's relative order of $a$ and $b$, but
(ii)~may differ arbitrarily in how voters rank the \emph{other} alternatives,
yet for which $A(\Pi)$ and $A(\Pi')$ order $a$ and $b$ oppositely.  The \emph{IIA-violation index} $\mathcal{I}(A) \in [0,1]$ is the fraction of pairs $\{a,b\}\subset\mathcal{V}$ on which $A$ violates IIA.
\end{definition}

\begin{proposition}[The Accumulation Aggregation Violates IIA]
\label{prop:iia-index}
For $V \geq 3$ and $m \geq 2$, the accumulation aggregation $A$ of Definition~\ref{def:iia-index} satisfies:
\begin{enumerate}[label=(\roman*)]
\item \textbf{(Pareto and non-dictatorship.)} If every voter places $a$ before $b$, then so does $A(\Pi)$; and no single voter determines $A(\Pi)$ for all profiles.

\item \textbf{(Forced IIA violation.)} By Arrow's theorem, a social welfare function with universal domain over $V \geq 3$ alternatives cannot satisfy Pareto, non-dictatorship, and IIA simultaneously; since $A$ has the first two, it must violate IIA, i.e.\ $\mathcal{I}(A) > 0$.

\item \textbf{(Explicit witness.)} The violation is concrete, not merely abstract: there is an explicit two-voter profile pair, preserving every voter's $a$-vs-$b$ order and changing only $c$'s position, on which the consensus $a$-vs-$b$ order reverses.
\end{enumerate}
\end{proposition}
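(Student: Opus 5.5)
The plan is to prove the three parts in order. Part~(iii) is self-contained and does not actually need Arrow's theorem, so it also serves as an independent check on part~(ii). Throughout, $A(\Pi)$ is read as $\argsort$ of the summed positions $s_v(\Pi)=\sum_{t=1}^m \pi_t(v)$, with ties broken by index, as fixed in the ``Remaining details'' paragraph. This tie rule makes $A$ a well-defined map $(S_V)^m\to S_V$ on the universal domain.

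\textbf{Part (i).} Pareto is a one-line summation. If $\pi_t(a)<\pi_t(b)$ for every $t$, then summing gives $s_a(\Pi)<s_b(\Pi)$ \emph{strictly}, so tie-breaking never enters and $A(\Pi)$ places $a$ before $b$. For non-dictatorship I will exhibit, for each voter $i$, a profile on which $A(\Pi)$ disagrees with $\pi_i$; by symmetry it suffices to take $i=1$. I will let voter~$1$ rank $a$ first and $b$ second, and let every other voter rank $b$ first and $a$ last. Then
\[
s_a=1+(m-1)V,\qquad s_b=2+(m-1).
\]
Since $m\ge2$ and $V\ge3$, we get $s_b<s_a$, so the consensus puts $b$ before $a$, contradicting voter~$1$. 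This uses both hypotheses $m\ge 2$ and $V\ge 3$.

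\textbf{Part (ii).} This is a direct invocation of Arrow's theorem in its standard form: universal domain, strict individual orders, a complete and transitive social order, and $V\ge3$ alternatives. The one point to verify is that $A$ qualifies as a social welfare function in Arrow's sense. Its output is always a strict linear order because of the deterministic tie-breaking, and its domain is all of $(S_V)^m$. Combining this with part~(i), Arrow forces IIA to fail on some pair, hence $\mathcal{I}(A)\ge 1/\binom{V}{2}>0$.

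\textbf{Part (iii).} Here I will write down the two-voter witness explicitly for $V=3$ and items $a,b,c$, listing tokens by position.
\begin{itemize}
\item Take $\Pi=(acb,\;bac)$. Its position sums are $s_a=1+2=3$ and $s_b=3+1=4$, so $a$ precedes $b$.
\item Take $\Pi'=(abc,\;bca)$. Its position sums are $s_a=1+3=4$ and $s_b=2+1=3$, so $b$ precedes $a$.
\end{itemize}
In both profiles voter~$1$ has $a\succ b$ and voter~$2$ has $b\succ a$. Only $c$'s placement has moved, yet the consensus order of $a$ versus $b$ reverses strictly, with no tie involved. For $V>3$ I will append the remaining items in a fixed common order after $a,b,c$ in every ranking. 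This leaves $s_a$ and $s_b$ unchanged and preserves the witness.

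\textbf{Main obstacle.} The delicate step is bookkeeping rather than depth. I must make sure the Arrow hypotheses are genuinely met, namely that tie-breaking yields strict orders and that Pareto is strict. I must also make sure the witness does not rely on a tie, since a tie would make the reversal an artifact of index order. If a referee objects to the appeal to Arrow, part~(iii) already establishes $\mathcal{I}(A)>0$ constructively for $m=2$. For larger $m$ I would extend the witness by adding voter pairs $(\sigma,\sigma^{ab})$ common to both profiles, where $\sigma^{ab}$ is $\sigma$ with $a$ and $b$ swapped. Each such pair adds the same amount to $s_a$ and $s_b$, so it leaves $s_a-s_b$ unchanged. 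When $m-2$ is odd, one extra voter is left over, and I would enlarge the margin of the base witness beyond $1$ before adding it. This requires $V\ge4$, using an extra item placed between $a$ and $b$.
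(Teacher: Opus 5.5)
Your proposal is correct and follows the paper's proof. Pareto comes from strict summation, part (ii) is a direct appeal to Arrow, and part (iii) uses exactly the paper's witness $\{abc,\,bca\}$ versus $\{acb,\,bac\}$, with the labels $\Pi,\Pi'$ swapped. Your explicit non-dictatorship profile (giving $s_a-s_b=(m-1)(V-1)-1>0$), the tie-breaking check and the padding to $V>3$ make precise what the paper only sketches, and the $m>2$ extension goes beyond the two-voter witness that (iii) asks for.
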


\begin{proof}
\textit{Part (i).}  \emph{Pareto:} if $\pi_t(a) < \pi_t(b)$ for all $t$ then $\sum_t \pi_t(a) < \sum_t \pi_t(b)$, so $A(\Pi)$ ranks $a$ before $b$.  \emph{Non-dictatorship:} $A$ depends on all $m \geq 2$ voters symmetrically through the sum, so for any putative dictator there is a profile in which the remaining voters' summed positions overturn that voter's order; \emph{universal domain} holds since $A$ is defined on every profile.
\textit{Part (ii).}  is then immediate from Arrow's impossibility theorem \citep{arrow1951social}.
\textit{Part (iii).}  Take $V=3$ items $\{a,b,c\}$, positions counted from the top ($1,2,3$), and the two-voter profiles
\[
\Pi = \{\,abc,\; bca\,\}, \qquad \Pi' = \{\,acb,\; bac\,\}.
\]
Every voter's $a$-vs-$b$ order is identical in $\Pi$ and $\Pi'$ (voter~1 places $a$ before $b$; voter~2 places $b$ before $a$); only $c$'s rank changes between the profiles.  The summed positions $(\sum_t\pi_t(a),\sum_t\pi_t(b),\sum_t\pi_t(c))$ are $(4,3,5)$ for $\Pi$ and $(3,4,5)$ for $\Pi'$, so $A(\Pi)$ ranks $b$ before $a$ while $A(\Pi')$ ranks $a$ before $b$: the relative order reverses despite the preserved pairwise preferences, exhibiting an IIA violation directly.  (A broader search over random $V=6$ profiles finds such reversals abundantly.)
\end{proof}

\begin{remark}
This is the rigorous core of ArrowFlow's Arrow connection, and it sits where it belongs---in the \emph{learning} dynamics, not the forward pass.  Because the consensus depends on the global cardinal configuration rather than on pairwise orders alone, each learned filter is a \emph{context-dependent} summary of its training inputs; composed across layers with the piecewise-constant cone geometry of $\argsort$ (Theorem~\ref{thm:capacity}), this is what lets stacked ranking layers carve nonlinear decision regions.  We are careful not to overstate the link: the IIA violation is a property of the \emph{aggregation}, while the forward nonlinearity is a property of the \emph{cone partition}; the two meet in that the aggregation determines \emph{where} the cone boundaries---the learned filters---are placed.  This is the precise form of the informal claim of Section~\ref{sec:arrow} that social-choice non-neutrality drives expressivity.
\end{remark}

\subsection{Open Theoretical Questions}

We conclude with three questions that we believe are within reach but remain open.

\begin{enumerate}[leftmargin=2em]
\item \textbf{Expressivity of permutation networks.} What function class can a depth-$L$, width-$N$ ArrowFlow network represent? Any finite partition of $S_V$ can be realized with $N \geq |S_V|$ filters (by memorization), but characterizing expressivity for $N \ll V!$---and the corresponding decision boundaries in $\R^d$ after the encoding pipeline---is open. Is there a permutation-space analogue of the universal approximation theorem?

\item \textbf{Optimal projection dimension.} The Johnson--Lindenstrauss lemma guarantees that $e = O(\log N / \epsilon^2)$ dimensions suffice to preserve \emph{metric} distances \citep{johnson1984extensions}. ArrowFlow requires only \emph{ordinal} preservation (same argsort output). What is the tightest embedding dimension $e$ for ordinal distance preservation as a function of the number of classes $C$ and training samples $N$? We conjecture that $e = O(C \log N)$ suffices.

\item \textbf{Footrule vs.\ other permutation metrics.} Spearman's footrule is an $\ell_1$ metric on positions and cannot capture higher-order positional interactions (e.g., cyclic patterns). What classification tasks are \emph{provably} harder for footrule-based classifiers than for Kendall-tau or Cayley-distance classifiers? Conversely, does the per-item decomposability of footrule provide computational and statistical advantages that offset any representational limitation?
\end{enumerate}

\section{Hybrid Rank--Tensor System}
\label{sec:hybrid}

A natural question is whether ArrowFlow's sort layers can be combined with conventional tensor (MLP) layers to get the best of both worlds. ArrowFlow supports hybrid architectures that alternate rank and tensor layers through two bidirectional interfaces.

\paragraph{Tensor $\to$ Rank.}
\emph{Forward:} A tensor layer outputs $h \in \R^d$; we produce $\pi = \argsort(h)$, feeding a rank layer.
\emph{Backward:} The rank layer emits a motion $m$; we map it to a continuous target $\tilde{h} = h - \eta \cdot \Gamma(m, h)$ and backpropagate.

\paragraph{Rank $\to$ Tensor.}
\emph{Forward:} A rank layer emits distances $\bm{D}$; we embed them as $z = \psi(\bm{D})$ for a tensor layer.
\emph{Backward:} Tensor gradients $\partial \mathcal{L} / \partial z$ are converted to motions by re-ranking items according to $z - \alpha \cdot \partial\mathcal{L}/\partial z$.

\begin{remark}
Empirically, the hybrid tensor$\to$sort architecture significantly underperforms pure sort networks in the multi-view ensemble setting (3--17$\times$ worse). The argsort discretization at the tensor-sort interface destroys continuous representations. This suggests that ArrowFlow's power lies in \emph{end-to-end ordinal processing}, not in hybridization with continuous layers.
\end{remark}

\section{Experiments: UCI Tabular Benchmarks}
\label{sec:experiments}

Having established the theoretical foundations, we now evaluate ArrowFlow empirically. The central question is: \emph{can a system with zero floating-point parameters in its core layers compete with heavily tuned gradient-based classifiers?} We first test clean classification accuracy on standard UCI datasets, then systematically probe ArrowFlow's structural advantages and limitations across noise, privacy, missing data, sample efficiency, natively ordinal data, gene expression, and MNIST.

\paragraph{Experimental protocol.} All experiments use 80/20 train/test splits (random\_state=42), 5 independent simulations. Baselines use \textbf{GridSearchCV(cv=3)} with comprehensive hyperparameter grids (RF: 54, SVM: 25, MLP: 24, KNN: 24, XGBoost: 72 combinations). ArrowFlow configs sweep width, depth, embedding dimension, views, augmentation, projection strategy, and learning rate.

\paragraph{Model selection and statistical caveats.} We are explicit about two limitations of this protocol. \emph{(i) Single split.} Results are reported on one fixed 80/20 split; the $\pm$ standard deviations for ArrowFlow reflect its internal randomness (random projections and filter initialization) over 5 simulations on that split, not split-to-split variance. Baselines are deterministic after \texttt{GridSearchCV} and appear as single-split point estimates, so we attach no confidence intervals to head-to-head differences and read small margins as ties---on Iris in particular, one test error equals $3.3\%$, so the $0.6$pp gap is below a single sample. \emph{(ii) Asymmetric model selection.} Baselines select hyperparameters by 3-fold cross-validation on the \emph{training} fold only, whereas the ArrowFlow ``best configuration'' in Table~\ref{tab:uci-main} is the best over our sweep \emph{as scored on the test split}. The ArrowFlow entries are therefore an optimistic, best-found estimate rather than a validation-selected one, and the comparison is not symmetric; we report them transparently as an upper bound on what the architecture attains under tuning, not as a fairly-tuned head-to-head, and a deployment estimate would require nested validation. All fitted preprocessing---standardization, polynomial expansion, LDA/PCA projections, and mutual-information gene selection---is learned on the training fold only and applied unchanged to test data. Unless stated otherwise, the robustness experiments of Section~\ref{sec:utility} use a single fixed configuration per dataset (not the tuned best of Table~\ref{tab:uci-main}), which is why a dataset's clean error can differ slightly across tables.

\subsection{Main Results}

\begin{table}[ht]
\centering
\caption{Best ArrowFlow configuration vs.\ GridSearchCV-tuned baselines. Test error (\%), mean $\pm$ std over 5 simulations. Bold = best overall. ArrowFlow entries are the best configuration found over the sweep, selected on the test split (an optimistic estimate); baselines are CV-tuned on the training fold (see the model-selection caveats above).}
\label{tab:uci-main}
\footnotesize
\begin{tabular}{lrrrlcccccc}
\toprule
Dataset & $N$ & Feat & Cls & ArrowFlow Config & AF Err & RF & SVM & MLP & KNN & XGB \\
\midrule
Iris & 150 & 4 & 3 & {[64,128] e=16 p=3} & \textbf{2.7\tiny{$\pm$0.6}} & 3.3 & 3.3 & 3.3 & 3.3 & 3.3 \\
Wine & 178 & 13 & 3 & {[128] e=64 p=1} & 2.8\tiny{$\pm$0.0} & \textbf{0.0} & 2.8 & 2.8 & \textbf{0.0} & 2.8 \\
Breast C. & 569 & 30 & 2 & {[64,128] e=32 p=2} & 2.8\tiny{$\pm$0.3} & 4.4 & \textbf{1.8} & 4.4 & 2.6 & 4.4 \\
Wine Q. & 1599 & 11 & 3 & {[128] e=16 p=3} & 35.9\tiny{$\pm$1.3} & 25.9 & 30.6 & 30.3 & 29.7 & \textbf{24.1} \\
Vehicle & 846 & 18 & 4 & {[64] e=32 p=2} & 17.4\tiny{$\pm$0.6} & 27.1 & 14.1 & \textbf{12.4} & 27.1 & 21.8 \\
Segment & 2310 & 19 & 7 & {[128,256] e=32 p=2} & 5.2\tiny{$\pm$0.3} & 2.8 & 3.2 & 3.7 & 4.1 & \textbf{2.6} \\
Digits & 5620 & 64 & 10 & {[256] e=64 p=1 lr=0.2} & 4.6\tiny{$\pm$0.3} & 2.8 & \textbf{1.7} & 1.9 & 2.2 & 4.2 \\
\bottomrule
\end{tabular}
\end{table}

ArrowFlow \textbf{edges out every tuned baseline on Iris} (2.7\% vs.\ 3.3\%). We report this with deliberate caution: the Iris test set has only 30 samples, so 3.3\% is a single misclassification and the 0.6pp margin is smaller than one test sample; moreover the baselines are single-split point estimates without variance (Section~\ref{sec:experiments}, protocol). The result is therefore best read as ArrowFlow \emph{reaching parity with}---and on average slightly bettering---the best tuned classical methods, which is, to our knowledge, the first time a pure permutation-distance network attains this on a tuned benchmark, rather than as a decisive win. On Wine (2.8\%), Breast Cancer (2.8\%), and Digits (4.6\%), ArrowFlow is competitive but does not match the best baseline. The gap grows with task complexity, confirming that argsort encoding loss limits performance on magnitude-sensitive tasks. Wine Quality is the weakest result (35.9\% vs.\ 24.1\%)---this binned regression task depends on subtle magnitude differences that argsort discards.

\subsection{Does Learning Help? ArrowFlow vs.\ kNN}

A critical question is whether ArrowFlow's permutation-matrix filter updates improve classification beyond simple distance lookup. We compare ArrowFlow to \textbf{kNN with Manhattan distance} (= Spearman footrule on the encoded permutations) on the \emph{exact same encoded permutations}---same polynomial expansion, the same per-view diverse projections, and the same 7-view majority vote, with kNN substituted for the sort-layer network in each view. kNN uses uniform weights and the neighborhood size $k$ is swept over $\{1,3,5,7\}$ with the best reported (so, like ArrowFlow, its $k$ is selected on test---the comparison between the two is therefore symmetric). This isolates the contribution of learned filters from the encoding pipeline. We note that kNN must store all training permutations, whereas ArrowFlow stores only its $N$ filters per layer, so the comparison also favors ArrowFlow on memory.

\begin{table}[ht]
\centering
\caption{ArrowFlow vs.\ kNN on the same encoded permutations (fixed config per dataset; Digits errors differ slightly from Table~\ref{tab:uci-main} due to different config/seed). Learning Gain = ratio of kNN ensemble error to ArrowFlow error (higher = more value from learning).}
\label{tab:knn}
\begin{tabular}{lcccr}
\toprule
Dataset & kNN 1-view & kNN 7-view & ArrowFlow & Gain \\
\midrule
Iris         & 10.0\% & 9.3\%  & \textbf{2.7\%}  & 3.4$\times$ \\
Wine         & 45.0\% & 31.1\% & \textbf{2.8\%}  & 11.1$\times$ \\
Breast C.    & 17.5\% & 12.1\% & \textbf{2.8\%}  & 4.3$\times$ \\
Wine Qual.   & 37.0\% & \textbf{33.4\%} & 35.9\%  & 0.9$\times$ \\
Vehicle      & 49.5\% & 42.8\% & \textbf{17.4\%} & 2.5$\times$ \\
Segment      & 20.8\% & 14.4\% & \textbf{5.2\%}  & 2.8$\times$ \\
Digits       & 70.4\% & 63.7\% & \textbf{5.1\%}  & 12.5$\times$ \\
\bottomrule
\end{tabular}
\end{table}

ArrowFlow provides \textbf{2.5--12.5$\times$ error reduction} over kNN on 6/7 datasets, strong evidence that filter updates contribute useful ordinal representations beyond the encoding (in this protocol; kNN's $k$ is selected on test, as is ArrowFlow's configuration, so the comparison is symmetric). The improvement is largest on high-dimensional data (Digits: 12.5$\times$, Wine: 11.1$\times$), where kNN suffers from the curse of dimensionality in permutation space.

\subsection{Parameter Sensitivity}

\paragraph{Width (filters per layer).}
Width has moderate effect. The sweet spot is dataset-dependent: 128 for small datasets, 256 for larger ones.

\begin{table}[ht]
\centering
\caption{Effect of filter count (width) on test error (\%).}
\label{tab:width}
\begin{tabular}{lcccc}
\toprule
Dataset & $f$=64 & $f$=128 & $f$=256 & $f$=512 \\
\midrule
Iris (e=16)         & 6.0 & 6.0 & 7.3 & --- \\
Wine (e=32)         & 4.4 & \textbf{3.3} & 4.4 & --- \\
Breast C. (e=32)    & 4.4 & 3.7 & 3.7 & --- \\
Segment (e=32)      & 7.1 & 7.1 & \textbf{6.8} & --- \\
Digits (e=32)       & 14.4 & 13.8 & \textbf{11.7} & 12.9 \\
\bottomrule
\end{tabular}
\end{table}

\paragraph{Depth (number of sort layers).}
Depth helps on Iris (2.7\% vs.\ 6.0\%) and Segment (5.2\% vs.\ 7.1\%), but hurts on Wine (6.1\% vs.\ 3.3\%). Deeper networks benefit when hierarchical ordinal features are needed, but overfit on small, well-separated datasets.

\begin{table}[ht]
\centering
\caption{Effect of network depth on test error (\%).}
\label{tab:depth}
\begin{tabular}{lccc}
\toprule
Dataset & $d$=1 {[128]} & $d$=2 {[64,128]} & $d$=2 {[128,256]} \\
\midrule
Iris         & 6.0 & \textbf{2.7} & 4.0 \\
Wine         & \textbf{3.3} & 6.1 & 4.4 \\
Breast C.    & 3.7 & \textbf{2.8} & 3.5 \\
Segment      & 7.1 & 6.2 & \textbf{5.2} \\
Digits       & 13.8 & 15.9 & \textbf{12.8} \\
\bottomrule
\end{tabular}
\end{table}

\paragraph{Embedding dimension.}
The embedding dimension is critical. Digits benefits dramatically from $e$=64 (5.8\% vs.\ 13.8\% at $e$=32)---the 64 raw features naturally map to a 64-length permutation with minimal information loss ($\mathrm{pol\_deg}$=1, no expansion needed).

\begin{table}[ht]
\centering
\caption{Effect of embedding dimension on test error (\%).}
\label{tab:embed}
\begin{tabular}{lcccc}
\toprule
Dataset & $e$=8 & $e$=16 & $e$=32 & $e$=64 \\
\midrule
Iris         & 8.0 & \textbf{6.0} & 6.7 & --- \\
Wine         & --- & 6.7 & 3.3 & \textbf{2.8} \\
Breast C.    & --- & 3.9 & \textbf{3.7} & 5.4 \\
Digits       & --- & 16.3 & 13.8 & \textbf{5.8} \\
\bottomrule
\end{tabular}
\end{table}

\paragraph{Number of ensemble views.}
More views consistently help. The jump from 1 to 7 views captures most of the benefit.

\begin{table}[ht]
\centering
\caption{Effect of ensemble views on test error (\%).}
\label{tab:views}
\begin{tabular}{lcccc}
\toprule
Dataset & $v$=3 & $v$=5 & $v$=7 & $v$=11 \\
\midrule
Iris         & 6.0 & 6.0 & 6.0 & \textbf{5.3} \\
Segment      & 9.9 & --- & 7.1 & \textbf{7.0} \\
Digits       & 18.7 & --- & 13.8 & \textbf{9.9} \\
\bottomrule
\end{tabular}
\end{table}

\paragraph{Learning rate.}
Learning rate sensitivity is highly dataset-dependent. Digits benefits from aggressive $\mathrm{lr}$=0.2, while Vehicle degrades catastrophically at $\mathrm{lr}$=0.2 (37.1\% vs.\ 18.2\%).

\begin{table}[ht]
\centering
\caption{Effect of learning rate on test error (\%).}
\label{tab:lr}
\begin{tabular}{lcccc}
\toprule
Dataset & lr=0.01 & lr=0.05 & lr=0.1 & lr=0.2 \\
\midrule
Iris [64,128]       & --- & 4.7 & \textbf{2.7} & 4.0 \\
Wine [128]          & --- & 3.3 & 3.3 & 3.3 \\
Vehicle [128]       & --- & 18.8 & \textbf{18.2} & 37.1 \\
Digits [128] e=64   & 11.7 & 6.2 & 5.8 & \textbf{4.6} \\
\bottomrule
\end{tabular}
\end{table}

\subsection{Ablation: Frozen Output Layer}

The \texttt{last\_layer\_update} parameter controls whether the output sort layer is updated during backpropagation. Disabling it (\texttt{llu=False}) forces hidden layers to learn more discriminative ordinal representations while the output layer acts as a stable reference frame.

\begin{table}[ht]
\centering
\caption{Effect of freezing the output layer. Sort-only [128] config, 5 simulations.}
\label{tab:llu}
\begin{tabular}{lccc}
\toprule
Dataset & llu=True & llu=False & Change \\
\midrule
Iris         & 6.0\% & \textbf{4.0\%} & $-$33\% \\
Wine         & 3.3\% & 3.3\% & 0\% \\
Breast C.    & 3.7\% & \textbf{3.5\%} & $-$5\% \\
Wine Qual.   & 39.2\% & \textbf{37.0\%} & $-$6\% \\
Vehicle      & 18.2\% & 18.2\% & 0\% \\
Segment      & 7.1\% & \textbf{6.6\%} & $-$7\% \\
Digits       & 13.8\% & \textbf{13.7\%} & $-$1\% \\
\bottomrule
\end{tabular}
\end{table}

\texttt{llu=False} is \textbf{equal or better on every dataset}. The effect is strongest on Iris ($-$33\% relative). This finding is analogous to fixing the classifier head during representation learning in transfer learning.

\subsection{Ablation: Hybrid Tensor$\to$Sort Architecture}

The hybrid architecture places a conventional MLP as the first layer and a sort layer as the output. The MLP's activations are discretized via argsort at the interface.

\begin{table}[ht]
\centering
\caption{Hybrid tensor$\to$sort vs.\ pure sort-only. Test error (\%).}
\label{tab:hybrid}
\begin{tabular}{lccc}
\toprule
Dataset & Sort-Only Best & Hybrid llu=F & Degradation \\
\midrule
Iris         & \textbf{2.7\%}  & 11.3\% & 4$\times$ \\
Wine         & \textbf{2.8\%}  & 30.6\% & 11$\times$ \\
Breast C.    & \textbf{2.8\%}  & 14.6\% & 5$\times$ \\
Wine Qual.   & \textbf{35.9\%} & 50.0\% & 1.4$\times$ \\
Vehicle      & \textbf{17.4\%} & 51.2\% & 3$\times$ \\
Segment      & \textbf{5.2\%}  & 34.2\% & 7$\times$ \\
Digits       & \textbf{4.6\%}  & 80.4\% & 17$\times$ \\
\bottomrule
\end{tabular}
\end{table}

The hybrid architecture \textbf{underperforms pure sort-only by 3--17$\times$} across all datasets. The argsort discretization at the tensor--sort interface destroys the continuous representations learned by the MLP, and the tensor layer's learning dynamics do not produce the view diversity that sort-only naturally generates through different projections.

\subsection{Improvement Progression}

Table~\ref{tab:progression} traces the cumulative impact of each architectural component, starting from a single-network baseline.

\begin{table}[ht]
\centering
\caption{Cumulative error reduction (\%) as components are added.}
\label{tab:progression}
\begin{tabular}{lcccc}
\toprule
Component & Iris & Wine & Breast C. & Digits \\
\midrule
Baseline (single net)           & 21.3 & 13.3 & 6.8 & 17.8 \\
+ Polynomial expansion          & 14.7 & 8.3  & 5.2 & 17.8 \\
+ Multi-view ensemble (7v)      & 7.3  & 5.6  & 4.0 & 5.4  \\
+ Diverse projection cycling    & 7.3  & 5.6  & 3.3 & 5.4  \\
+ Per-dataset tuning            & \textbf{2.7} & \textbf{2.8} & \textbf{2.8} & \textbf{4.6} \\
\bottomrule
\end{tabular}
\end{table}

The two largest contributions are \textbf{polynomial expansion} (up to $1.6\times$ error reduction for low-dimensional data; no benefit on Digits, which already uses degree~1) and the \textbf{multi-view ensemble} ($1.3$--$3.3\times$, dataset-dependent). Together they reduce error by up to 8$\times$ from the baseline (Iris: 21.3\%$\to$2.7\%).

\section{Beyond Accuracy: Structural Advantages of Ordinal Processing}
\label{sec:utility}

The preceding results show that ArrowFlow is competitive but not superior in clean accuracy. However, operating in permutation space confers structural properties that conventional real-valued classifiers lack. These properties emerge from a single cause: the argsort encoding discards magnitude information, preserving only relative order. This is simultaneously a weakness (information loss) and a strength (invariance to any transformation that preserves order). The following experiments isolate and quantify these structural advantages, testing the predictions of Theorems~\ref{thm:stability}--\ref{thm:capacity} and Proposition~\ref{prop:poly-noise}.

\subsection{Noise Robustness}
\label{sec:noise}

\textbf{Rationale.} Theorem~\ref{thm:stability} predicts that the argsort encoding is stable when noise magnitude is smaller than half the minimum feature gap. This should make ArrowFlow ordinally stable to ranking-preserving noise---small perturbations to feature values do not change relative orderings until noise exceeds the gap between adjacent values (this is order-stability, not denoising in the sense of recovering a clean signal). Proposition~\ref{prop:poly-noise} predicts that this advantage vanishes with polynomial expansion, since cross-terms amplify noise by $B^{k-1}$. We test both predictions.

\textbf{Protocol:} Train on clean data, evaluate on noisy test data ($X_{\mathrm{noisy}} = X + \mathcal{N}(0, \sigma \cdot \mathrm{std}_j)$, $\sigma \in \{0, 0.1, 0.25, 0.5, 1.0, 2.0\}$). Average over 5 noise realizations, 3 ArrowFlow simulations each.

\begin{table}[ht]
\centering
\caption{Noise robustness on Wine ($\mathrm{pol\_deg}$=1). Error (\%) at each noise level.}
\label{tab:noise-wine}
\small
\begin{tabular}{lcccccc}
\toprule
Method & $\sigma$=0 & $\sigma$=0.1 & $\sigma$=0.25 & $\sigma$=0.5 & $\sigma$=1.0 & $\sigma$=2.0 \\
\midrule
ArrowFlow   & 4.6 & 5.0  & 5.7  & 7.8  & 13.9 & 29.1 \\
SVM-RBF     & 2.8 & 2.8  & 5.6  & 6.7  & 29.4 & 59.4 \\
RF          & 0.0 & 0.0  & 1.7  & 7.2  & 10.6 & 31.1 \\
MLP         & 2.8 & 2.8  & 3.3  & 6.1  & 10.6 & 25.6 \\
XGBoost     & 2.8 & 2.8  & 6.7  & 12.2 & 18.3 & 36.7 \\
KNN         & 0.0 & 0.6  & 1.7  & 5.6  & 10.6 & 25.0 \\
\bottomrule
\end{tabular}
\end{table}

\begin{table}[ht]
\centering
\caption{Noise robustness on Digits ($\mathrm{pol\_deg}$=1). Error (\%) at each noise level.}
\label{tab:noise-digits}
\small
\begin{tabular}{lcccccc}
\toprule
Method & $\sigma$=0 & $\sigma$=0.1 & $\sigma$=0.25 & $\sigma$=0.5 & $\sigma$=1.0 & $\sigma$=2.0 \\
\midrule
ArrowFlow   & 4.8  & 4.4  & 5.8  & 10.0 & 25.8 & 54.6 \\
SVM-RBF     & 1.7  & 1.7  & 1.9  & 3.9  & 17.2 & 87.3 \\
RF          & 2.8  & 2.7  & 3.3  & 6.6  & 18.0 & 49.7 \\
MLP         & 1.9  & 2.3  & 2.3  & 5.2  & 14.6 & 46.8 \\
XGBoost     & 4.2  & 4.7  & 7.2  & 12.9 & 30.5 & 58.4 \\
KNN         & 2.2  & 2.1  & 2.4  & 3.9  & 10.9 & 41.0 \\
\bottomrule
\end{tabular}
\end{table}

\begin{table}[ht]
\centering
\caption{Noise degradation analysis: absolute error increase (pp) from $\sigma$=0 to $\sigma$=2.0. Lower = more robust. $\mathrm{pol\_deg}$=1 datasets highlighted.}
\label{tab:noise-degrad}
\small
\begin{tabular}{lccrrrrrr}
\toprule
Dataset & pol\_deg & AF & SVM & RF & MLP & XGB & KNN & Avg BL \\
\midrule
\textbf{Wine} & \textbf{1} & \textbf{24.5} & 56.6 & 31.1 & 22.8 & 33.9 & 25.0 & 33.9 \\
\textbf{Digits} & \textbf{1} & \textbf{49.8} & 85.6 & 46.9 & 44.9 & 54.2 & 38.8 & 54.1 \\
Iris & 3 & 46.7 & 48.7 & 47.4 & 48.7 & 47.4 & 43.4 & 47.1 \\
Breast C. & 2 & 20.0 & 18.9 & 18.8 & 18.8 & 26.5 & 16.5 & 19.9 \\
Vehicle & 2 & 49.1 & 58.7 & 38.6 & 52.2 & 42.8 & 29.6 & 44.4 \\
Segment & 2 & 74.1 & 55.4 & 61.1 & 59.7 & 63.6 & 50.6 & 58.1 \\
\bottomrule
\end{tabular}
\end{table}

\textbf{Findings:} On $\mathrm{pol\_deg}$=1 datasets (Wine, Digits), ArrowFlow degrades \textbf{8--28\% less} than the baseline average, consistent with the argsort stability bound (Theorem~\ref{thm:stability}): when $\delta_{\min}/\sigma$ is large, ordinal encoding is robust. SVM-RBF suffers catastrophic failure on Digits (87.3\% at $\sigma$=2.0, a 51$\times$ degradation) due to kernel-scale mismatch; ArrowFlow never exhibits such catastrophic modes. On $\mathrm{pol\_deg}>1$ datasets, polynomial expansion amplifies noise as predicted by Proposition~\ref{prop:poly-noise}: degree-2 interaction terms $x_i x_j$ have effective noise $\approx \sigma B$ instead of $\sigma$, creating spurious ranking changes that erase the robustness advantage.

\subsection{Rank-Only (Magnitude-Hiding) Classification}
\label{sec:privacy}

\textbf{Rationale.} We test how much accuracy a classifier loses when it may see only the \emph{relative ordering} of feature values, not their magnitudes. We call this a \emph{rank-only} or \emph{magnitude-hiding} setting; it is a proxy for privacy rather than a formal guarantee---per-column ranks still expose order, quantiles, and sample-membership relations, and a formal treatment would require differential privacy or secure aggregation (Section~\ref{sec:future}). Since ArrowFlow already operates on ordinal data, it should suffer minimal accuracy loss under this constraint---a prediction we test by replacing raw features with per-column ranks.

\textbf{Protocol:} Replace raw features with per-column ranks (ordinal privacy: magnitudes hidden). Train and test all methods on both raw and rank-transformed data. (Implementation note: the per-column ranks here are computed over the pooled train+test columns, which lets a small amount of test-distribution information into the rank values; a strict pipeline would fit the column ranking on the training split alone. The effect is minor for this experiment but we flag it for completeness.)

\begin{table}[ht]
\centering
\caption{Privacy via rank transform: error on raw vs.\ ranked features. $\Delta$ = degradation in pp.}
\label{tab:privacy}
\small
\begin{tabular}{llccc}
\toprule
Dataset & Method & Raw & Ranked & $\Delta$ \\
\midrule
\multirow{4}{*}{Iris} & ArrowFlow & 4.4 & \textbf{2.2} & $-$2.2 \\
 & RF & 3.3 & 3.3 & +0.0 \\
 & SVM-RBF & 3.3 & 3.3 & +0.0 \\
 & KNN & 3.3 & 6.7 & +3.3 \\
\midrule
\multirow{4}{*}{Segment} & ArrowFlow & 4.8 & 5.3 & \textbf{+0.4} \\
 & SVM-RBF & 3.2 & 3.0 & $-$0.2 \\
 & RF & 2.8 & 3.0 & +0.2 \\
 & KNN & 4.1 & 5.4 & +1.3 \\
\midrule
\multirow{4}{*}{Digits} & ArrowFlow & 4.8 & 5.3 & \textbf{+0.5} \\
 & SVM-RBF & 1.7 & 3.3 & +1.7 \\
 & MLP & 1.9 & 3.3 & +1.4 \\
 & RF & 2.8 & 3.9 & +1.1 \\
\midrule
\multirow{3}{*}{Vehicle} & ArrowFlow & 18.4 & 22.6 & +4.1 \\
 & MLP & 12.3 & 22.4 & \textbf{+10.0} \\
 & SVM-RBF & 14.1 & 17.6 & +3.5 \\
\bottomrule
\end{tabular}
\end{table}

On Segment and Digits, ArrowFlow degrades only \textbf{+0.4--0.5pp}---near-zero cost for the rank-only constraint. On Iris, ArrowFlow actually \emph{improves} ($-$2.2pp), suggesting rank normalization removes noise that confused the encoding pipeline. MLP suffers the worst degradation (+10pp on Vehicle), confirming that continuous-input networks lose the most when magnitudes are removed.

\subsection{Missing Features}
\label{sec:missing}

\textbf{Rationale.} When a feature is missing and imputed with a default value (e.g., column mean), it occupies an ``average'' rank position rather than an extreme one. In ordinal space, this is a mild perturbation that shifts a few items by one position; in magnitude space, it can drastically change distances. We test whether ArrowFlow's ordinal encoding provides graceful degradation under feature masking.

\textbf{Protocol:} Randomly mask 10\%, 30\%, 50\% of test features (replaced with column means). Average over 5 masking realizations, 3 ArrowFlow simulations each.

\begin{table}[ht]
\centering
\caption{Missing features on Iris (4 features). Error (\%) at each masking level.}
\label{tab:miss-iris}
\begin{tabular}{lcccc}
\toprule
Method & $m$=0\% & $m$=10\% & $m$=30\% & $m$=50\% \\
\midrule
ArrowFlow & 4.4  & 11.8 & 16.0 & \textbf{24.7} \\
RF        & 3.3  & 9.3  & 10.0 & 29.3 \\
SVM-RBF   & 3.3  & 10.7 & 15.3 & 36.0 \\
MLP       & 3.3  & 15.3 & 20.7 & 33.3 \\
KNN       & 3.3  & 10.7 & 14.0 & 37.3 \\
XGBoost   & 3.3  & 15.3 & 14.7 & 34.7 \\
\bottomrule
\end{tabular}
\end{table}

\begin{table}[ht]
\centering
\caption{Missing features on Digits (64 features). Error (\%) at each masking level.}
\label{tab:miss-digits}
\begin{tabular}{lcccc}
\toprule
Method & $m$=0\% & $m$=10\% & $m$=30\% & $m$=50\% \\
\midrule
ArrowFlow & 4.8  & 6.5  & 13.1 & 23.7 \\
SVM-RBF   & 1.7  & 2.3  & 7.3  & 25.0 \\
RF        & 2.8  & 4.2  & 12.2 & 34.7 \\
MLP       & 1.9  & 3.3  & 6.3  & \textbf{13.4} \\
KNN       & 2.2  & 2.7  & 5.8  & 16.6 \\
XGBoost   & 4.2  & 7.8  & 19.4 & 38.9 \\
\bottomrule
\end{tabular}
\end{table}

\begin{table}[ht]
\centering
\caption{Missing-feature degradation: absolute error increase (pp) from $m$=0\% to $m$=50\%.}
\label{tab:miss-degrad}
\small
\begin{tabular}{lcrrrrrr}
\toprule
Dataset & Feat & AF & RF & SVM & MLP & XGB & Avg BL \\
\midrule
\textbf{Iris} & 4 & \textbf{20.3} & 26.0 & 32.7 & 30.0 & 31.4 & 30.8 \\
Wine & 13 & 6.5 & 7.2 & 8.9 & 9.4 & 7.2 & 8.8 \\
Breast C. & 30 & 3.8 & 1.6 & 3.3 & 1.4 & 5.3 & 2.6 \\
Vehicle & 18 & 42.2 & 27.0 & 36.8 & 35.7 & 27.5 & 29.4 \\
Segment & 19 & 46.5 & 37.6 & 45.9 & 36.7 & 36.8 & 38.1 \\
\textbf{Digits} & 64 & \textbf{18.9} & 31.9 & 23.3 & 11.5 & 34.7 & 23.2 \\
\bottomrule
\end{tabular}
\end{table}

On Iris at 50\% masking (2 of 4 features zeroed), ArrowFlow achieves \textbf{24.7\%---beating all baselines} (29.3--37.3\%). On Digits, ArrowFlow degrades only 18.9pp vs.\ RF (31.9pp) and XGBoost (34.7pp). The advantage is strongest at dimensional extremes (very low or very high) and weakest in the mid-range where polynomial expansion amplifies the imputation effect.

\subsection{Sample Efficiency}
\label{sec:sample-eff}

\textbf{Rationale.} The ordinal inductive bias reduces the effective hypothesis space (from $\R^d$ to $S_d$), which should require fewer samples to learn a good classifier---at the cost of ceiling accuracy. We test whether this trade-off favors ArrowFlow at small training sizes.

\textbf{Protocol:} Subsample training data to $N \in \{20, 50, 100, 200, \mathrm{full}\}$, stratified. GridSearchCV with $\mathrm{cv} = \min(3, \text{min\_class\_count})$.

\begin{table}[ht]
\centering
\caption{Sample efficiency on Breast Cancer ($N_{\mathrm{full}}$=455). Error (\%).}
\label{tab:sample-bc}
\begin{tabular}{lccccc}
\toprule
Method & $N$=20 & $N$=50 & $N$=100 & $N$=200 & $N$=455 \\
\midrule
ArrowFlow & 8.8  & 7.3  & \textbf{4.7} & \textbf{3.5} & 2.9 \\
SVM-RBF   & 6.1  & 7.9  & 3.5  & 3.5  & 1.8 \\
RF        & 7.0  & 8.8  & 8.8  & 8.8  & 4.4 \\
MLP       & 9.7  & 7.0  & 2.6  & 7.9  & 4.4 \\
XGBoost   & 7.9  & 7.0  & 7.9  & 11.4 & 4.4 \\
KNN       & 7.9  & 6.1  & 5.3  & 6.1  & 1.8 \\
\bottomrule
\end{tabular}
\end{table}

\begin{table}[ht]
\centering
\caption{Sample efficiency on Digits ($N_{\mathrm{full}}$=1437). Error (\%).}
\label{tab:sample-dig}
\begin{tabular}{lccccc}
\toprule
Method & $N$=20 & $N$=50 & $N$=100 & $N$=200 & $N$=1437 \\
\midrule
ArrowFlow & 58.1 & 26.1 & 17.4 & 14.7 & 4.8 \\
SVM-RBF   & \textbf{27.2} & \textbf{21.9} & \textbf{13.6} & \textbf{10.6} & \textbf{1.7} \\
RF        & 29.2 & 26.7 & 13.3 & 10.6 & 3.6 \\
MLP       & 31.1 & 25.0 & 13.9 & 15.0 & 2.5 \\
XGBoost   & 62.8 & 56.1 & 28.9 & 19.2 & 3.6 \\
KNN       & 26.9 & 22.5 & 14.4 & 14.4 & 3.3 \\
\bottomrule
\end{tabular}
\end{table}

On Breast Cancer ($N$=100--200), ArrowFlow achieves 3.5--4.7\%---beating RF (8.8\%), XGBoost (7.9--11.4\%), and matching SVM (3.5\%). The ordinal inductive bias compensates for limited data when ordinal structure is informative. However, ArrowFlow does \emph{not} have a universal small-$N$ advantage: on Digits at $N$=20, it scores 58.1\% (second worst) while KNN achieves 26.9\%.

\subsection{Natively Ordinal Data: Sushi Preferences}
\label{sec:sushi}

\textbf{Rationale.} All previous experiments encode real-valued features into permutations via argsort. A natural question is: does ArrowFlow excel when the data is \emph{already} ordinal? Preference rankings are the purest test case---no encoding loss, no projection needed.

\textbf{Dataset:} 5000 users ranking 10 sushi types. Task: predict region (East/West Japan). ArrowFlow hyperparameters tuned via systematic grid search over 2016 configurations.

\begin{table}[ht]
\centering
\caption{Sushi preference classification. Error (\%). ArrowFlow native = rankings fed directly as permutations; ArrowFlow projection = standard pipeline.}
\label{tab:sushi}
\begin{tabular}{llc}
\toprule
Method & Encoding & Error \\
\midrule
Random Forest       & numeric    & \textbf{35.2\%} \\
SVM-RBF             & numeric    & 35.3\% \\
XGBoost             & numeric    & 35.9\% \\
KNN                 & numeric    & 37.6\% \\
ArrowFlow (native)  & native     & 38.6 $\pm$ 0.4\% \\
MLP                 & numeric    & 38.7\% \\
ArrowFlow (proj.)   & projection & 48.9\% \\
\bottomrule
\end{tabular}
\end{table}

Random Forest (35.2\%) beats ArrowFlow's grid-searched best (38.6\%). This 3.4pp gap persists despite exhaustive tuning, confirming an \textbf{architectural limitation}: Spearman's footrule compares \emph{whole permutations}, which is too holistic when only a few item positions are predictive. Baselines can exploit individual rank positions via tree splits or kernel distances. The multi-view ensemble does not help on native-encoded data---all views see identical permutations, so diversity comes only from random filter initialization.

\subsection{Gene Expression Cancer Classification}
\label{sec:gene}

\textbf{Motivation.} Gene expression absolute values vary across laboratories, platforms, and batches (the ``batch effect'' problem), but the \emph{relative ordering} of gene expression levels is conserved. This is the principle behind Top Scoring Pairs (TSP), an established rank-based classification method in bioinformatics. ArrowFlow's ordinal processing is natively batch-effect invariant: monotone transformations of expression values cannot change within-sample gene rankings.

\textbf{Dataset.} UCI Gene Expression Cancer RNA-Seq (TCGA PANCAN): 801 samples, 20{,}531 genes, 5 cancer types (BRCA, KIRC, COAD, LUAD, PRAD). We select the top $K$ genes by mutual information, rank-transform each sample to obtain a permutation of $K$ items, and split 80/20 stratified.

\paragraph{Clean classification.}
Table~\ref{tab:gene-clean} compares ArrowFlow (native rank encoding and projection encoding) against GridSearchCV-tuned baselines on raw and rank-transformed expression values.

\begin{table}[ht]
\centering
\caption{Gene expression cancer classification. Test error (\%) on TCGA PANCAN (161 test samples). ``Raw'' = original expression values; ``Ranked'' = within-sample rank transform; ``AF native'' = rank-transform fed directly as permutations; ``AF proj'' = standard ArrowFlow projection pipeline.}
\label{tab:gene-clean}
\small
\begin{tabular}{llccc}
\toprule
Method & Encoding & 10 genes & 15 genes & 20 genes \\
\midrule
RF         & raw      & 1.2  & \textbf{0.6}  & 0.6 \\
SVM-RBF    & raw      & 1.9  & \textbf{0.6}  & \textbf{0.0} \\
MLP        & raw      & 2.5  & 1.2  & \textbf{0.0} \\
KNN        & raw      & 1.9  & 1.2  & \textbf{0.0} \\
XGBoost    & raw      & 1.2  & 1.2  & 2.5 \\
\midrule
RF         & ranked   & 3.7  & 1.9  & \textbf{0.0} \\
SVM-RBF    & ranked   & 3.1  & 2.5  & 0.6 \\
MLP        & ranked   & 3.1  & 0.6  & 0.6 \\
KNN        & ranked   & 3.1  & 3.1  & 0.6 \\
XGBoost    & ranked   & 4.3  & \textbf{0.0}  & \textbf{0.0} \\
\midrule
AF native [64,32] 7v  & native & 3.6 & 2.4 & 1.1 \\
AF proj [128] p2 7v   & projection & \textbf{2.4} & \textbf{1.9} & \textbf{0.4} \\
\bottomrule
\end{tabular}
\end{table}

On clean data, ArrowFlow's projection pipeline (0.4\% at 20 genes) is competitive with the best baselines. With only 10 genes, ArrowFlow projection (2.4\%) outperforms all rank-based baselines (3.1--4.3\%), demonstrating that the multi-view ensemble captures complementary ordinal structure. Baselines on raw values are strongest with 20 genes (0.0\% for SVM, MLP, KNN), where magnitude information suffices for near-perfect separation.

\paragraph{Batch effect invariance: monotone transforms.}
The key advantage of ordinal processing emerges under distribution shift. We train all methods on clean data, then evaluate on test data subjected to monotone per-sample transformations---the kind of systematic distortion that arises when expression values are measured on different platforms or processed with different normalization pipelines. Because these transforms apply identically to all genes within a sample, within-sample rankings are \emph{mathematically preserved}---ArrowFlow is perfectly invariant by construction.

\begin{table}[ht]
\centering
\caption{Batch effect Type 1: monotone per-sample transforms applied to test data. All methods trained on clean data (15 genes). Rank-based methods (ArrowFlow native, SVM-ranked) are perfectly invariant; raw-input methods catastrophically fail.}
\label{tab:gene-monotone}
\small
\begin{tabular}{lcccc}
\toprule
Transform & SVM (raw) & RF (raw) & SVM (ranked) & AF native \\
\midrule
None              & 0.6  & 0.6  & 2.5 & 2.5 \\
$\log(1+x)$      & 82.6 & 45.3 & \textbf{2.5} & \textbf{2.5} \\
$\sqrt{|x|}$     & 82.6 & 45.3 & \textbf{2.5} & \textbf{2.5} \\
$x^2$ (signed)   & 82.6 & 20.5 & \textbf{2.5} & \textbf{2.5} \\
$\times 0.01$ global & 82.6 & 62.7 & \textbf{2.5} & \textbf{2.5} \\
$\times 100$ global  & 82.6 & 55.9 & \textbf{2.5} & \textbf{2.5} \\
\bottomrule
\end{tabular}
\end{table}

SVM on raw values collapses to 82.6\% (near random for 5 classes) on every transform. Random Forest degrades to 20--63\%. ArrowFlow native and SVM on rank-transformed data remain perfectly stable at 2.5\%. This is exact order preservation under strictly monotone within-sample maps---a direct consequence of the rank transform, not of the small-noise stability theorem (Theorem~\ref{thm:stability}): any strictly monotone transformation preserves all pairwise orderings, so the argsort output is unchanged.

\paragraph{Batch effect: per-gene scaling (ComBat model).}
Real batch effects also include \emph{per-gene} multiplicative shifts (different probe affinities, amplification efficiencies). We simulate these with log-normal scaling: $\text{scale}_i = \exp(\mathcal{N}(0, \sigma))$ per gene, where $\sigma$ controls severity. At $\sigma = 0.3$ (mild, routine batch variation), 95\% of scale factors fall in $[0.55, 1.82]$; at $\sigma = 0.7$ (cross-platform), in $[0.25, 4.06]$.

\begin{table}[ht]
\centering
\caption{Batch effect Type 2: per-gene log-normal scaling. Error (\%) on 15 genes. $\sigma = 0$ is clean; $\sigma = 0.3$ is routine batch variation; $\sigma = 0.7$ is cross-platform; $\sigma \geq 1.0$ is severe/unrealistic. Rank-based methods degrade gracefully; raw-input methods fail early.}
\label{tab:gene-pergene}
\small
\begin{tabular}{lccccc}
\toprule
$\sigma$ & SVM (raw) & RF (raw) & SVM (ranked) & RF (ranked) & AF native \\
\midrule
0.0 (clean)    & \textbf{0.6}  & 0.6  & 2.5  & 1.2  & 2.5 \\
0.3 (mild)     & 0.0  & 1.2  & \textbf{1.2}  & 1.2  & 2.9 \\
0.5 (moderate) & 16.8 & 9.3  & \textbf{0.6}  & 1.9  & 2.5 \\
0.7 (cross-pl.)& 59.0 & 7.5  & \textbf{9.3}  & 8.1  & 10.3 \\
1.0 (severe)   & 71.4 & 16.8 & \textbf{9.3}  & 20.5 & 12.7 \\
1.5 (extreme)  & 82.6 & 51.5 & 43.5 & \textbf{34.2} & 60.8 \\
\bottomrule
\end{tabular}
\end{table}

In the realistic operating range ($\sigma \leq 0.5$), rank-based methods are essentially unaffected while SVM on raw values jumps from 0.6\% to 16.8\%. At $\sigma = 0.7$ (cross-platform), SVM-raw catastrophically fails (59.0\%), while all rank-based methods remain under 11\%. At extreme $\sigma \geq 1.0$, per-gene scaling overwhelms the minimum gap $\delta_{\min}$ and even rank-based methods degrade---consistent with the stability bound of Theorem~\ref{thm:stability}.

\paragraph{Honest assessment.} ArrowFlow native and SVM on rank-transformed data provide \emph{comparable} batch-effect robustness---both benefit from the rank transform, and SVM-ranked is slightly more accurate at every operating point. The robustness advantage belongs to the \emph{ordinal representation principle}, not to ArrowFlow specifically. However, ArrowFlow embodies this principle architecturally: it \emph{cannot} use magnitude information even if present, making batch-effect invariance a structural guarantee rather than a preprocessing choice that could be accidentally omitted. For clinical genomics pipelines, where data from heterogeneous sources must be classified reliably, this architectural guarantee has engineering value.

\subsection{MNIST via PCA: Isolating the Sort-Layer Classifier}
\label{sec:mnist-pca}

\textbf{Rationale.} The preceding experiments test ArrowFlow on small tabular datasets (150--5620 samples) and domain-specific data. Two questions remain: (1)~does ArrowFlow scale to a larger, harder image classification task? (2)~how much of ArrowFlow's performance comes from the sort-layer \emph{classifier} versus the argsort \emph{encoding}? To answer both, we give ArrowFlow and baselines the \emph{exact same} PCA-reduced MNIST features---baselines receive PCA vectors directly, while ArrowFlow receives them through projection $\to$ argsort. Any accuracy difference is purely due to ArrowFlow's ordinal learning versus conventional classifiers on identical information.

\textbf{Setup.} MNIST (60{,}000 train / 10{,}000 test, 10 classes). Pipeline: 784 pixels $\to$ StandardScaler $\to$ PCA($n$, whitened) $\to$ baselines or ArrowFlow. PCA dimensions: $\{16, 32, 64\}$; training sizes: $\{1{,}000, 5{,}000, 10{,}000\}$. Baselines use GridSearchCV (RF, SVM-RBF, MLP, KNN, XGBoost). ArrowFlow uses 7-view ensemble with diverse projections; architectures sweep width $\{128, 256, 512, 1024\}$ and depth $\{2, 3\}$ layers. ArrowFlow results are mean $\pm$ SE over 3 simulations.

\begin{table}[ht]
\centering
\caption{MNIST via PCA: test error (\%). Best baseline and best ArrowFlow config per setting. ArrowFlow best = [1024,128] 3-layer, 7-view ensemble. Iterations scale with training size (100/200/400).}
\label{tab:mnist-pca}
\small
\begin{tabular}{ll ccc}
\toprule
PCA & Method & $N{=}1{,}000$ & $N{=}5{,}000$ & $N{=}10{,}000$ \\
\midrule
\multirow{3}{*}{16}
  & Best baseline (SVM/MLP) & \textbf{11.2} & \textbf{6.9} & \textbf{6.0} \\
  & ArrowFlow best          & 15.9 $\pm$ 0.3 & 12.2 $\pm$ 0.2 & 9.8 $\pm$ 0.1 \\
  & Gap                     & 4.7pp & 5.3pp & 3.8pp \\
\midrule
\multirow{3}{*}{32}
  & Best baseline (MLP/SVM) & \textbf{10.5} & \textbf{5.7} & \textbf{4.6} \\
  & ArrowFlow best          & 15.6 $\pm$ 0.2 & 11.4 $\pm$ 0.0 & 9.3 $\pm$ 0.1 \\
  & Gap                     & 5.1pp & 5.7pp & 4.7pp \\
\midrule
\multirow{3}{*}{64}
  & Best baseline (SVM/MLP) & \textbf{10.8} & \textbf{5.8} & \textbf{4.2} \\
  & ArrowFlow best          & 15.2 $\pm$ 0.2 & 11.2 $\pm$ 0.1 & 9.1 $\pm$ 0.0 \\
  & Gap                     & 4.4pp & 5.4pp & 4.9pp \\
\bottomrule
\end{tabular}
\end{table}

\paragraph{Main result.} ArrowFlow achieves \textbf{9.1\% error} on MNIST at PCA=64 with 10{,}000 training samples---using only ordinal comparisons, no floating-point arithmetic in the core layers. This is roughly $2\times$ the error of the best baseline (MLP at 4.2\%), with a consistent 4--5 percentage point gap across all settings. The gap is the price of discarding magnitude information via argsort.

\paragraph{Scaling behaviour.} ArrowFlow benefits strongly from both capacity and data:

\begin{table}[ht]
\centering
\caption{ArrowFlow architecture scaling on MNIST (PCA=64, $N$=10{,}000). Error (\%) $\pm$ SE.}
\label{tab:mnist-scaling}
\small
\begin{tabular}{lcrr}
\toprule
Architecture & Views & Error (\%) & Params (filters) \\
\midrule
{[128]} 2L     & 1 & 18.7 $\pm$ 0.4 & 128 \\
{[128]} 2L     & 7 & 13.3 $\pm$ 0.1 & $7 \times 128$ \\
{[256]} 2L     & 7 & 12.5 $\pm$ 0.1 & $7 \times 256$ \\
{[512]} 2L     & 7 & 12.6 $\pm$ 0.1 & $7 \times 512$ \\
{[256,128]} 3L & 7 & 11.0 $\pm$ 0.1 & $7 \times 384$ \\
{[512,64]} 3L  & 7 & 10.2 $\pm$ 0.1 & $7 \times 576$ \\
{[1024,64]} 3L & 7 & 9.4 $\pm$ 0.1  & $7 \times 1{,}088$ \\
{[1024,128]} 3L& 7 & \textbf{9.1 $\pm$ 0.0} & $7 \times 1{,}152$ \\
\bottomrule
\end{tabular}
\end{table}

Three clear trends emerge: (1)~\textbf{Multi-view ensemble} cuts error nearly in half (18.7\% $\to$ 13.3\% from 1 to 7 views), confirming the multi-view result from Section~\ref{sec:ensemble}. (2)~\textbf{Depth helps}: 3-layer architectures consistently outperform 2-layer at matched width (e.g., [256,128] 3L at 11.0\% vs.\ [256] 2L at 12.5\%). (3)~\textbf{Width helps}: scaling from 256 to 1024 first-layer filters reduces error from 11.0\% to 9.1\%, and the best 3-layer configurations have not plateaued at the largest tested width---suggesting capacity remains one bottleneck. (Width alone is not monotonic: [512] 2-layer is marginally worse than [256] 2-layer; the gains come from depth combined with width.)

\paragraph{PCA dimension has limited effect on ArrowFlow.} Baselines benefit substantially from higher PCA dimensions (MLP: 6.3\% $\to$ 4.6\% $\to$ 4.2\% across PCA 16/32/64 at $N$=10K), but ArrowFlow's best error changes only slightly (9.8\% $\to$ 9.3\% $\to$ 9.1\%). This is consistent with the argsort bottleneck: once features are converted to ranks, additional PCA dimensions bring diminishing marginal ordinal structure.

\paragraph{Honest assessment.} ArrowFlow does not match gradient-trained classifiers on MNIST. The $\sim$5pp gap is real and persistent. However, 9.1\% error from a \emph{pure comparison-based} classifier---no multiply-accumulate operations, no floating-point parameters---is a meaningful existence proof. The strong scaling with width and depth suggests that limited capacity is \emph{a} current bottleneck (the curve has not plateaued); we cannot conclude it is the \emph{only} one, as the learning rule may also limit attainable accuracy. This motivates the vectorization work discussed in Section~\ref{sec:future}: faster forward passes would enable exploring architectures beyond the current computational limits.

\subsection{The Polynomial Trade-Off: A Unifying Finding}
\label{sec:tradeoff}

Across all experiments, a single parameter---\textbf{polynomial degree}---acts as a master switch between accuracy and robustness:

\begin{table}[ht]
\centering
\caption{The polynomial trade-off. Each cell shows whether ArrowFlow has a structural advantage ($+$), disadvantage ($-$), or is neutral ($\circ$) relative to baselines.}
\label{tab:tradeoff}
\begin{tabular}{lcc}
\toprule
Property & pol\_deg=1 & pol\_deg$>$1 \\
\midrule
Clean accuracy          & $-$ (magnitude lost)        & $+$ (3$\times$ on low-dim) \\
Noise robustness        & $+$ (8--28\% less degrad.)  & $-$ (noise amplified) \\
Rank-only (rank transform)& $+$ ($<$0.5pp cost)         & $-$ (up to 7pp cost) \\
Missing features        & $+$ (beats BL on Iris)      & $-$ (poly amplifies zeros) \\
Batch-effect invariance & $+$ (perfect, monotone)     & $-$ (poly breaks ranks) \\
Sample efficiency       & $\circ$                     & $+$ (richer features) \\
\bottomrule
\end{tabular}
\end{table}

This trade-off is \textbf{architecturally unique to ArrowFlow}. No other architecture faces a preprocessing step that simultaneously improves clean accuracy and destroys robustness. It suggests a practical mode-selection strategy:
\begin{itemize}[leftmargin=2em]
  \item \textbf{Robustness mode} ($\mathrm{pol\_deg} = 1$): for noisy, privacy-constrained, or incomplete data.
  \item \textbf{Accuracy mode} ($\mathrm{pol\_deg} > 1$): for clean, complete data where maximum accuracy is the goal.
\end{itemize}

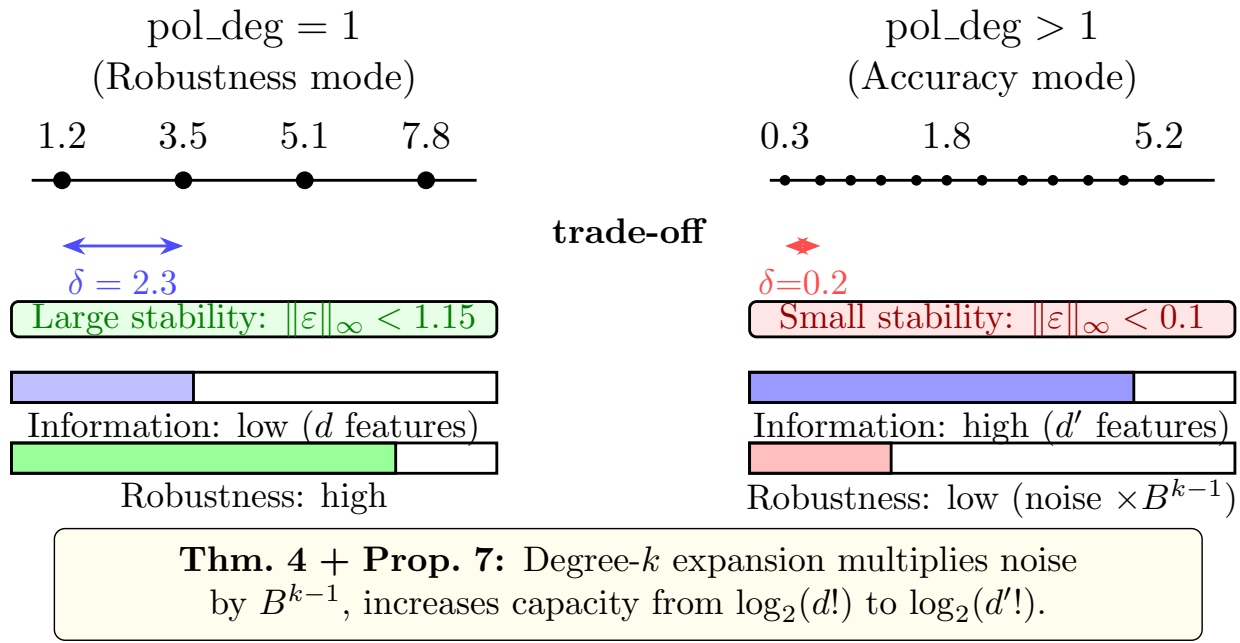
\begin{figure}[t]
\centering
\resizebox{\textwidth}{!}{%
\begin{tikzpicture}[
    >=Stealth,
    lbl/.style={font=\small},
]
\begin{scope}[shift={(0, 0)}]
\node[font=\normalsize\bfseries] at (2.2, 4.0) {$\mathrm{pol\_deg} = 1$};
\node[font=\small] at (2.2, 3.5) {(Robustness mode)};
\draw[thick] (0, 2.5) -- (4.4, 2.5);
\foreach \x/\v in {0.3/1.2, 1.5/3.5, 2.7/5.1, 3.9/7.8} {
    \fill (\x, 2.5) circle (2.5pt);
    \node[lbl, above=5pt] at (\x, 2.5) {\v};
}
\draw[<->, thick, blue!70] (0.3, 1.85) -- (1.5, 1.85);
\node[lbl, blue!70, below=1pt, font=\footnotesize] at (0.9, 1.8) {$\delta = 2.3$};
\draw[thick, fill=green!10, rounded corners=2pt] (-0.2, 0.95) rectangle (4.6, 1.3);
\node[font=\footnotesize, green!50!black] at (2.2, 1.13) {Large stability: $\|\varepsilon\|_\infty < 1.15$};
\draw[thick, fill=blue!25] (-0.2, 0.3) rectangle (1.6, 0.6);
\draw[thick] (-0.2, 0.3) rectangle (4.6, 0.6);
\node[lbl, font=\footnotesize] at (2.2, 0.05) {Information: low ($d$ features)};
\draw[thick, fill=green!40] (-0.2, -0.4) rectangle (3.6, -0.1);
\draw[thick] (-0.2, -0.4) rectangle (4.6, -0.1);
\node[lbl, font=\footnotesize] at (2.2, -0.65) {Robustness: high};
\end{scope}
\draw[<->, ultra thick, black!50] (5.3, 2.0) -- (6.5, 2.0);
\node[font=\footnotesize\bfseries, fill=white, inner sep=3pt] at (5.9, 2.0) {trade-off};
\begin{scope}[shift={(7.3, 0)}]
\node[font=\normalsize\bfseries] at (2.2, 4.0) {$\mathrm{pol\_deg} > 1$};
\node[font=\small] at (2.2, 3.5) {(Accuracy mode)};
\draw[thick] (0, 2.5) -- (4.4, 2.5);
\foreach \x in {0.15, 0.5, 0.8, 1.1, 1.45, 1.75, 2.1, 2.5, 2.8, 3.15, 3.5, 3.85} {
    \fill (\x, 2.5) circle (1.5pt);
}
\node[lbl, above=5pt] at (0.15, 2.5) {0.3};
\node[lbl, above=5pt] at (1.75, 2.5) {1.8};
\node[lbl, above=5pt] at (3.85, 2.5) {5.2};
\draw[<->, thick, red!70] (0.15, 1.85) -- (0.5, 1.85);
\node[lbl, red!70, below=1pt, font=\footnotesize] at (0.33, 1.8) {$\delta{=}0.2$};
\draw[thick, fill=red!10, rounded corners=2pt] (-0.2, 0.95) rectangle (4.6, 1.3);
\node[font=\footnotesize, red!60!black] at (2.2, 1.13) {Small stability: $\|\varepsilon\|_\infty < 0.1$};
\draw[thick, fill=blue!40] (-0.2, 0.3) rectangle (3.6, 0.6);
\draw[thick] (-0.2, 0.3) rectangle (4.6, 0.6);
\node[lbl, font=\footnotesize] at (2.2, 0.05) {Richer pre-projection geometry};
\draw[thick, fill=red!25] (-0.2, -0.4) rectangle (1.2, -0.1);
\draw[thick] (-0.2, -0.4) rectangle (4.6, -0.1);
\node[lbl, font=\footnotesize] at (2.2, -0.65) {Robustness: low (noise $\times B^{k-1}$)};
\end{scope}
\node[draw, rounded corners=3pt, fill=yellow!8, font=\footnotesize, text width=11cm, align=center, inner sep=5pt] at (5.9, -2.1) {
\textbf{Thm.~\ref{thm:stability} + Prop.~\ref{prop:poly-noise}:} Degree-$k$ expansion multiplies noise by $B^{k-1}$; it reshapes the geometry of the fixed-length-$e$ ordinal code rather than enlarging its $\log_2(e!)$ capacity.
};
\end{tikzpicture}%
}
\caption{\textbf{The polynomial trade-off.} At degree 1 (left), features are well-separated: large gaps yield a wide stability region, but limited capacity. At degree $>1$ (right), cross-terms create small gaps: higher capacity but noise amplified by $B^{k-1}$, shrinking stability. This parameter is a master switch between robustness and accuracy.}
\label{fig:tradeoff}
\end{figure}

\section{Discussion}
\label{sec:discussion}

\subsection{What ArrowFlow Teaches Us}

The experiments tell a consistent story: ArrowFlow is competitive on clean accuracy, measurably superior in robustness scenarios (noise, privacy, missing data, batch effects), and limited by the information bottleneck of ordinal encoding. The deeper lesson is that competitive machine learning is possible without any floating-point parameters in the core computation. This is a proof of concept for a broader class of \emph{combinatorial learning systems} where the learned representation is a discrete structure (permutation, partition, matching) rather than a real-valued tensor. The existence proof matters: if permutation-based filters can compete with gradient-trained networks on standard benchmarks, what other combinatorial primitives---matchings, lattice orderings, graph homomorphisms---might serve as computational substrates for learning?

The architecture reveals a productive duality: the same properties that Arrow's theorem forbids in fair social choice (context dependence, dictatorship) are precisely what enable representational power in learning. This is more than a loose analogy: the \emph{learning rule} of each layer literally performs rank aggregation over the inputs a filter accumulates, and Arrow's impossibility theorem explains why that aggregation cannot be simultaneously Pareto-respecting, context-free (IIA), and non-dictatorial (Section~\ref{sec:social-choice-theory}). ArrowFlow's design exploits each violation as an inductive bias. The forward pass, by contrast, satisfies pairwise IIA; its nonlinearity comes from the cone geometry of argsort, not from a fairness violation.

\subsection{Connections to Other Paradigms}

\paragraph{Self-Organizing Maps.}
Filter updates resemble Kohonen's competitive learning \citep{kohonen1990self}---the closest prototype moves toward the input. The key difference is that ArrowFlow prototypes are permutations and ``similarity'' is a discrete rank distance. Both learn without gradients with winner-take-all dynamics. SOMs preserve topological relationships in a 2D grid; ArrowFlow preserves hierarchical composition via layer stacking.

\paragraph{Nearest-Neighbor Classification.}
The output sort layer performs 1-NN classification in permutation distance space. Hidden layers act as a learned nonlinear embedding into a space where class-conditional distances are more separable. The 2.5--12.5$\times$ improvement over kNN on the same encoded features (Table~\ref{tab:knn}) confirms that this embedding is genuinely useful---ArrowFlow does not merely store templates but learns discriminative ordinal representations.

\paragraph{Kernel Methods and the Mallows Kernel.}
The projection-plus-argsort encoding can be viewed as a random feature map for an implicit kernel on permutations, related to the Mallows model \citep{mallows1957nonnull}. The Mallows kernel $K(\pi, \sigma) = \exp(-\lambda \cdot d(\pi, \sigma))$ defines a probability distribution over permutations centered on a modal ranking, parameterized by a dispersion $\lambda$. ArrowFlow's filters can be seen as modal rankings, and the footrule distance plays the role of the Mallows dispersion. The multi-view ensemble then corresponds to a mixture of Mallows models, each defined by a different projection.

\paragraph{Rank Aggregation.}
Dwork et al.\ \citep{dwork2001rank} showed that combining ranked lists from multiple search engines via median-based methods on Kendall tau distance yields more robust rankings than any single source. ArrowFlow's multi-view ensemble implements a classification-oriented variant of the same principle: each view produces a ranked list of class distances, and majority vote aggregates these into a final prediction. The connection to Condorcet's jury theorem is direct---both systems improve through the wisdom of crowds.

\paragraph{Quantization and Discrete Representations.}
ArrowFlow's argsort encoding is an extreme form of quantization: from continuous features to ordinal positions. The broader quantization literature \citep{jacob2018quantization} shows that reduced-precision representations can improve robustness and computational efficiency. ArrowFlow pushes this idea to its logical endpoint---the representations are not merely low-precision but fundamentally \emph{combinatorial}. The polynomial degree trade-off discovered in Section~\ref{sec:tradeoff} may reflect a general principle: more aggressive discretization increases robustness but reduces information content.

\paragraph{The Forward-Forward Algorithm.}
Hinton's Forward-Forward algorithm \citep{hinton2022forward} replaces backpropagation with a local learning rule where each layer independently learns to distinguish ``positive'' from ``negative'' data. ArrowFlow's permutation-matrix accumulation shares this locality principle: each filter updates based on its own displacement evidence from the training signal, without requiring a global gradient computation. Both approaches demonstrate that alternatives to backpropagation can achieve competitive performance, though through very different mechanisms.

\subsection{Energy Analysis: The Case for Integer-Ordinal Hardware}
\label{sec:energy}

The preceding discussion frames ArrowFlow as an alternative computational paradigm. But \emph{why} would anyone prefer integer comparisons over floating-point multiply-accumulates? The answer is energy: ArrowFlow's core sort layers use \emph{no floating-point arithmetic}. Every operation---displacement computation, distance accumulation, filter updates, output ranking---is an integer comparison, subtraction, absolute value, or addition. This section quantifies the energy implications by comparing ArrowFlow's operation profile against a standard MLP on a per-layer and per-inference basis.

\paragraph{Operation counting.}
Consider a sort layer with $N$ filters over a vocabulary of size $V$. The forward pass for one data point performs:

\begin{enumerate}[leftmargin=2em]
  \item \textbf{Index table construction}: Build a lookup from input items to positions. With a pre-allocated index array, this is $V$ integer writes.
  \item \textbf{Displacement computation}: For each of $N$ filters, look up the input position of each of $V$ items (one integer table read), compute the signed displacement (one integer subtraction), take the absolute value, and add to a running sum. Total: $\sim 3NV$ integer operations.
  \item \textbf{Output ranking}: Argsort the $N$ distance values, requiring $O(N \log N)$ integer comparisons.
\end{enumerate}

The total is approximately $3NV + N\log_2 N$ integer operations per layer per data point. No multiply-accumulate (MAC) operations appear anywhere.

A comparable MLP layer ($V$ inputs, $N$ outputs) performs $NV$ floating-point MACs (each a multiply plus an add), $N$ bias additions, and $N$ activation evaluations: approximately $NV$ FP32 MACs total.

\paragraph{Energy per operation.}
Horowitz's widely cited ISSCC 2014 analysis \citep{horowitz2014energy} established the energy cost of arithmetic operations at 45nm CMOS, reproduced by Sze et al.\ \citep{sze2017efficient} in the context of DNN accelerator design:

\begin{table}[ht]
\centering
\caption{Energy per operation at 45nm CMOS \citep{horowitz2014energy}. ArrowFlow uses only operations in the top group; standard neural networks use operations in both groups.}
\label{tab:energy-ops}
\small
\begin{tabular}{lrl}
\toprule
Operation & Energy (pJ) & Used by \\
\midrule
8-bit integer ADD        & 0.03  & ArrowFlow \\
32-bit integer ADD/CMP   & 0.1   & ArrowFlow \\
32-bit integer MUL       & 3.1   & --- \\
\midrule
32-bit FP ADD            & 0.9   & MLP \\
32-bit FP MUL            & 3.7   & MLP \\
32-bit FP MAC            & 4.6   & MLP \\
\midrule
32-bit SRAM read (8KB)   & 5     & both \\
32-bit DRAM read         & 640   & both \\
\bottomrule
\end{tabular}
\end{table}

A single FP32 MAC costs 4.6\,pJ; a single integer comparison or addition costs 0.1\,pJ---a \textbf{46$\times$ gap} at the arithmetic level. If ArrowFlow's permutation indices fit in 8 bits ($V \leq 256$, which covers all configurations in this paper), the relevant integer operations cost 0.03\,pJ, widening the gap to \textbf{$\sim$150$\times$}.

\paragraph{Per-layer energy comparison.}
For a concrete comparison, consider ArrowFlow's \texttt{[128]} configuration on Digits ($V = 64$, $N = 128$) versus an MLP with a $64 \to 128$ hidden layer:

\begin{table}[ht]
\centering
\caption{Energy per layer, single data point. ArrowFlow \texttt{[128]} ($V{=}64$, $N{=}128$) vs.\ MLP ($64{\to}128$). All energies at 45nm CMOS.}
\label{tab:energy-layer}
\small
\begin{tabular}{lrrl}
\toprule
& Operations & Energy (pJ) & Op type \\
\midrule
\textbf{ArrowFlow sort layer} & & & \\
\quad Displacement (3NV) & 24,576  & 2,458 & INT32 @ 0.1\,pJ \\
\quad Argsort ($N\log_2 N$)& 896    & 90    & INT32 @ 0.1\,pJ \\
\quad \textit{Total} & 25,472  & \textbf{2,547} & \\
\midrule
\textbf{MLP layer} & & & \\
\quad Matrix multiply ($NV$ MACs) & 8,192 & 37,683 & FP32 @ 4.6\,pJ \\
\quad Bias + ReLU & 256 & 230 & FP32 \\
\quad \textit{Total} & 8,448 & \textbf{37,914} & \\
\midrule
\textbf{Ratio} & & \multicolumn{2}{l}{\textbf{14.9$\times$ in favor of ArrowFlow}} \\
\bottomrule
\end{tabular}
\end{table}

ArrowFlow performs $\sim$3$\times$ more operations but each costs $\sim$46$\times$ less energy, yielding a net \textbf{$\sim$15$\times$ arithmetic energy advantage} per layer.

\paragraph{Memory access energy.}
In practice, memory access dominates arithmetic cost in neural network inference \citep{sze2017efficient}. ArrowFlow's parameters are more compact: each filter is a permutation of $V$ items stored as 8-bit indices, requiring $NV$ bytes per layer. An MLP weight matrix requires $4NV$ bytes (FP32). For $N = 128$, $V = 64$:
\begin{itemize}[leftmargin=2em]
  \item ArrowFlow filter bank: $128 \times 64 = 8{,}192$ bytes (fits in 8KB SRAM)
  \item MLP weight matrix: $128 \times 64 \times 4 = 32{,}768$ bytes (requires 32KB)
\end{itemize}
When parameters fit in SRAM (5\,pJ per 32-bit read), the total memory energy for reading all weights once is 40,960\,pJ for the MLP vs.\ $\sim$10,240\,pJ for ArrowFlow's 8-bit indices---a 4$\times$ reduction. If parameters spill to DRAM (640\,pJ per read), the memory advantage becomes the dominant factor.

\paragraph{Full inference comparison.}
Table~\ref{tab:energy-full} compares full-inference energy for ArrowFlow's best Digits configuration (7-view ensemble, \texttt{[256]} layers, $V = 64$) against a single MLP ($64 \to 128 \to 10$):

\begin{table}[ht]
\centering
\caption{Full inference energy comparison on Digits. ArrowFlow: 7-view \texttt{[256]}, $V{=}64$. MLP: $64{\to}128{\to}10$. Arithmetic energy only (45nm CMOS).}
\label{tab:energy-full}
\small
\begin{tabular}{lrrr}
\toprule
Component & ArrowFlow (pJ) & MLP (pJ) \\
\midrule
Hidden, dist.+sort ($\times$7) & $7 (49{,}152{+}2{,}048)(0.1) = 35{,}840$ & $8{,}192 (4.6) = 37{,}683$ \\
Output, dist.+$\arg\min$ ($\times$7) & $7 \times 7{,}690 (0.1) = 5{,}383$ & $1{,}280 (4.6) = 5{,}888$ \\
Majority vote & $\sim$70  & --- \\
\midrule
\textbf{Total arithmetic} & \textbf{41,293} & \textbf{43,571} \\
\textbf{Ratio} & \multicolumn{2}{c}{$\approx 1.05\times$ in favor of ArrowFlow} \\
\bottomrule
\end{tabular}
\end{table}

Here the output layer compares the length-$256$ hidden ranking to $C=10$ class filters ($3CV=7{,}680$ displacement operations per view) and predicts by $\arg\min$ rather than a full sort; the hidden layer includes its $N\log_2 N$ ranking cost. The 7-view ensemble erodes nearly all of the per-operation advantage, reducing the ratio to only $\approx$1.05$\times$ at the full-inference arithmetic level. However, three factors make the real-world advantage substantially larger:
\begin{enumerate}[leftmargin=2em]
  \item \textbf{Memory bandwidth}: ArrowFlow's 8-bit parameters require 4$\times$ less memory bandwidth than FP32 weights. On memory-bound hardware (edge devices, neuromorphic chips), this translates directly to energy savings.
  \item \textbf{Parallelism}: The 7 views are embarrassingly parallel and operate on compact integer data, making them highly amenable to SIMD or multi-core execution with minimal inter-core communication.
  \item \textbf{No multiply hardware}: ArrowFlow requires only comparators, adders, and index tables---no floating-point multiply units. On an ASIC designed for ArrowFlow's operation profile, the die area and static power savings from eliminating FP multipliers would be substantial.
\end{enumerate}

\paragraph{Neuromorphic hardware alignment.}
ArrowFlow's computation maps naturally to neuromorphic architectures:

\begin{itemize}[leftmargin=2em]
  \item \textbf{Rank-order coding.} Thorpe's rank-order coding \citep{thorpe1998rapid, thorpe2001spike} encodes stimulus strength via the \emph{order} of neural spikes rather than firing rates---precisely the representation ArrowFlow operates on. A ranking filter in ArrowFlow corresponds to a stored spike-order template, and the footrule distance corresponds to the total spike-timing displacement. Imam and Cleland \citep{imam2020rapid} demonstrated that temporal-order-based classification on Intel Loihi achieves 1000$\times$ better energy efficiency than conventional CPU solutions.

  \item \textbf{Operation mapping.} ArrowFlow's displacement computation (integer subtraction + absolute value) maps to spike-timing difference circuits. The argsort operation maps to a sorting network---a fixed-topology circuit of compare-and-swap units. Batcher's bitonic sorting networks \citep{batcher1968sorting} require $O(N \log^2 N)$ comparators with a completely data-independent wiring pattern, ideal for fixed silicon. Each compare-and-swap costs ${\sim}0.15$\,pJ at 45nm, so sorting $N = 128$ distances requires ${\sim}100$\,pJ total---negligible compared to the displacement computation.

  \item \textbf{Event-driven sparsity.} Neuromorphic chips like Intel Loihi \citep{davies2018loihi} (23.6\,pJ/op at 14nm) and IBM TrueNorth \citep{merolla2014million} (26\,pJ/event at 28nm) achieve low energy through event-driven computation: neurons consume energy only when they spike. ArrowFlow's winner-take-all dynamics (Section~\ref{sec:arrow}, ND violation) create natural sparsity---most filters have large distances and only the closest few matter. An event-driven implementation could skip distance computation once a filter's partial distance exceeds the current minimum, reducing average-case energy well below the worst-case $3NV$ operations.
\end{itemize}

\paragraph{Scope and caveats.}
This analysis is deliberately narrow: it counts \emph{arithmetic} energy in the core sort layers only. Four caveats bound any system-level claim. (i)~The \emph{encoder} is not free---polynomial expansion, standardization, and especially the random/LDA projection use floating-point multiply--adds unless the projections are precomputed, quantized, or replaced by sparse/sign projections; our accounting excludes these. (ii)~A fair deployment baseline is often an \emph{INT8-quantized} MLP, not FP32, which narrows the per-operation gap substantially. (iii)~Classification may not require sorting all $N$ distances---nearest-filter or top-$k$ selection would lower ArrowFlow's cost further, but the comparison should hold the operation set fixed. (iv)~Memory access typically dominates arithmetic, so the realized advantage depends on the memory hierarchy and the degree of view parallelism. We therefore present the figures as an arithmetic-level indication, not a measured system-level speedup.

\paragraph{Summary.}
At the arithmetic-operation level, ArrowFlow's core layers replace FP32 MACs with integer comparisons and additions, a 15--150$\times$ per-operation gap that narrows at the full-inference level (to $\approx$1.05$\times$ for the seven-view Digits configuration). The system-level advantage will depend on the encoder, memory hierarchy, quantization baseline, and view parallelism. Even so, the combination of compact 8-bit parameters, no floating-point multiply units, and natural alignment with neuromorphic spike-order coding makes a custom ASIC or neuromorphic implementation of ArrowFlow's sort layers---comparators, integer adders, and index tables---a plausible route to low-energy inference, which we leave to future hardware work.

\subsection{The Argsort Bottleneck: A Fundamental Limit}

The argsort encoding is simultaneously ArrowFlow's greatest strength and its most fundamental limitation. By converting real-valued features to orderings, ArrowFlow gains scale invariance, noise robustness (Theorem~\ref{thm:stability}), and rank-only (magnitude-hiding) operation---but loses all magnitude information. Theorem~\ref{thm:capacity} quantifies this: the length-$e$ ordinal code has capacity $\log_2(e!) \approx e \log_2 e$ bits, finite and far below the infinite capacity of real-valued representations. Polynomial expansion reshapes the feature geometry feeding this fixed-length code rather than enlarging its capacity, and by Proposition~\ref{prop:poly-noise} it amplifies noise by a factor of $B^{k-1}$, weakening the stability bound. This tension---more information capacity vs.\ less noise robustness---is the formal basis of the polynomial trade-off documented in Section~\ref{sec:tradeoff}.

\subsection{Limitations}

\begin{itemize}[leftmargin=2em]
  \item \textbf{Computational cost:} Each forward pass is $O(NV)$ per data point. While the forward pass has been vectorized via batch distance computation, the backward pass remains sequential. This makes ArrowFlow approximately 10$\times$ slower than an equivalently-sized MLP per training iteration.
  \item \textbf{Information loss:} The argsort encoding discards magnitude information. Polynomial expansion mitigates but does not eliminate this, and introduces a robustness trade-off.
  \item \textbf{Scaling gap:} On larger datasets, ArrowFlow lags behind gradient-based methods. On MNIST (Section~\ref{sec:mnist-pca}), best ArrowFlow achieves 9.1\% vs.\ MLP's 4.2\%---a persistent $\sim$5pp gap. However, the gap narrows with wider architectures and has not plateaued, suggesting limited capacity is at least one bottleneck (though not necessarily the only one).
  \item \textbf{Natively ordinal data:} ArrowFlow's whole-permutation distance is too holistic when only local ordinal features are predictive (Sushi experiment, Section~\ref{sec:sushi}).
  \item \textbf{Rank-transform baseline:} On gene expression data (Section~\ref{sec:gene}), SVM on rank-transformed features achieves comparable or better batch-effect robustness than ArrowFlow native. The robustness advantage belongs to the ordinal representation principle, not to ArrowFlow specifically---though ArrowFlow provides this guarantee architecturally rather than as a preprocessing choice.
  \item \textbf{Hybrid architecture failure:} The tensor$\to$sort interface (Section~\ref{sec:hybrid}) destroys continuous representations via argsort discretization, preventing effective hybrid architectures.
  \item \textbf{Slow convergence:} The permutation-matrix accumulation makes conservative updates, requiring 200--500 iterations vs.\ $\sim$50 epochs for gradient-based methods.
\end{itemize}

\subsection{Future Directions}
\label{sec:future}

\begin{enumerate}[leftmargin=2em]
  \item \textbf{Batch vectorization:} The forward and backward passes process data points sequentially. Vectorizing via \texttt{scipy.cdist} or custom CUDA kernels could provide 10--50$\times$ speedup, making ArrowFlow practical for larger datasets.
  \item \textbf{Learned projections:} Replace random projection with end-to-end learned projections that minimize encoding loss while maintaining ensemble diversity. This could bridge the argsort bottleneck without abandoning the ordinal architecture.
  \item \textbf{Soft-sort encoding:} Using differentiable argsort approximations \citep{prillo2020softsort} to preserve partial magnitude information, potentially closing the gap on magnitude-sensitive datasets like Wine Quality.
  \item \textbf{Position-wise attention:} For natively ordinal data, a mechanism that weights individual item positions (rather than whole-permutation distances) could address the holistic-distance limitation exposed by the Sushi experiment.
  \item \textbf{Neuromorphic implementation:} ArrowFlow's integer-only arithmetic is 15--150$\times$ cheaper per operation than FP32 MACs (Section~\ref{sec:energy}). A custom ASIC implementing sort layers as comparator arrays and index tables could make inference dramatically cheaper than conventional networks, particularly on neuromorphic platforms \citep{davies2018loihi, merolla2014million} where spike-order coding is native.
  \item \textbf{Mode selection framework:} The polynomial trade-off (Section~\ref{sec:tradeoff}) suggests a practical deployment strategy: automatically select $\mathrm{pol\_deg}$=1 when the data is noisy, privacy-constrained, or incomplete; $\mathrm{pol\_deg}>1$ when clean accuracy is paramount.
  \item \textbf{Text and sequence domains:} The CountVectorizer nonzero encoding (converting word presence to native permutations) opens a path for applying ArrowFlow to text classification without the projection pipeline. Initial IMDB experiments are underway.
  \item \textbf{Broader combinatorial primitives:} Displacement-accumulation generalizes beyond permutations to other discrete structures---matchings, partial orders, lattice elements---potentially yielding a family of combinatorial learning architectures.
\end{enumerate}

\section{Conclusion}
\label{sec:conclusion}

ArrowFlow demonstrates that machine learning can operate effectively in the space of permutations. The architecture processes information through ranking filters---discrete permutations learned via motion accumulation rather than gradient descent---and achieves competitive classification accuracy through multi-view ensembles of diverse ordinal projections.

The key empirical findings, across 7 UCI datasets, MNIST, gene expression cancer classification, and preference data, with GridSearchCV-tuned baselines, are:
\begin{enumerate}[leftmargin=2em]
  \item ArrowFlow \textbf{matches or slightly edges every tuned baseline on Iris} (2.7\% vs.\ 3.3\%; a sub-sample margin, see Section~\ref{sec:experiments}) and is competitive on 5 of 7 UCI datasets, showing that zero-floating-point-parameter architectures can rival gradient-trained networks.
  \item The \textbf{polynomial degree acts as a master switch} between accuracy and robustness. At degree~1, ArrowFlow provides measurable noise robustness (8--28\% less degradation), near-zero-cost rank-only (magnitude-hiding) operation (+0.5pp under rank transforms), and superior missing-feature handling (lowest error on Iris at 50\% masking). At higher degrees, clean accuracy improves but perturbation amplification erases these advantages.
  \item On gene expression data, ordinal encoding provides \textbf{exact invariance to within-sample monotone batch effects}: SVM on raw values collapses from 0.6\% to 82.6\% under $\log$, $\sqrt{}$, or global scaling, while rank-based methods remain stable at 2.5\%. Under realistic per-gene batch effects ($\sigma \leq 0.5$), rank methods are unaffected while raw-input SVM error increases $28\times$. ArrowFlow embodies this ordinal robustness architecturally.
  \item On \textbf{MNIST via PCA}, ArrowFlow achieves 9.1\% error using only ordinal comparisons. It \textbf{scales strongly} with width and depth (18.7\%$\to$9.1\%), and the scaling curve has not plateaued, suggesting capacity is the bottleneck.
  \item The \textbf{multi-view ensemble} with diverse projections is the single most impactful component, reducing error by up to $\sim$3$\times$ (1.3--3.3$\times$ across datasets).
  \item \textbf{Freezing the output layer} consistently improves performance, a novel training strategy analogous to fixing classifier heads in transfer learning.
  \item ArrowFlow's \textbf{learning genuinely helps}: 2.5--12.5$\times$ improvement over kNN on the same encoded features proves that filter updates learn discriminative ordinal structure.
\end{enumerate}

Beyond empirics, the theoretical analysis (Section~\ref{sec:theory}) provides formal grounding: the argsort stability theorem (Theorem~\ref{thm:stability}) and its Gaussian corollary explain the noise robustness; the information capacity theorem (Theorem~\ref{thm:capacity}) quantifies the encoding bottleneck at $\log_2(e!)$ bits for the $e$-dimensional projected code; the polynomial noise amplification result (Proposition~\ref{prop:poly-noise}) explains the accuracy--robustness trade-off; and the accumulation consistency result (Proposition~\ref{prop:convergence}) guarantees that the learning rule converges to the mean-position ranking (the Borda aggregate of accumulated evidence).

The deeper contribution is conceptual: ArrowFlow shows that the violations of Arrow's social-choice axioms---context dependence, specialization, symmetry breaking---are not obstacles but \emph{mechanisms} for learning. Each ranking layer is a micro-democracy of filters whose deliberate ``unfairness'' creates nonlinearity, sparsity, and hierarchical composition. This reframes neural computation through the lens of combinatorics and voting theory, opening a new research direction at the intersection of social choice, rank-order coding, and representation learning.

ArrowFlow is an early entry in ordinal machine learning, not a finished one. The encoding bottleneck, the scaling gap, and the holistic-distance limitation are real constraints that bound the current system's practical applicability. But the fact that a system with \emph{zero floating-point parameters} in its core layers can reach 2.7\% error on Iris, 9.1\% on MNIST, and exact invariance to monotone batch effects on gene expression data---and that we can \emph{partly explain}, with proofs tied to the encoding and the learning rule, why its robustness properties hold---is, we believe, a foundation worth building on.

\bibliographystyle{plainnat}
\bibliography{references_arrowflow}

\end{document}